\DeclareMathOperator*{\argmin}{arg\,min}
\theoremstyle{plain}
\newtheorem{theorem}{Theorem}[section]
\theoremstyle{definition}
\newtheorem{definition}[theorem]{Definition}
\theoremstyle{remark}
\newtheorem{example}[theorem]{Example}
\icmltitlerunning{Automated Discovery of Functional Actual Causes in Complex Environments}
\newcommand{\algoname}[0]{Joint Optimization for Actual Cause Inference}
\newcommand{\algoacronym}[0]{JACI}
\begin{document}

\twocolumn[
\icmltitle{Automated Discovery of Functional Actual Causes in Complex Environments}
\icmlsetsymbol{equal}{*}

\begin{icmlauthorlist}
\icmlauthor{Caleb Chuck}{equal,ut}
\icmlauthor{
Sankaran Vaidyanathan}{equal,uma}
\icmlauthor{Stephen Giguere}{ut}
\icmlauthor{Amy Zhang}{ut}
\icmlauthor{David Jensen}{uma}
\icmlauthor{Scott Niekum}{uma}
\end{icmlauthorlist}

\icmlaffiliation{ut}{Department of Computer Science, University of Texas at Austin, Austin, Texas, USA}
\icmlaffiliation{uma}{College of Information and Computer Sciences, University of Massachusetts Amherst, Amherst, Massachusetts, USA}

\icmlcorrespondingauthor{Caleb Chuck}{calebc@cs.utexas.edu}
\icmlcorrespondingauthor{Sankaran Vaidyanathan}{sankaranv@cs.umass.edu}

\icmlkeywords{Actual Causality, Causal Inference, Reinforcement Learning, Model-Based RL, Normality, Context-Specific Independence}

\vskip 0.3in
]

\printAffiliationsAndNotice{\icmlEqualContribution} 


\begin{abstract}

Reinforcement learning (RL) algorithms often struggle to learn policies that generalize to novel situations due to issues such as causal confusion, overfitting to irrelevant factors, and failure to isolate control of state factors. These issues stem from a common source: a failure to accurately identify and exploit state-specific causal relationships in the environment. While some prior works in RL aim to identify these relationships explicitly, they rely on informal domain-specific heuristics such as spatial and temporal proximity.
\textit{Actual causality} offers a principled and general framework for determining the causes of particular events. However, existing definitions of actual cause often attribute causality to a large number of events, even if many of them rarely influence the outcome. Prior work on actual causality proposes \textit{normality} as a solution to this problem, but its existing implementations are challenging to scale to complex and continuous-valued RL environments. This paper introduces \textit{functional actual cause (FAC)}, a framework that uses \textit{context-specific independencies} in the environment to restrict the set of actual causes. We additionally introduce \textit{\algoname{} (\algoacronym{})}, an algorithm that learns from observational data to infer functional actual causes. We demonstrate empirically that FAC agrees with known results on a suite of examples from the actual causality literature, and \algoacronym{} identifies actual causes with significantly higher accuracy than existing heuristic methods in a set of complex, continuous-valued environments. 

\end{abstract}


\section{Introduction}

Desirable behavior in Reinforcement Learning (RL) settings is often comprised of complex sequences of context-dependent actions. Robust generalization requires an operational understanding of how complex patterns of states and actions interact to produce predictable outcomes. While some lines of research, such as representation learning~\cite{bengio2013representation} and dynamics modeling~\cite{polydoros2017survey}, aim to implicitly capture this complexity, other recent work has aimed to explicitly discover and leverage causal structure~\cite{scholkopf2021toward}.

However, many of these methods encode general relationships (how actions affect state in general) without explicitly considering the specific context (how a specific action affected a specific state). As a result, these methods can often struggle to provide significant benefit~\cite{dasgupta2019causal}. Recent work in RL has identified this problem with a variety of terms such as local causality \cite{pitis2020counterfactual}, interactions \citep{yang2023learning}, controllability \citep{seitzer2021causal}, and counterfactuals. However, they rely on domain-specific heuristics such as locality, proximity, and assumptions about object geometry. These approaches perform poorly in domains where such assumptions do not hold.

\textit{Actual causality} \citep{pearl2000book, halpern2016actual}, which formally describes the causes of particular events in a specific observed context, offers a principled and general solution to this problem. However, existing approaches in actual causality are challenging to scale effectively to high-dimensional and continuous-valued environments. This is because they are often overly permissive; an event qualifies as an actual cause if there is at least one counterfactual scenario where a different outcome could have occurred. For example, consider a robot trying to push a block across a table in a room filled with other objects. Every single object would be considered an actual cause, since any of them could have been placed in front of the block and prevented it from moving. 


\begin{figure*}
\centering     
\includegraphics[width=0.9\textwidth]{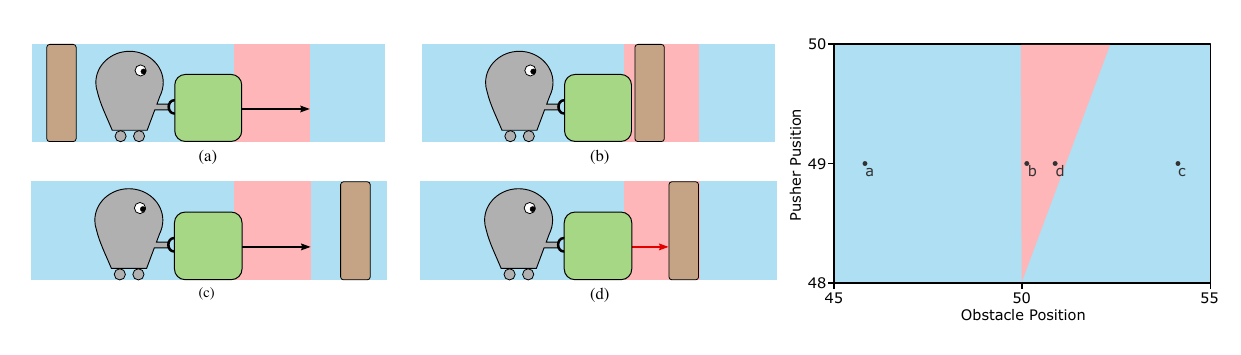}
\vspace{-0.5cm}
\caption{\textbf{(a)} \textbf{Invariant preimages} of the block position corresponding to a portion of the state space from Example \ref{block-pushing-example}. The block is at position 50 and the pusher attempts to move it to position 52. In states \textbf{(a)} and \textbf{(c)}, the obstacle has no impact on the block position. In state \textbf{(b)} the obstacle is directly in front of the block, and in state \textbf{(d)} it is one unit away and will obstruct the pusher. The blue region represents states where only the pusher can affect the block position---an \textit{invariant preimage} of the block position with respect to the pusher. When the observed state is in this region, the obstacle is not an actual cause of the block being pushed successfully. }
\label{fig:ivp-intro}
\vspace{-0.5cm}
\end{figure*}


This problem is broadly described in the actual causality literature as an issue of \textit{normality}~\cite{icard2017normality}, which describes the degree to which certain events should be considered as normal and others as rare. This has traditionally been represented using a rank ordering over states \cite{halpern2015graded}. However, manually specifying such an ordering is impractical in large and continuous domains with complex dynamics, and prior work does not provide guidance on how to obtain or learn this ordering. Overall, these approaches are infeasible in practical RL settings.

In this paper, we introduce \textit{functional actual cause (FAC)}, a framework that extends established definitions of actual cause by accounting for normality through \textit{context-specific independencies} that hold in different states. In the previous example, the robot was in a state where the final block position would have remained unaffected by the majority of rearrangements of the distant objects, even if they dramatically changed in position. To formalize this intuition, FAC learns sets of states where the outcome is partially independent of the variables whose values are not actual causes. These sets are referred to as \textit{invariant preimages (IVPs)} of the outcome with respect to the actual causes. By using the IVPs to prune out distracting events, FAC identifies a more focused set of actual causes that can be used for downstream tasks such as training and analyzing AI systems.

To learn IVPs and infer actual causes in high-dimensional environments, we introduce \textit{\algoname{} (\algoacronym{})}. \algoacronym{} learns from observational data to identify if the outcome at a given state is determined by only a specific subset of variables, and thereby infers functional actual causes of the outcome. We empirically verify that FAC maintains intuitive and agreed-upon verdicts in a set of examples from the actual causality literature. We additionally compare \algoacronym{} with local causality approaches in a set of complex and continuous-valued domains inspired by RL settings---\textit{Random Vectors}, \textit{Mini-Breakout}, and \textit{2D Robot Pushing}, and demonstrate significantly higher accuracy in recovering actual causes. Our approach generalizes local causality methods in the RL literature, and introduces actual causality to machine learning and RL settings.

\section{Illustrative Example}
\label{sec:example}
We use the following example, described in Figure \ref{fig:ivp-intro}, to illustrate how existing approaches fail to identify context-specific independencies and permit more actual causes. Appendix~\ref{addn_ex} includes formal analyses of additional examples.

\begin{example}[1-D Block Pushing]
\label{block-pushing-example}
    An agent, a block, and an obstacle are present in a 1-D world, and each occupies a position in a continuous range $[0, 100]$. The agent will always move 2 units to the right, and if the block is within this range, it will be pushed until the agent stops or it encounters an obstacle. Suppose the pusher is at position 49 with the block in front of it at position 50 and the obstacle behind it at position 20. Since the block is unobstructed, the pusher successfully pushes it forward to position 52.
\end{example}

In this example, existing definitions of actual cause would result in similar reasoning for both the pusher and the obstacle. The pusher would be an actual cause since the observed outcome (the block moving forward) would not have happened if the agent was in any other position. The obstacle position would also be an actual cause since the observed outcome would not have happened if the obstacle had been placed right in front of the block. This reasoning for the obstacle applies whether the obstacle is 30 meters away or 3 million meters away. If there are many such obstacles in the environment then \textit{all} of them will be actual causes. 

While these conclusions are technically correct, they can be impractical for downstream tasks in RL, since the field of view for the agent becomes prohibitively large. If everything in the environment is always an actual cause, it is difficult for an agent to maintain robustness and perform well. In Figure \ref{fig:ivp-intro}, the final block position only depends on the obstacle position when the current state is in the red region, and is independent of the obstacle position in all other states. We denote the red region as an \textit{invariant preimage} of the block position with respect to the the pusher position, where the outcome (block position) is invariant to all the remaining inputs (obstacle position in this case).

Similar ideas have been described in the actual causality literature as \textit{normality}, which accounts for when rare counterfactuals should be used to identify actual causes. A classic example of this is the \textit{Queen of England problem}: if one has a bed of flowers and neglects to water them, and the flowers withered, is the Queen of England an actual cause of the flowers' withering since she \textit{could} have watered them and did not. While this is technically correct, it is inconsistent with nearly all human judgments. The obstacle in Example \ref{block-pushing-example} is isomorphic to the Queen of England; the outcome is invariant to the obstacle position in a vast number of possible states even though the obstacle \textit{could} block the push, but it should not realistically be an actual cause in a context where could not actually block the push. The functional actual cause definition that we propose formalizes this intuition.

\section{Background and Related Work}

Our definition of actual cause builds on definitions introduced by Halpern and Pearl \citep{halpern2005causes, halpern2015modified}, as well as prior definitions of causal sufficiency and necessity \citep{beckers2021actual}. The definitions themselves are a subject of active and ongoing research, including extensions to modeling how humans make causal judgements~\citep{gerstenberg2021counterfactual}, first-order logic in temporal domains \citep{batusov2018situation}, and research on automatically verifying solutions to these definitions \citep{ibrahim2020checking}. We reference prior work on inferring actual causes and local causality in Section~\ref{sec:experiments}, and general applications of actual causality in Appendix~\ref{related-work}.

This paper is concerned with the problem of determining whether an event $\mathbf{X} = \mathbf{x}$ is an actual cause of an observed outcome $\mathbf{Y} = \mathbf{y}$ in a particular observed state, given an acyclic structural causal model $M$ with endogenous variables $\mathbf{V}$ and exogenous variables $\mathbf{U}$. We denote $\mathbf{S} = \mathbf{V} - \mathbf{Y}$ as the set of \textit{state variables}, which is the set of all endogenous variables excluding outcome variable $\mathbf{Y}$, and an assignment to these variables $\mathbf{S} = \mathbf{s}$ as a \textit{state}. Let $\mathcal{R}(\cdot)$ be a function that associates every variable or set of variables $\mathbf{X}$ with a nonempty set $\mathcal{R}(\mathbf{X})$ of possible values. 

For a given state $\mathbf s$, the assignment to the variables $\mathbf{X}$ is represented as $\mathbf{s_X}$. The function in $M$ that returns the value of $\mathbf{Y}$ is given by $\mathbf{Y} = f_\mathbf{Y}(\mathbf{s}, \mathbf{u})$, where in an abuse of notation we let $f_\mathbf{Y}$ take in a full state as an argument and assume it ignores all state variables that are descendants of $\mathbf{Y}$. The Boolean statement $(M,\mathbf{u}) \vDash [\mathbf{X} \leftarrow \mathbf{x}]\mathbf{Y} = \mathbf{y}$ is true when under the model $M$, the outcome $\mathbf{Y} = \mathbf{y}$ occurs whenever the exogenous variables take on values $\mathbf{u}$ and the intervention $\mathbf{X} \leftarrow \mathbf{x}$ is applied. 

Using this notation, we build a definition of actual cause from the following general framework. Existing definitions replace the following conditions with a specific interpretation of necessity, sufficiency, and minimality. Details on each of these conditions, including the witness set and prior definitions of actual causation, can be found in \citet{halpern2016actual} and \citet{beckers2021actual}.

\begin{definition}[General Framework for Actual Causation]
\label{general-ac-defn}

For a given model $M$ and observed state $\mathbf{s^*}$, the event $\mathbf{X} = \mathbf{x}$ is an \textit{actual cause} of the outcome $\mathbf{Y} = \mathbf{y}$ for a given context $\mathbf{U} = \mathbf{u}$ if:

\end{definition}

\begin{itemize}[leftmargin=1em]
    \item[] \label{ac1:general}\textbf{AC1} (\textit{Factual}): $\mathbf{X} = \mathbf{x}$ and outcome $\mathbf{Y} = \mathbf{y}$ actually happened: $\mathbf{s}_{\mathbf{X}}^* = \mathbf{x}$ and $(M, \mathbf{u}) \vDash (\mathbf{Y} = \mathbf{y})$.

    \item[] \label{ac2a:general}\textbf{AC2(a)} (\textit{Necessity}): There exists a counterfactual assignment $\mathbf{X} = \mathbf{x^\prime}$ and witness set $\mathbf{W}$ such that $(\mathbf{X} = \mathbf{x^\prime}, \mathbf{W} = \mathbf{s^*_W})$ is not sufficient for $\mathbf{Y} = \mathbf{y}$.

    \item[] \label{ac2b:general}\textbf{AC2(b)} (\textit{Sufficiency}): $(\mathbf{X} = \mathbf{x}, \mathbf{W} = \mathbf{s_W^*})$ is sufficient for $\mathbf{Y} = \mathbf{y}$.

    \item[] \label{ac3:general}\textbf{AC3} (\textit{Minimality}): There is no strict subset $\mathbf{Z} \subset \mathbf{X}$ for which the assignment $\mathbf{Z} = \mathbf{z}$ satisfies AC1, AC2(a) and AC2(b), where $\mathbf{z}$ refers to the values of variables $\mathbf{Z}$ in $\mathbf{x}$.
\end{itemize}

\section{Functional Actual Cause}
\label{sec:fac-theory}

In this section, we introduce a definition of actual cause that incorporates the intuitions of normality and context-specific independence described in Example \ref{block-pushing-example}. In the example, there are two partitions of the state space: one where only the pusher position is necessary to predict block position, and one where both the obstacle and block positions are necessary. We establish criteria for defining these partitions and use them to extend the definition of actual cause. 


\vspace{-0.3cm}
\subsection{Invariant Preimage}

We define an \textit{invariant preimage} $\mbox{\textit{IVP}}(\mathbf{X}; \mathbf{Y})$ to be a partition of the state space in which the outcome $\mathbf{Y}$ is invariant to all state variables that are not in $\mathbf{X}$. For any state $\mathbf{s}$ in this set, the outcome is not affected by any change to the values of the remaining variables $\mathbf{C} = \mathbf{S} - \mathbf{X}$ that results in another state $\mathbf{s}^\prime$ in the same partition.

\begin{definition}[Invariant Preimage]
    \label{defn-ivp-invariance}
    A partition of states $\mbox{\textit{IVP}}(\mathbf{X}; \mathbf{Y}) \subseteq \mathcal{R}(\mathbf{S})$ is an \textit{invariant preimage} of the outcome variables $\mathbf{Y}$ with respect to the variables $\mathbf{X}$ if all states $\mathbf{s}$ within the partition satisfy:
    \vspace{-0.3em}
    \begin{equation} \label{IVP definition}
        \begin{aligned}
            f_\mathbf{Y}(\mathbf{s_X}, \mathbf{s_C}, \mathbf{u}) = f_\mathbf{Y}(\mathbf{s_X}, \mathbf{s^\prime_C}, \mathbf{u}) \\ \forall \: \mathbf{s^\prime} \in \mbox{\textit{IVP}}(\mathbf{X}; \mathbf{Y}) \text{ where } \mathbf{s^\prime_X} = \mathbf{s_X}
        \end{aligned}
    \end{equation}
\end{definition}
\vspace{-0.3em}
Each invariant preimage signifies a partition of the state space in which a particular context-specific independence holds. For example, if a state $\mathbf{s^*}$ was present in the set of states given by $\mbox{\textit{IVP}}(\mathbf{X}; \mathbf{Y})$, then the value of the outcome $\mathbf{Y}$ remains constant for every other state in the set where $\mathbf{X}$ takes on the same values $\mathbf{s^*_X}$. We refer to a set of invariant preimages with respect to outcome variable $\mathbf{Y}$ as $\mathcal{I}(\mathbf{Y}) = \{\mathcal{I}_1, \mathcal{I}_2, ..., \mathcal{I}_K \}$ where $K = 2^{|\mathbf{S}|}$, and each set in $\mathcal{I}(\mathbf{Y})$ corresponds to a subset of the state variables $\mathbf{S}$. 

Figure \ref{fig:ivp-intro} describes a 2D state space with two invariant preimages. The triangle-shaped region in the center corresponds to $\mbox{\textit{IVP}}((\mathbf{P}, \mathbf{O}); \mathbf{Y})$ where the block position $\mathbf{Y}$ depends on the pusher position $\mathbf{P}$ and obstacle position $\mathbf{O}$. The large remaining space corresponds to $\mbox{\textit{IVP}}(\mathbf{P}; \mathbf{Y})$, where $\mathbf{Y}$ is unaffected by obstacle position $\mathbf{O}$. In Section \ref{sec:algorithms}, we devise discovery algorithms to obtain a set of invariant preimages $\mathcal{I}(\mathbf{Y})$ and a mapping from states to these invariant preimages. For the rest of this section, we use a given $\mathcal{I}(\mathbf{Y})$ to reason about causal necessity, sufficiency, and minimality. 

\subsection{Causal Sufficiency}
\label{sec-sufficiency}

In the language of causal models, causal sufficiency is established by treating $\mathbf{X} = \mathbf{x}$ as an intervention and $\mathbf{Y} = \mathbf{y}$ as the consequence of that intervention. However, this does not specify what values the remaining state variables $\mathbf{C}$ should take, and different choices of conditions for $\mathbf{C}$ lead to different definitions of causal sufficiency. If we choose not to intervene on any other state variables and just let them take on their observational values, we would obtain the following definition, reproduced from \citet{beckers2021actual}:

\begin{definition}[Weak Sufficiency]
\label{defn-weak-sufficiency-witness}
    For an observed state $\mathbf{s}^*$, the event $\mathbf{X} = \mathbf{x}$ is weakly sufficient for the outcome $\mathbf{Y} = \mathbf{y}$ with witness set $\mathbf{W}$ if $(M, \mathbf{u}) \vDash [\mathbf{X} \leftarrow \mathbf{x}, \mathbf{W} \leftarrow \mathbf{w}^*] \mathbf{Y} = \mathbf{y}$.
\end{definition}

This definition is referred to as \textit{weak} because it still allows for $\mathbf{Y} = \mathbf{y}$ to be affected by changes in the values of the remaining variables $\mathbf{C}$, even if they are not intervened on. This ignores the fact that in a particular context, the values of some of these variables may have no bearing on the observed outcome. In other words, there is a \textit{context-specific independence} between the outcome $\mathbf{Y}$ and the remaining variables $\mathbf{C}$ in this particular state. 

The invariant preimages precisely define this context-specific independence. If the observed state $\mathbf{s}^*$ fell within the set $\mbox{\textit{IVP}}(\mathbf{X}; \mathbf{Y})$ then we would know that the outcome value depended solely on the variables $\mathbf{X}$. To ensure that this independence holds, we extend the weak sufficiency definition by additionally requiring that the observed state fall within the invariant preimage $\mbox{\textit{IVP}}(\mathbf{X}; \mathbf{Y})$ for the event $\mathbf X =\mathbf x$ to be deemed as causally sufficient.

\begin{definition}[Functional Sufficiency] 
\label{defn-functional-sufficiency}
    For an observed state $\mathbf{s}^*$ and set of invariant preimages $\mathcal{I}_\mathbf{Y}$, an event $\mathbf{X} = \mathbf{x}$ is functionally sufficient for the outcome $\mathbf{Y} = \mathbf{y}$ if it is weakly sufficient for $\mathbf{Y} = \mathbf{y}$ and $\mathbf{s}^* \in \mbox{\textit{IVP}}(\mathbf{X}; \mathbf{Y})$.
\end{definition}
\vspace{-0.3em}

The invariant preimages are therefore used to associate each actual cause with a corresponding context-specific independence. We note that the definition does not change regardless of the witness set used to satisfy weak sufficiency, since holding a subset of the observed state $\mathbf{s}^*$ fixed does not change which invariant preimage it belongs to.

\subsection{Causal Necessity}

Necessity is typically treated as a violation of sufficiency in the absence of the candidate actual cause. In the language of SCMs, this is given by the existence of at least one alternative event $\mathbf{X} = \mathbf{x^\prime}$ that can result in a different outcome $\mathbf{Y} \neq \mathbf{y}$, thereby not being sufficient for the actual observed outcome $\mathbf{Y} = \mathbf{y}$. 

\vspace{-0.1em}
\begin{definition}[Contrastive Necessity]
    \label{defn-necessity}
    For an observed state $\mathbf{s}^*$, the event $\mathbf{X} = \mathbf{x}$ is \textit{necessary} for the outcome $\mathbf{Y} = \mathbf{y}$ under the witness set $\mathbf{W}$ if there exists $\mathbf{x}^\prime \in \mathcal{R}(\mathbf{X})$ such that $(M, \mathbf{u}) \vDash [\mathbf{X} \leftarrow \mathbf{x}^\prime, \mathbf{W} \leftarrow \mathbf{s^*_W}] \mathbf{Y} \neq \mathbf{y}$.
\end{definition}
\vspace{-0.1em}

In practice, this condition can be extended such that an event is declared as necessary only if a change in $\mathbf{X}$ results in a different outcome in at least $\alpha$ percentage of states, instead of simply verifying that one such state exists. We refer to this as $\alpha$-necessity and its corresponding sufficiency condition as $\alpha$-sufficency. We expand on their definitions and properties in Appendix \ref{appendix: alpha-splitting}, and this is a direction for future work, but for the rest of this section we use weak sufficiency and contrastive necessity to outline our approach.


\subsection{Minimality}

In the framework for actual causation given in Definition \ref{general-ac-defn}, \hyperref[ac3:general]{AC3} enforces minimality and the elimination of redundant causes. This is traditionally viewed as a local property of the observed state. On the other hand, minimizing the number of variables in a functional actual cause also implies minimizing the number of variables corresponding to the IVP that contains the observed state. However, \hyperref[ac3:general]{AC3} must hold for any observed state where we want to find actual causes, and this depends on how states are assigned to IVPs. Hence minimality is a global property of the set of IVPs. 

To formalize global minimality, we introduce the following notation. Each invariant preimage in $\mathcal{I}(\mathbf{Y})$ is associated with a unique binary vector of length $\mathbf{|S|}$, where each element corresponds to a variable in $\mathbf{S}$. The binary vector corresponding to $\mbox{\textit{IVP}}(\mathbf{X}; \mathbf{Y})$ will have a value one in the indices corresponding to variables in $\mathbf{X}$. We refer to $(\mathcal{B}(\mathbf{Y}), \mathcal{I}(\mathbf{Y}))$ as a binary-subset pair. Minimality can thereafter be represented using a cost function as follows:

\begin{definition}[Minimality of binary-subset pair]
    \label{defn-ivp-minimality}
    A set of invariant preimages $\mathcal{I}(\mathbf{Y}) = \{\mathcal{I}_1, \mathcal{I}_2, ..., \mathcal{I}_K \}$ and corresponding binary vectors $\mathcal{B}(\mathbf{Y}) = \{\mathbf{b}_1, \mathbf{b}_2, ..., \mathbf{b}_K\}$ are minimal if:
    \vspace{-0.3em}
    \begin{equation} \label{IVP minimality}
        \begin{aligned}
            (\mathcal{B}(\mathbf{Y}), \mathcal{I}(\mathbf{Y})) = \argmin_{\mathcal{B}(\mathbf{Y}), \mathcal{I}(\mathbf{Y})} \: \sum_{k=1}^K|\mathcal{I}_k||\mathbf{b}_k|
        \end{aligned}
    \end{equation}
    
\end{definition}
\vspace{-0.3em}
We can recover AC3 by showing that if the binary-subset pair $(\mathcal{B}(\mathbf{Y}), \mathcal{I}(\mathbf{Y}))$ is minimal according to Equation \ref{IVP minimality}, then any event $\mathbf{X} = \mathbf{x}$ that satisfies AC1 and Definitions \ref{defn-functional-sufficiency} and \ref{defn-necessity} for the outcome $\mathbf{Y} = \mathbf{y}$ must also be minimal.

\begin{theorem}
\label{minimality-proof}
    For an observed state $\mathbf{s^*}$ and binary-subset pair $(\mathcal{B}(\mathbf{Y}), \mathcal{I}(\mathbf{Y}))$, if the event $\mathbf{X} = \mathbf{x}$ satisfies AC1, functional sufficiency, and contrastive necessity for the outcome $\mathbf{Y} = \mathbf{y}$ under witness set $\mathbf{W}$, and $(\mathcal{B}(\mathbf{Y}), \mathcal{I}(\mathbf{Y}))$ is minimal, then there cannot exist a smaller set of variables $\mathbf{Z} \subset \mathbf{X}$ such that $\mathbf{Z} = \mathbf{z}$ also satisfies AC1, functional sufficiency, and contrastive necessity for $\mathbf{Y} = \mathbf{y}$, where $\mathbf{z}$ denotes the values of variables $\mathbf{Z}$ in the assignment $\mathbf{x}$.
\end{theorem}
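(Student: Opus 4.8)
The plan is to argue by contradiction, leveraging the fact that $\mathcal{I}(\mathbf{Y})$ partitions the state space together with the global minimality of the cost in Equation~\ref{IVP minimality}. First I would record the factual bookkeeping: since $\mathbf{X} = \mathbf{x}$ satisfies functional sufficiency with respect to the fixed pair $(\mathcal{B}(\mathbf{Y}), \mathcal{I}(\mathbf{Y}))$, the observed state lies in the associated partition, $\mathbf{s}^* \in \mbox{\textit{IVP}}(\mathbf{X}; \mathbf{Y})$, whose binary vector $\mathbf{b}_\mathbf{X}$ has exactly $|\mathbf{X}|$ nonzero entries. I would also note that AC1 for any $\mathbf{Z} \subset \mathbf{X}$ is inherited for free, since $\mathbf{s}^*_\mathbf{Z} = \mathbf{z}$ follows from $\mathbf{s}^*_\mathbf{X} = \mathbf{x}$, so the content of the claim rests entirely on functional sufficiency.

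Next, suppose for contradiction that some strict subset $\mathbf{Z} \subset \mathbf{X}$ with $\mathbf{Z} = \mathbf{z}$ also satisfies functional sufficiency (and contrastive necessity) for $\mathbf{Y} = \mathbf{y}$. By Definition~\ref{defn-functional-sufficiency} this forces $\mathbf{s}^* \in \mbox{\textit{IVP}}(\mathbf{Z}; \mathbf{Y})$, the partition indexed by the distinct binary vector $\mathbf{b}_\mathbf{Z}$ with $|\mathbf{b}_\mathbf{Z}| = |\mathbf{Z}| < |\mathbf{X}|$. Because the members of $\mathcal{I}(\mathbf{Y})$ are pairwise disjoint and $\mathbf{X} \neq \mathbf{Z}$, the two partitions are distinct and cannot share the state $\mathbf{s}^*$; this already yields a contradiction at the level of the fixed pair.

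To see that this is not an accident of the particular assignment but is enforced by minimality, I would give the exchange argument showing the subset assigned to $\mathbf{s}^*$ is genuinely the smallest valid one. Suppose a valid invariant preimage of $\mathbf{Y}$ with respect to a strictly smaller subset $\mathbf{Z}$ could contain $\mathbf{s}^*$. I would then construct the alternative pair that agrees with $(\mathcal{B}(\mathbf{Y}), \mathcal{I}(\mathbf{Y}))$ everywhere except that $\mathbf{s}^*$ is moved out of $\mbox{\textit{IVP}}(\mathbf{X}; \mathbf{Y})$ and into the $\mathbf{Z}$-indexed partition. Removing $\mathbf{s}^*$ from $\mbox{\textit{IVP}}(\mathbf{X}; \mathbf{Y})$ preserves Definition~\ref{defn-ivp-invariance}, because its invariance condition is universally quantified over the set and only becomes easier to satisfy when states are deleted. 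The reassignment changes the cost $\sum_k |\mathcal{I}_k||\mathbf{b}_k|$ by $|\mathbf{Z}| - |\mathbf{X}| < 0$, strictly decreasing it and contradicting the minimality of $(\mathcal{B}(\mathbf{Y}), \mathcal{I}(\mathbf{Y}))$.

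The step I expect to be the main obstacle is verifying that inserting $\mathbf{s}^*$ into the $\mathbf{Z}$-indexed partition preserves validity: unlike deletion, addition requires that $\mathbf{s}^*$ be compatible with every state already assigned to that partition that shares its $\mathbf{Z}$-values, in the sense of Equation~\ref{IVP definition}. I would discharge this by appealing to functional sufficiency of $\mathbf{Z}$, which is precisely the statement that $\mathbf{s}^*$ belongs to a valid $\mbox{\textit{IVP}}(\mathbf{Z}; \mathbf{Y})$, so the required invariances hold by construction; care is needed to treat the case where the target partition is empty (where the resulting singleton is trivially invariant) separately from the case where it is already populated. Combining the disjointness contradiction with the minimality-enforced exchange establishes that no such $\mathbf{Z}$ can exist, thereby recovering AC3.
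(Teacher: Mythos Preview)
Your proposal is correct and follows essentially the same approach as the paper: both hinge on the exchange argument (reassigning $\mathbf{s}^*$ to the $\mathbf{Z}$-indexed partition and observing that the cost drops by $|\mathbf{Z}|-|\mathbf{X}|<0$, contradicting minimality) together with the partition-membership contradiction that functional sufficiency of $\mathbf{Z}$ would force $\mathbf{s}^*$ into a second block of $\mathcal{I}(\mathbf{Y})$. Your front-loading of the disjointness observation is a bit crisper than the paper's bidirectional $|b_k|\le|\mathbf{X}|$, $|b_k|\ge|\mathbf{X}|$ presentation, and your explicit flag about validity of the insertion step matches the paper's case split on whether the reassignment preserves Equation~\ref{IVP definition} and necessity.
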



\subsection{Actual Causation}
\label{sec:fac}

By putting together the definitions of functional sufficiency (Definition \ref{defn-functional-sufficiency}), functional necessity (Definition \ref{defn-necessity}), and minimality (Definition \ref{defn-ivp-minimality}) with the general framework for actual causation (Definition \ref{general-ac-defn}), we obtain the following definition of actual cause.

\begin{definition}[Functional Actual Cause]
\label{fac-defn}

For a given model $M$, observed state $\mathbf{s^*}$, and binary-subset pair $(\mathcal{B}(\mathbf{Y}), \mathcal{I}(\mathbf{Y}))$, the event $\mathbf{X} = \mathbf{x}$ is a functional actual cause of the outcome $\mathbf{Y} = \mathbf{y}$ for a given context $\mathbf{U} = \mathbf{u}$ if:

\end{definition}

\vspace{-0.2cm}
\begin{itemize}[leftmargin=1em]
    \item[] \label{ac1:fac}\textbf{AC1} (\textit{Factual}): $\mathbf{X} = \mathbf{x}$ and outcome $\mathbf{Y} = \mathbf{y}$ actually happened, i.e. $\mathbf{s}_{\mathbf{X}}^* = \mathbf{x}$ and $\mathbf{s}_{\mathbf{Y}}^* = \mathbf{y}$.

    \item[] \label{ac2a:fac}\textbf{AC2(a)} (\textit{Necessity}): There exists a counterfactual event $\mathbf{x}^\prime \in \mathcal{R}(\mathbf{X})$ and witness set $\mathbf{W}$ such that $(M, \mathbf{u}) \vDash [\mathbf{X} \leftarrow \mathbf{x}^\prime, \mathbf{W} \leftarrow \mathbf{s^*_W}] \mathbf{Y} \neq \mathbf{y}$.

    \item[] \label{ac2b:fac}\textbf{AC2(b)} (\textit{Sufficiency}): $\mathbf{s}^* \in \mbox{\textit{IVP}}(\mathbf{X}; \mathbf{Y})$

    \item[] \label{ac3:fac}\textbf{AC3} (\textit{Minimality}): The binary-subset pair $(\mathcal{B}(\mathbf{Y}), \mathcal{I}(\mathbf{Y}))$ minimizes the cost function $\mathcal{L} = \sum_{k=1}^K|\mathcal{I}_k||\mathbf{b}_k|$
\end{itemize}

As with the Modified HP definition \cite{halpern2015modified}, weak sufficiency is dropped from the definition since it is trivially satisfied if \hyperref[ac1:fac]{AC1} is satisfied. Like other definitions of actual cause, functional actual causes are minimal sets of events that are necessary and sufficient for the observed outcome, but are additionally required to satisfy a globally minimized context-specific independence property. We show in the next section how these properties capture the normality intuition outlined in Section \ref{sec:example}.


\subsection{FAC and Normality}
In the literature on actual causality, \textit{normality} is used to formalize human judgments about normal events and rare events \cite{icard2017normality}. Normal and rare events often appear in machine learning problems as well, where it is commonly hypothesized that in large and high-dimensional spaces, many real-world datasets actually lie along a lower-dimensional manifold within the space \citep{cayton2005manifold}. To tractably infer actual causes in such spaces, and avoid Queen of England-style problems like in Example \ref{block-pushing-example}, it is essential to identify rare events and know when to exclude them.

To address the problem of normality, the FAC definition formalizes two key intuitions. Firstly, it ensures that events that could not have affected the outcome that was actually observed are not used to determine actual causation. This is achieved using invariant preimages, which enforce the context-specific independence property that holds in the observed state. Secondly, it ensures that when a rare event actually occurs, it is prioritized as an actual cause over an event that occurs routinely. This is particularly useful for RL, where distinguishing normal events with no effect (the obstacle being far away) from rare events with causal effects (the obstacle blocking the agent) is often key to robust control. This concept, known as \textit{abnormal inflation}~\citep{icard2017normality}, also underlies many prior works in philosophy and actual causation~\citep{halpern2015graded, blanchard2017normality}, and our approach preserves this intuition.

By enforcing the minimality criterion in Equation \ref{IVP minimality}, a binary subset pair $(\mathcal B(\mathbf Y), \mathcal I(\mathbf Y))$ has lower cost when an invariant preimage $\mbox{\textit{IVP}}(\mathbf{X}; \mathbf{Y})$ with more variables in $\mathbf X$ contains fewer states, so events with lower cardinality (smaller $|\mathbf X|$) are actual causes in more states. Assuming equal likelihood for all states, which is assumed in most prior work on actual causation, larger sets represent normal events and smaller sets represent rare events. This can be extended to probabilistic settings as well by weighting states by their likelihood, and we leave this to future work. Thus, through invariant preimages and global minimality, the FAC definition identifies events useful to a decision-making agent (e.g. the obstacle actually blocking the agent, or the Queen of England actually watering the flowers) without focusing on irrelevant ones (e.g. rearrangements of distant obstacles).


\subsection{Relationships between FAC and existing definitions}
\label{FACcomparison}

The most commonly used definition of actual causation is Modified HP \citep{halpern2015modified}, which is based on weak sufficiency. Functional sufficiency extends this definition with a normality constraint by requiring the observed state $\mathbf{s}^*$ to fall within the invariant preimage $\mbox{\textit{IVP}}(\mathbf{X}; \mathbf{Y})$, where the invariant preimages satisfy a global minimality property. If all states in the state space fell within the same invariant preimage $\mbox{\textit{IVP}}(\mathbf{S}; \mathbf{Y})$, then the constraint is trivially satisfied and functional sufficiency would be equivalent to weak sufficiency. We can interpret this as a property of the environment where no context-specific independencies exist and the outcome always depends on all state variables, or just a choice of the set of invariant preimages $\mathcal{I}(\mathbf{Y})$ that ignores context-specific independencies.

Beckers (\citeyear{beckers2021actual}) additionally defines \textit{direct sufficiency}, which imposes a stronger constraint by requiring that the outcome $\mathbf{Y} = \mathbf{y}$ always occurs when the intervention $\mathbf{X} \leftarrow \mathbf{x}$ is applied, regardless of the values of all other variables $\mathbf{C}$. This is analogous to saying that the observed state $\mathbf{s}^*$ lies within the invariant preimage $\mbox{\textit{IVP}}(\mathbf{X}; \mathbf{Y})$. However, while invariant preimages are defined such that the invariance to $\mathbf{C}$ only needs to hold for a subset of the possible values of $\mathbf{C}$, direct sufficiency requires that the invariance holds for all possible assignments of $\mathbf{C}$. 

\begin{theorem}
    \label{thm-fac-direct-sufficiency}
    For an observed state $\mathbf{s}^*$, functional sufficiency and direct sufficiency are equivalent when the invariant preimage $\mbox{\textit{IVP}}(\mathbf{X}; \mathbf{Y})$ that contains $\mathbf{s}^*$ shows complete invariance, i.e. all other states $\mathbf{s}^\prime$ where $\mathbf{s^\prime_X} = \mathbf{s^*_X}$ also lie within $\mbox{\textit{IVP}}(\mathbf{X}; \mathbf{Y})$.
\end{theorem}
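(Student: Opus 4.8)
The plan is to unfold both notions of sufficiency into statements about the response function $f_\mathbf{Y}$ and then observe that the complete-invariance hypothesis is exactly the ingredient needed to equate them. Write $\mathbf{x} = \mathbf{s}^*_\mathbf{X}$ and $\mathbf{C} = \mathbf{S} - \mathbf{X}$. Direct sufficiency asserts $f_\mathbf{Y}(\mathbf{x}, \mathbf{s}_\mathbf{C}, \mathbf{u}) = \mathbf{y}$ for \emph{every} $\mathbf{s}_\mathbf{C} \in \mathcal{R}(\mathbf{C})$, whereas functional sufficiency (Definition \ref{defn-functional-sufficiency}) asks only for weak sufficiency together with $\mathbf{s}^* \in \mbox{\textit{IVP}}(\mathbf{X}; \mathbf{Y})$, and the invariance furnished by Definition \ref{defn-ivp-invariance} ranges only over those $\mathbf{s}_\mathbf{C}$ for which $(\mathbf{x}, \mathbf{s}_\mathbf{C})$ remains inside the partition. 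The complete-invariance hypothesis says this restriction is vacuous: every state with $\mathbf{X}$-value $\mathbf{x}$ lies in $\mbox{\textit{IVP}}(\mathbf{X}; \mathbf{Y})$. I would first record the anchor used by both directions: by AC1 (factual), $f_\mathbf{Y}(\mathbf{s}^*, \mathbf{u}) = \mathbf{y}$.

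For the forward direction (functional $\Rightarrow$ direct), assume $\mathbf{s}^* \in \mbox{\textit{IVP}}(\mathbf{X}; \mathbf{Y})$, fix an arbitrary $\mathbf{s}_\mathbf{C} \in \mathcal{R}(\mathbf{C})$, and set $\mathbf{s}' = (\mathbf{x}, \mathbf{s}_\mathbf{C})$. By complete invariance $\mathbf{s}' \in \mbox{\textit{IVP}}(\mathbf{X}; \mathbf{Y})$ and $\mathbf{s}'_\mathbf{X} = \mathbf{s}^*_\mathbf{X}$, so Definition \ref{defn-ivp-invariance} applied to the pair $\mathbf{s}^*, \mathbf{s}'$ gives $f_\mathbf{Y}(\mathbf{x}, \mathbf{s}_\mathbf{C}, \mathbf{u}) = f_\mathbf{Y}(\mathbf{x}, \mathbf{s}^*_\mathbf{C}, \mathbf{u}) = f_\mathbf{Y}(\mathbf{s}^*, \mathbf{u}) = \mathbf{y}$. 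As $\mathbf{s}_\mathbf{C}$ was arbitrary, direct sufficiency follows.

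For the reverse direction (direct $\Rightarrow$ functional), direct sufficiency gives $f_\mathbf{Y}(\mathbf{x}, \mathbf{s}_\mathbf{C}, \mathbf{u}) = \mathbf{y}$ for all $\mathbf{s}_\mathbf{C}$; specializing to the observed witness values yields weak sufficiency (Definition \ref{defn-weak-sufficiency-witness}) for any $\mathbf{W}$, while $\mathbf{s}^* \in \mbox{\textit{IVP}}(\mathbf{X}; \mathbf{Y})$ is supplied by the hypothesis that $\mbox{\textit{IVP}}(\mathbf{X}; \mathbf{Y})$ is the partition containing $\mathbf{s}^*$, so functional sufficiency holds. The main obstacle is bookkeeping rather than depth: because the invariance of Definition \ref{defn-ivp-invariance} is quantified only over states already in the partition, the forward direction genuinely needs complete invariance to push the equality out to all of $\mathcal{R}(\mathbf{C})$, and one must track that the constant value of $f_\mathbf{Y}$ on the partition is the observed $\mathbf{y}$ and not some other outcome. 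It is also worth stating explicitly that, as in the remark after Definition \ref{defn-functional-sufficiency}, the verdict does not depend on the choice of witness set $\mathbf{W}$.
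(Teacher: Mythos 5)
Your proposal is correct and follows essentially the same route as the paper's proof: the forward direction uses complete invariance to place every state $(\mathbf{s}^*_\mathbf{X}, \mathbf{s}_\mathbf{C})$ inside $\mbox{\textit{IVP}}(\mathbf{X}; \mathbf{Y})$ and then invokes the invariance equation of Definition \ref{defn-ivp-invariance} to propagate the observed outcome to all of $\mathcal{R}(\mathbf{C})$, while the reverse direction obtains weak sufficiency from direct sufficiency and membership of $\mathbf{s}^*$ in the partition from the hypothesis. Your version is merely more explicit about the chain of equalities and the anchor $f_\mathbf{Y}(\mathbf{s}^*, \mathbf{u}) = \mathbf{y}$, which the paper leaves implicit.
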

\vspace{-0.6em}
While functional sufficiency is a statement about contextual invariance, direct sufficiency requires complete invariance and weak sufficiency does not require any invariance. Hence functional sufficiency captures the spectrum between weak sufficiency and direct sufficiency. Appendices \ref{appendix: direct-sufficiency} and \ref{addn_ex} expand further on the relationships between FAC and existing definitions of actual cause. 

\section{Discovering Actual Causes}
\label{sec:algorithms}

A key implication of the FAC definition (Definition \ref{fac-defn}) is that the process of inferring functional actual causes boils down to mapping each observed state $\mathbf{s}^*$ to a binary vector $\mathbf{b} \in \mathcal{B}(\mathbf{Y})$. 
To ensure that Definition \ref{fac-defn} is satisfied for the actual causes in all possible states, we verify that the binary-subset pair $(\mathcal{B}(\mathbf{Y}), \mathcal{I}(\mathbf{Y}))$ satisfies the following properties:

\begin{itemize}[leftmargin=0.5em]
\itemsep0em
    \item[] \label{property:invariance}\textbf{P1} (\textit{Invariance}): For any set of variables $\mathbf{X}$, all states in its corresponding partition $\mathcal{I}_\mathbf{X}$ satisfy Equation \ref{IVP definition}.
    \item[] \label{property:necessity}\textbf{P2} (\textit{Necessity}): If a state $\mathbf{s}$ lies within the partition $\mathcal{I}_\mathbf{X}$ then $\mathbf{X} = \mathbf{s_X}$ satisfies contrastive necessity (Definition \ref{defn-necessity}) with respect to the outcome at state $\mathbf{s}$. 
    \item[] \label{property:minimality}\textbf{P3} (\textit{Minimality}): $(\mathcal{B}(\mathbf{Y}), \mathcal{I}(\mathbf{Y}))$ satisfies Definition \ref{defn-ivp-minimality}.
\end{itemize}

To discover a binary-subset pair that satisfies these conditions, we start with a naive approach: exhaustively enumerating all possible partitionings of the state space and assignments of binary vectors to these partitions. We then identify and return all binary-subset pairs that satisfy properties \hyperref[property:invariance]{\textbf{P1}}, \hyperref[property:necessity]{\textbf{P2}}, and \hyperref[property:minimality]{\textbf{P3}}. This approach is described in Algorithm \ref{exhaustive-search}, with further details in Appendix \ref{appendix: algorithm-0}.

We evaluate our approach on a set of well-known examples from the actual causality literature and demonstrate FAC agrees with the majority of established results in Appendix \ref{appendix-exhaustive}. We also apply this algorithm to a simplified 1-D mover environment similar to Example \ref{block-pushing-example}. The results shown in Table \ref{fig:results:1dmover} indicate that the obstacle is declared as an actual cause only in states where it could have impeded the mover, matching the intuition of normality outlined in Example \ref{block-pushing-example}.

\setcounter{algorithm}{-1}
\begin{algorithm}
\caption{Exhaustive Search}
\begin{algorithmic}
\label{exhaustive-search}
  \STATE {\bfseries Input:} SCM $M$, range function $\mathcal{R}$, soft ratios $(\alpha_0, \alpha_1)$
  \STATE {\bfseries Output:} min-cost binary-subset partition $(\mathcal{B}(\mathbf{Y}), \mathcal{I}(\mathbf{Y}))$
    \STATE \textbf{Let} $g(\mathbf b, \mathcal I) \rightarrow \{0,1\}$ indicate if a binary $\mathbf b$ violates invariance, $\alpha-$necessity/sufficiency on state subset $\mathcal I$.
    \STATE \textbf{For} all binary-subset partitions $(\mathcal B(\mathbf Y), \mathcal I(\mathbf Y)) \in \mathcal B \times \mathcal P$. $\mathcal B=$ all binaries of $\{0,1\}^n$ and $\mathcal P=$ all partitions of $\mathcal S$.
    \STATE \quad \textbf{if} $\bigcap_{\mathbf b, \mathcal I \in (\mathcal B(\mathbf Y), \mathcal I(\mathbf Y))} g(\mathbf b, \mathcal I)$ and unique binaries
    \STATE \quad \quad \textbf{Add} $\left[(\mathcal B(\mathbf Y), \mathcal I(\mathbf Y))\rightarrow \sum_{j} |\mathbf{b}_j| |\mathcal{I}_j|\right ]$ to cost dictionary $\mathcal C$
    \STATE \textbf{Return} $\argmin_{(\mathcal B(\mathbf Y), \mathcal I(\mathbf Y))}\mathcal{C}$
\end{algorithmic}
\end{algorithm}

\begin{table}
\centering
\begin{tabular}[width=0.5\linewidth,valign=b]{|l|c|c|c||c|c|c|r|}
\toprule
$\mathbf{b}$& $m$& $o$& $m^\prime$& $\mathbf{b}$& $m$& $o$& $m^\prime$\\
\midrule
$11$& $0$& $1$& $0$ & $10$& $2$& $0$& $3$\\
$11$& $1$& $2$& $1$ & $10$& $2$& $1$& $3$\\
$10$& $0$& $0$& $1$ & $10$& $2$& $2$& $3$\\
$10$& $0$& $2$& $1$ & $10$& $3$& $0$& $4$\\
$10$& $1$& $0$& $2$ & $10$& $3$& $1$& $4$\\
$10$& $1$& $1$& $2$ & $10$& $3$& $2$& $4$ \\
\bottomrule
\end{tabular}
\caption{Table of binary, state and outcome from Algorithm \ref{exhaustive-search} in 1D mover environment. Abbreviations: $\mathbf{b}$: binary [mover, obstacle], $m$: mover, $o$: obstacle, $m^\prime$: next mover position. 1D mover dynamics: $m^\prime = m + 1$ if $o \neq m + 1$, otherwise $m^\prime = m$.}
\label{fig:results:1dmover}
\vspace{-0.7cm}
\end{table}

\subsection{Recovering Binary-Subset Pairs by Joint Optimization}

To devise algorithms that scale to more complex domains, we learn a parameterized function that maps states to binary vectors, represented using a neural network $h_\mathbf{Y}(\mathbf{s}; \theta): \mathcal{R}(\mathbf{S}) \to \{0,1\}^{|\mathbf{S}|}$. This exploits the fact that once we have a binary-subset pair that satisfies properties \hyperref[property:invariance]{\textbf{P1}}, \hyperref[property:necessity]{\textbf{P2}}, and \hyperref[property:minimality]{\textbf{P3}}, there is no need to explicitly store or enumerate the partitions as long as the correct binary vector is returned.

In practice, we rarely have access to the true model $M$ to be able to perform arbitrary interventions and evaluate the above properties, and instead have a dataset of observations and outcomes $\mathcal{D} = \{(\mathbf{s}_i, \mathbf{y}_i)\}_{i=1}^N$. In place of the structural causal model $M$, we learn a model of the environment $f(\mathbf{s}, \mathbf{b}; \phi)$ that takes in a state $\mathbf{s}$ and a binary vector $\mathbf{b}$, and returns a value for the outcome $\mathbf{Y}$ that only depends on the variables $\mathbf{S}_\mathbf{b}$ corresponding to the ones in $\mathbf{b}$. In practice, this can be implemented by using $\mathbf{b}$ to mask out all variables except for those in $\mathbf{S}_\mathbf{b}$.

We describe our approach for jointly learning a model of the environment and mapping states to binary vectors in Algorithm \ref{joint-optimization}. The model is trained to enforce minimality (\hyperref[property:minimality]{\textbf{P3}}) by minimizing the length of the binary vectors corresponding to every state observed in the dataset. Since $h_Y(\mathbf{s}; \theta)$ only returns one binary vector for every possible state, each binary vector corresponds to a unique partition of the state space. The invariance property (\hyperref[property:minimality]{\textbf{P1}}) is enforced by constraining the output of model $f(\mathbf{s}, \mathbf{b}; \phi)$ to match the value $f_\mathbf{Y}(\mathbf{s}, \mathbf{u})$ returned by the original causal model $M$, since this implies that only the variables $\mathbf{S}_\mathbf{b}$ are needed to predict the outcome $\mathbf{Y}$. Adding this as an optimization constraint and using its Lagrangian dual form results in the objective in Equation \ref{optimization-learned-model}. 
\vspace{-0.4cm}
\begin{equation}
\begin{split}
\label{optimization-learned-model}
    \min_{\theta, \phi} \sum_{i = 1}^{N} \bigg( |h_\mathbf{Y}(\mathbf{s}^{(i)}; \theta)| + \beta \lVert f(\mathbf{s}^{(i)}, \mathbf{b}^{(i)}; \phi) - \mathbf{y}^{(i)}  \rVert \bigg) \\[-.2cm] \text{ where } \mathbf{b}^{(i)} = h_\mathbf{Y}(\mathbf{s}^{(i)}; \theta)
\end{split}
\end{equation}
\vspace{-.5cm}
\begin{algorithm}
\caption{\algoname{} (\algoacronym{})}
\begin{algorithmic}
  \STATE {\bfseries Input:} Dataset $\mathcal{D} = \{(\mathbf{s}^{(i)}, \mathbf{y}^{(i)}\}_{i=1}^N$
  \STATE {\bfseries Output:} state-binary mapping $h_\mathbf{Y}(;\theta)$ 
  \REPEAT
    \STATE $\mathbf{b}^{(i)} = h_\mathbf{Y}(\mathbf{s}^{(i)}; \theta)$ \textbf{for} $i \in \{1,...,N\}$
    \STATE $\phi := \argmin_\phi \sum_{i = 1}^{N} \lVert f(\mathbf{s}^{(i)}, \mathbf{b}^{(i)}; \phi) - \mathbf{y}^{(i)}  \rVert$
    \STATE $\theta := \argmin_\theta \sum_{i = 1}^{N} \bigg( |h_\mathbf{Y}(\mathbf{s}^{(i)}; \theta)| + \beta \lVert f(\mathbf{s}^{(i)}, \mathbf{b}^{(i)}; \phi) - \mathbf{y}^{(i)}  \rVert \bigg)$
  \UNTIL{$(\theta, \phi)$ converges}
\end{algorithmic}
\label{joint-optimization}
\end{algorithm}
\vspace{-0.5cm}

\algoacronym{} thus learns to identify functional actual causes by jointly optimizing the masked forward model $f$ and state-binary mapping $h_\mathbf{Y}$. Appendix \ref{AppendixNeuralModel} contains additional details on the neural network architecture used for our implementation of \algoacronym{}, how necessity is enforced, and visualizations. 

\section{Experiments}
\label{sec:experiments}
In this section, we evaluate Algorithm~\ref{joint-optimization} on a set of complex, continuous-valued domains. We first assess how well \algoacronym{} recovers actual causes under different causal structures using a set of \textit{Random Vectors} domains. We also evaluate its performance in the reinforcement learning (RL) domains \textit{Mini-Breakout} and \textit{2D Robot Pushing}.


\subsection{Baselines}

Traditional methods in actual causality are not feasible to verify in the above domains, since they require a perfect SCM as well as enumerating all counterfactual states in a continuous and high-dimensional space. However, several other approaches in RL focused on discovering \textit{local causality} or \textit{interactions} are essentially targeting the same problem of identifying state factors that actually influenced the outcome at an observed state. These methods have previously been evaluated on downstream RL tasks, but not directly on their accuracy in identifying the presence of interactions. We compare \algoacronym{} with the following baselines:

\vspace{-0.3cm}
\begin{enumerate}
\itemsep0pt
    \item \textit{Gradient-based inference} \citep{wang2023elden} \textbf{Grad}: This approach uses input gradient magnitude to identify actual causes by learning a model $f(\mathbf{s};\phi)$ to predict the outcome variable from the inputs. For an observed state $\mathbf{s}$, each component of the output binary vector is given by $\mathbf{b}_i \coloneqq \mathbbm{1}\left(\left\|\frac{\delta \mathbf{y}}{\delta \mathbf{s}_i}\right\|_1 > \tau\right)$.
    
    \item \textit{Attention-based inference} \citep{rahaman2022neural} \textbf{Attn}: This approach learns a multi-head attention model $f(\mathbf{s};\phi)$ that maps input states to outcomes, then assigns components of the input as actual causes based on the total attention. The attention model is trained with an entropy regularization that encourages the heads to focus on a single input. For an observed state $\mathbf{s}$, each component of the output binary vector is given by $\mathbf{b}_i \coloneqq \mathbbm{1}\left(\frac{\sum_{k=1}^Na_i^k}{N} > \tau\right)$, where $a_i^k$ is the attention on $\mathbf{s}_i$ from head $k$ and $N$ is the number of heads.
    
    \item \textit{Counterfactual inference} \citep{pitis2020counterfactual} \textbf{CF}: For each causal variable $\mathbf{S}_i$ we sample $N$ random values and counterfactually assign $\mathbf{S}_i=\mathbf{s}_i'$ to obtain $N$ states $\{\mathbf{s}_i^{(1)},..., \mathbf{s}_i^{(N)}\}$. We then learn a model $f(\mathbf{s};\phi)$ that maps input states to outcomes, and test if the variance of the outcome under this model across the states $\{\mathbf{s}_i^{(1)},..., \mathbf{s}_i^{(N)}\}$ is above a threshold $\tau$. Each component of the output binary vector is given by $\mathbf{b}_i \coloneqq \mathbbm{1}\left(\frac{1}{N}\sum_{n=1}^N \| f(\mathbf{s}_i^{(n)}; \phi) \|_1 > \tau\right)$.
\end{enumerate}

\vspace{-0.3cm}
We report the accuracy of the optimal $\tau$ for each setting and each baseline. Further details on these baselines are in Appendix \ref{Baselines}. We also illustrate how the FAC definition unifies the heuristics employed by the above methods as assumptions about the invariance, necessity, and minimality properties in Appendix~\ref{Appendix:localcausality}.


\subsection{Random Vectors}

A Random Vectors domain is defined by a set of active variables $\mathcal{X} = \{X_1, \hdots X_n, Y\}$, where $Y$ is the outcome variable, as well as a set of passive variables $\tilde{\mathcal{X}} = \{\tilde{X}_1, \hdots \tilde{X}_n, \tilde{X}_Y\}$. All variables in the model are $d$-dimensional vectors and take on values in $[-1,1]^d$. The model determines their values with $R$ relations: linear functions of the values of parent variables $\mathcal{P}_r \subseteq \mathcal{X} \cup  \tilde{\mathcal{X}}$ whose output is one of the active variables in $\mathcal{X}$. If there are $R_t$ relations whose output is the active variable $X_t$ then each of these relations returns a value in $[-\frac{1}{R_t},\frac{1}{R_t}]^d$

Each active variable $X_t \in \mathcal{X}$, including the outcome, has a corresponding passive variable $\tilde X_t$. For each relation where the output variable is $X_t$, the returned value always depends on the passive variable, but it may or may not depend on the active variables based on their current values. We aim to infer the actual causes of outcome variable $Y$.

Each relation $f_r$ is associated with four matrices $\mathbf{A_r, B_r, C_r, D_r}$, and is a conditional linear relationship between the parent variables $\mathbf{P}_r$, and a target variable $X_t$. Let $x_a = [x_i, \hdots]_{i \in \mathcal P_r}$ denote the concatenation of all the parent variable values, and $[x_a, \tilde x_t]$ denote the concatenation of the parent variable values and passive variable value. Then the output of relation $f_r$ is given by: 

\begin{equation*}
\begin{split}
f(x_a, x_t) \coloneqq \mathbbm{1}(\mathbf{D}[x_a, \tilde x_t] \le \tau)(\frac{b}{\sqrt{d}}\mathbf{C}\tilde x_t) \quad +\quad \\ \mathbbm{1}(\mathbf{D}[x_a, \tilde x_t] > \tau)(\frac{b}{\sqrt{d} \cdot (|\mathcal P| + 1)}\left(\mathbf{A}x_a + \mathbf{B}\tilde x_t\right))
\end{split}
\end{equation*}

\begin{table}
\begin{small}
\begin{tabular}{@{}lcccc@{}}
\toprule
 Env & \algoacronym{} & Grad & Attn & CF \\
\midrule
\textbf{1-in} & $\mathbf{3.6\pm 0.1}$ & $8.8\pm 3.6$& $48 \pm 0.8$ & $42\pm 0.9$\\
\textbf{2-in} & $\mathbf{2.6\pm 2.0}$ & $27\pm4.0$& $50\pm 1.1$ & $42\pm 1.5$\\
\textbf{3-in} & $\mathbf{4.5\pm 1.9}$ & $41\pm 3.8 $ & $47\pm3.7$ & $44\pm 1.1$ \\ 
\textbf{3-chain} & $\mathbf{9.3\pm 7.6}$ & $46\pm 3.2$& $46\pm 0.5$ & $47\pm 0.5$\\
\textbf{3-m-in} & $\mathbf{2.0\pm 0.7}$ & $17\pm 3.5$ & $50\pm0.5$ & $44\pm 0.3$\\ 
\textbf{d-20} & $\mathbf{7.7\pm 1.2}$ & $\mathbf{8.0\pm 0.4} $ & $43\pm0.8$ & $45\pm 1.1$ \\ 
$\tau$\textbf{-1} & $\mathbf{0.2\pm 0.0}$ & $2.1\pm 0.3 $ & $9.9\pm0.8$ & $7.7\pm 0.5$ \\ 
\textbf{Push} & $\mathbf{13\pm 1.3}$ & $22\pm 1.1 $ & $39\pm0.6$ & $41\pm 1.2$ \\ 
\textbf{Break} & $\mathbf{3.6\pm0.2}$ & $\mathbf{10\pm 7.1} $ & $16\pm6.5$ & $35\pm 1.1$ \\ 
\bottomrule
\end{tabular}
\caption{Error (\%) in identifying ground truth actual causes in each of the domains.}
\label{AccuracyTable}
\end{small}
\vspace{-0.7cm}
\end{table}
Here $\tau$ is a threshold hyperparameter, such that the value of $X_t$ depends only on $\tilde x_t$ if $\mathbf{D}[x_a, \tilde x_t] \le \tau$, but depends on both $x_a$ and $\tilde x_t$ if $\mathbf{D}[x_a, \tilde x_t] > \tau$. The value of the indicator $\mathbf{D}[x_a, \tilde x_t] > \tau$ therefore determines whether the active variables are actual causes or not, and serves as ground truth for our evaluation. When $\tau$ is smaller, the active variables are actual causes less frequently. 

We evaluate \algoacronym{} on a number of different domains given by the following causal graphs: \\

\vspace{-0.7cm}
\begin{itemize}
    \item \textbf{1-in}: $X_{a, 1}\rightarrow Y$
    \item \textbf{2-in}: $X_{a,1}\rightarrow Y, X_{a,2}\rightarrow Y$ 
    \item \textbf{3-in}: $X_{a,1}\rightarrow Y, X_{a,2}\rightarrow Y, X_{a,3}\rightarrow Y$ 
    \item \textbf{3-chain}: $X_{a,1}\rightarrow X_{a,2} \rightarrow Y$
    \item \textbf{3-m-in}: $[X_{a,1}, X_{a,2},X_{a,3}]\rightarrow Y$
    \item \textbf{d-20}: $X_{a,1}\rightarrow Y$ with $d =20$ instead of $4$
    \item $\tau$-\textbf{1}: $X_{a,1}\rightarrow Y$ with $<10\%$ relation frequency
\end{itemize}

Aside from \textbf{d-20}, all variables are 4-dimensional. The number in \textbf{1-in}, \textbf{2-in}, and \textbf{3-in} represents the number of incoming edges. \textbf{3-chain} is a chain graph with length 3. For \textbf{3-m-in}, $(X_{a,1}, X_{a,2}, X_{a,3})$ are the parents in a single relation. \textbf{d-20} is a variant of \textbf{1-in} where the dimensionality of each variable is increased to 20. In $\tau$-\textbf{1}, the threshold hyperparameter is set to -1 such that the active variables are actual causes much less frequently ($<10\%$). Additional details on these domains can be found in Appendix \ref{appendix:random-vectors}. 

We generate $100k$ states from each domain, using $90k$ of them as the training set and $10k$ as the test set. Appendix~\ref{JACITrainingDetails} contains additional training curves and a false-positive/negative assessment. As shown in Table \ref{AccuracyTable}, in all domains our approach demonstrates significantly higher accuracy at identifying actual causes. 


\subsection{Mini-Breakout and 2D Pushing}

To evaluate how well \algoacronym{} scales to more realistic domains, we tested its performance on \textit{Mini-Breakout} and \textit{2D Pushing}. 2D Pushing is a domain similar to Example~\ref{block-pushing-example} where an agent pushes a block in a continuous 2D space, and both the agent and the block can be impeded by obstacles. Mini-Breakout is a domain where a ball must be bounced off of a paddle to hit bricks. This domain is challenging because the ball makes infrequent, complex interactions with many objects while following its own passive dynamics. These domains are illustrated in Figures \ref{fig:push2d} and \ref{fig:break} respectively. 

The dynamic causal variables in 2D Pushing are [Action, Gripper, Block, Target, Obstacle1, $\hdots$], where the block position is used as the outcome variable. In Breakout the causal variables are [Action, Paddle, Ball, Brick1, Brick2, $\hdots$], where the ball position is used as the outcome variable. Because the simulator indicates when objects interact in these domains, it can be used to identify ground truth interactions. We discuss additional details and the causal graphs of these domains in Appendix ~\ref{2dpusher} and Appendix ~\ref{breakout}.

As we observe in Table~\ref{AccuracyTable}, \algoacronym{} is significantly more accurate than all baselines at identifying actual causes. The main limitation of \algoacronym{} is that without hyperparameter tuning, it can converge to local minima such as the binary vector of all zeros (no actual causes) or all ones (the full state is an actual cause). Additional tuning discussion is in Appendix~\ref{JACITrainingDetails}.

\begin{figure}
\centering     
\subfigure[]{\label{fig:push2d}\includegraphics[width=0.45\linewidth]{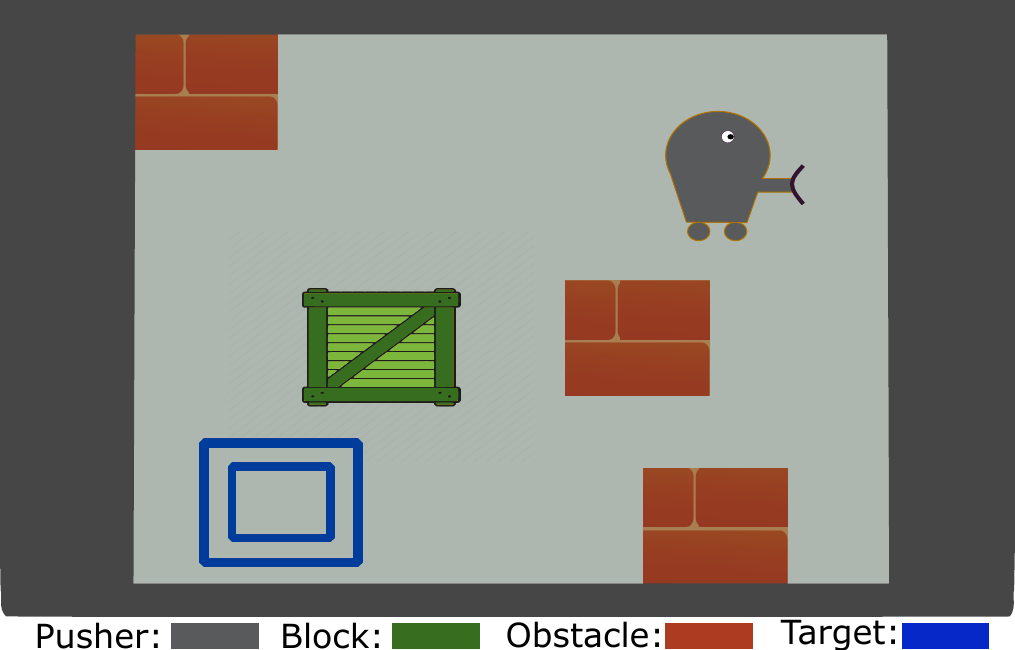}}
\subfigure[]{\label{fig:break}\includegraphics[width=0.45\linewidth]{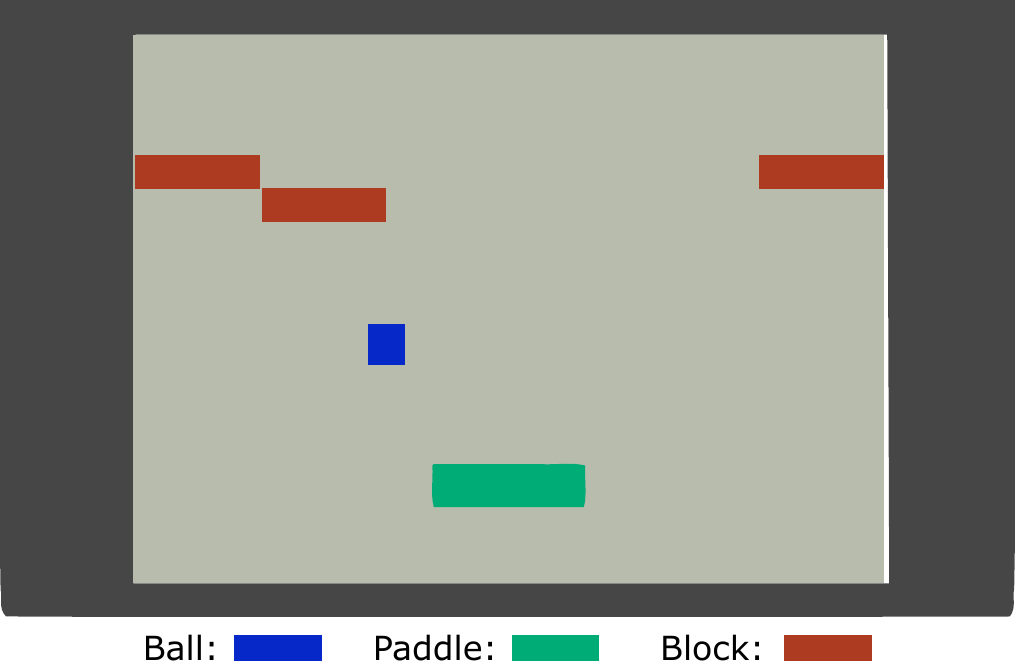}}
\vspace{-0.2cm}
\caption{\textbf{(a)} \textbf{2D Pushing}: the continuous actions move the pusher in $[-1,-1]^2$ in a $5$ by $5$ area. \textbf{(b)} \textbf{Mini-Breakout}: the actions move the paddle to the right and left.}
\vspace{-0.9cm}
\label{fig:1dmover}
\end{figure}

\vspace{-0.3cm}
\section{Conclusion}
\label{conclusion}

In this work, we make a step towards bridging the theoretical notion of actual causation with the practical objective of automated inference and scaling to complex environments. The \textit{functional actual cause (FAC)} definition takes into account the context-specific independencies that exist in many real-world environments and is accordingly more restrictive than existing definitions. To demonstrate the practicality of this approach, we develop a continuous optimization-based strategy for inferring actual causes, \textit{\algoname{} (\algoacronym{})}, and show that it performs effectively while scaling to continuous-valued and complex environments.

\section{Acknowledgements}

We thank Eliza Stefaniw and Purva Pruthi for their input on various drafts of this paper. This work has taken place in part in the Safe, Correct, and Aligned Learning and Robotics Lab (SCALAR) and in part at the Knowledge Discovery Laboratory (KDL) at The University of Massachusetts Amherst. SCALAR research is supported in part by the NSF (IIS-2323384), AFOSR (FA9550-20-1-0077), ARO (78372-CS), and the Center for AI Safety (CAIS). KDL research is supported in part by the Defense Advanced Research Projects Agency (DARPA) and the Army Research Office (ARO) and was accomplished under Cooperative Agreement Number W911NF-20-2-0005. This research was supported by the Office of Naval Research (ONR) through the National Defense Science and Engineering Fellowship (NDSEG). The views and conclusions contained in this document are those of the authors and should not be interpreted as representing the official policies, either expressed or implied, of the DARPA or ARO, or the U.S. Government. The U.S. Government is authorized to reproduce and distribute reprints for Government purposes notwithstanding any copyright notation herein.


\bibliography{references}
\bibliographystyle{icml2024}
\newpage
\appendix
\onecolumn


\section{Related Work}
\label{related-work}

Our work unifies definitions of actual causality with practical inference algorithms for machine learning and sequential decision-making and touches on a wide variety of related work, some of which we highlight in this section.


\subsection{Analogous Concepts in Causal Inference}
The problem of the obstacle being an actual cause in Example \ref{block-pushing-example} relates to the classic Queen of England problem: \textit{the queen of England's not watering my flowers caused my flowers to wilt} is contrary to most human judgments, but could technically be correct depending on the definition of actual cause. \citep{schaffer2005contrastive} connects this problem to the ambiguity between causation and non-causation, i.e. the non-occurrence of events being causes. This idea is related to \textit{normality}~\citep{halpern2015graded}, which has been observed in human behavior~\citep{icard2017normality,kominsky2015causal,morris2018judgments} and formally expanded on~\citep{blanchard2017normality}. Normality has also been incorporated in causal probabilistic logic~\citep{beckers2015combining}, but rarely in continuous domains or for inference with learned models. 

Actual causality has well-established relationships to abnormality~\citep{hilton1986knowledge}, which is built on the basic causal assumption of describing an actual cause as present \textit{when present and absent when absent}~\citep{kelley1980attribution}. Nonetheless, the subject of normality has remained challenging to formalize, because human intuitions appear to be some combination of prescriptive (something is the cause because it should have been some way) versus statistical (something is the cause because it is unlikely)~\citep{hitchcock2007prevention,menzies2007causation,hitchcock2009cause}. In some cases, the prescriptive norms are described according to moral or legal reasoning~\citep{kominsky2019immoral,lagnado2017causation,knobe2021proximate}, and in other cases, in terms of mechanics~\cite{bear2017normality,samland2016prescriptive}. In cases such as these, there are often few solutions more elegant than simply constructing a heuristic gradation of values ~\citep{halpern2015graded}. FAC instead represents normality using the invariant preimage. Perhaps the closest line of research to FAC investigates the statistical properties of normality, including the relationship with statistical correlation in counterfactual space~\cite{quillien2020we,quillien2023counterfactuals,kirfel2021causal}. However, FAC extends these definitions by applying context-specific independence and set-partitioning minimality, which formalizes notions of prescriptive causes with statistical correlation. 

Outside of the context of normality, the concept of the invariant preimage and identifying subsets of the state space where specific invariances exist is analogous to representing and learning context-specific independencies \citep{neville2007relational}, which has been explored in prior work on probabilistic graphical models \citep{shen2020new, boutilier2013context}. Mixtures of DAG representations \citep{strobl2022causal}, especially using dynamic networks \citep{dagum1995uncertain,zhang2017causal} frame the invariance using the generative DAG model. While actual causality introduces additional constraints related to necessity, Algorithm~\ref{joint-optimization} bears some resemblance to methods modeling mixture distributions~\citep{bellot2021consistency, lu2021improving}. 

Learning to identify independence to certain variables in general has been useful for a variety of applications, including RL. This is the case in machine learning tasks in theory~\citep{ferns2012metrics, nachum2021provable} and in practice for tasks such as dynamics modeling~\citep{li2018learning,feng2023learning}, decision-making~\citep{zhang2020invariant}, representation learning~\citep{oord2018representation}, and inference in graphical models~\citep{poole2003exploiting}. However, these methods can only operate in specific contexts where many causal variables are generally irrelevant. When there are only a few specific contexts where variables are independent, these methods provide a much more limited benefit.

\subsection{Context Specific Independence}

Causal relationships that are context-specific have been previously studied in the literature on causal graphical models, where they are formalized using various concepts such as context-specific independence \citep{poole2003exploiting, boutilier2013context}, partial conditional independence \citep{pensar2016role}, and context-set specific independence \citep{hwang2023discovery}. In these settings, sets of variable assignments in which particular independence relationships hold are referred to as \textit{domains} or \textit{context sets}. The concept of invariant preimage is analogous to these notions, as it represents regions of the state space where the outcome is invariant to certain inputs. A key distinction between invariant preimages and context-specific independence (CSI) is that the former asserts assumes invariance only within a subset of states while the latter assumes complete independence when certain variables are fixed to specific values, though some work~\citep{boutilier2013context,hwang2023discovery} has explored relaxing this constraint. 

CSI addresses a problem similar to that of actual causality; it describes the contextually \textit{independent} parents in a particular state, while actual causality describes what can intuitively be interpreted as the contextually \textit{dependent} parents of an outcome. Actual causation emphasizes what \textit{actually happened} and not just any context, which is a requirement for the counterfactual in actual causation to make sense. Additionally, CSI aims to encompass the spectrum between partial conditional independence and marginal independence. The former describes an independence relationship that holds for a single assignment of the other variables, while the latter describes an independence relationship that holds regardless of the values of other variables. Similarly, Theorem \ref{thm-fac-direct-sufficiency} describes how functional sufficiency captures the spectrum between direct sufficiency and weak sufficiency. By synthesizing the concepts of necessity, sufficiency, and witness sets from actual causation with the notions of CSI and context sets as state-space partitions, this work provides a novel contribution to both fields.

The complexity of identifying context-specific independence is NP-hard~\citep{corander2019logical}, as is that of identifying blame and responsibility~\citep{aleksandrowicz2017computational} with actual causation. In practice, context-specific independence has been inferred using staged trees~\citep{leonelli2023context}, Bayesian networks with Boolean functions~\citep{zou2017representing}, weighted adjacency matrices~\citep{brouillard2020differentiable} and stratified Gaussian models~\cite{nyman2017stratified}. Additionally, the subset partitioning for defining context sets in CSI is exactly formulated in \textit{local partial context-specific independence} (LP-CSI)~\citep{pensar2016role}, and subsequent work has used Bayesian methods~\citep{pensar2018exact} and logical graph structures~\citep{tikka2019identifying} to recover these partitions. LP-CSI captures the notion of the invariant preimage and tries to recover the set of invariant parents for different states. Though an analogy can be drawn between Bayesian methods and minimality (AC3) since they regularize the model, they do not capture the notion of necessity or the witness set. 

\algoacronym{} (Algorithm \ref{joint-optimization}) can be seen as a function approximation method for learning these partitions that incorporates necessity (AC2a) in the structure of the function approximator. This is especially relevant in similarities to Neural Contextual Decomposition (NCD) \citep{hwang2023discovery}, which utilizes a forward model masked by outputs from a Gumbel-Bernoulli binary variable. However, NCD does not use joint optimization, regularization, or order-invariant structure to infer LP-CSI graphs. On the other hand, \citet{hwang2023quantized} utilize a quantized codebook with regularization and a joint optimization to infer LP-CSIs, though without order-invariant structure or adaptive optimization of \algoacronym{}. The order-invariant structure required to capture the necessity condition in actual causation is what sets JACI apart from these approaches, as it is not a requirement for CSI.


\subsection{Causal Explanations and Decision Making}
Generating causal explanations is one of the most intuitive applications of actual causality \citep{beckers2022xai} and a recurring focus of prior work, extending to problem settings such as classification \citep{bertossi2020causality}, answering queries \citep{meliou2010complexity} and language models \citep{kıcıman2023llm}. For RL domains, past work has focused on explaining the decisions of RL policies \citep{nashed2023causal}, evaluating the degree of responsibility of particular decisions \citep{triantafyllou2022actual} and team games \citep{alechina2020causality}, and studying human participants' explanations of the behavior of RL agents \citep{madumal2020explainable}. In general, this work draws from but does not directly relate to explanation, though there are several opportunities in this area for future work.


\subsection{Sequential Decision Making and Reinforcement Learning}
Actual causality has many promising directions for applications in RL \citep{buesing2018woulda}, though they are described using varied terminology including \textit{interaction} \citep{yang2023learning}, \textit{local causality}~\citep{pitis2020counterfactual}, \textit{controllability}~\citep{seitzer2021causal} and \textit{counterfactuals}. Using actual causes as a signal for exploration has been explored using counterfactual data augmentation in \citet{pitis2020counterfactual}, which introduces the notion of \textit{local causality}. This uses measures such as proximity to determine actual causes, then counterfactually alters them to generate new data. This work has been extended to identify regions where a planner~\citep{chitnis2021camps} or policy can perform well~\citep{pitis2022mocoda}. Alternative methods utilize interventional data \citep{baradel2019cophy} applied at object interactions. 

Causal graphical models have been applied to reinforcement learning with some success~\citep{zhang2020invariant,zhang2020learning,wang2022causal,ding2022generalizing}. Further extensions to these works have touched upon actual causality by various means, such as identifying controllable causal states \citep{seitzer2021causal}, identifying local causes with model gradients \citep{wang2023elden}, graph networks~\citep{feng2023learning} and using curiosity-based causal graph methods \citep{sancaktar2022curious}. Interestingly, though some of these works incorporate dynamic graphs, they often lean heavily into false positives, a result we validate in our baselines. 

Actual causes described as interactions are also useful for skill learning, as shown by prior work on identifying actual causes with pairwise models \citep{chuck2020hypothesis,chuck2023granger} and action counterfactuals \citep{choi2023unsupervised}. A model of the environment that encodes partial invariances can be useful for planning, and recent work attempts to do this using causal structure learning \citep{ding2022generalizing,wang2022causal,rahaman2022neural}. By jointly learning a model of the environment and inferring actual causes, this work offers applications to all of these directions. We include an extended discussion of future directions and specific applications of actual causality to RL in Appendix~\ref{connections}.

\section{Proof of Theorem 4.6}
\label{appendix-proofs}

\setcounter{theorem}{5}
\begin{theorem}
    For an observed state $\mathbf{s^*}$ and binary-subset pair $(\mathcal{B}(\mathbf{Y}), \mathcal{I}(\mathbf{Y}))$, if the event $\mathbf{X} = \mathbf{x}$ satisfies AC1, functional sufficiency, and contrastive necessity for the outcome $\mathbf{Y} = \mathbf{y}$ under witness set $\mathbf{W}$, and $(\mathcal{B}(\mathbf{Y}), \mathcal{I}(\mathbf{Y}))$ is minimal, then there cannot exist a smaller set of variables $\mathbf{Z} \subset \mathbf{X}$ such that $\mathbf{Z} = \mathbf{z}$ also satisfies AC1, functional sufficiency, and contrastive necessity for $\mathbf{Y} = \mathbf{y}$, where $\mathbf{z}$ denotes the values of variables $\mathbf{Z}$ in the assignment $\mathbf{x}$.
\end{theorem}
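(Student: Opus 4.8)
The plan is to argue by contradiction, using the global cost function of Equation \ref{IVP minimality} as the central device: I will show that the existence of a qualifying proper subset $\mathbf{Z} \subset \mathbf{X}$ would let me construct an alternative binary-subset pair of strictly smaller cost, contradicting the assumed minimality of $(\mathcal{B}(\mathbf{Y}), \mathcal{I}(\mathbf{Y}))$. Concretely, suppose toward a contradiction that some $\mathbf{Z} \subset \mathbf{X}$ satisfies AC1, functional sufficiency, and contrastive necessity for $\mathbf{Y} = \mathbf{y}$ at the observed state $\mathbf{s}^*$. In the given pair, $\mathbf{s}^*$ lies in the cell $\mathcal{I}_\mathbf{X} = \mbox{\textit{IVP}}(\mathbf{X}; \mathbf{Y})$ whose binary vector $\mathbf{b}_\mathbf{X}$ has exactly $|\mathbf{X}|$ ones, so the per-state contribution of $\mathbf{s}^*$ to $\mathcal{L}$ is $|\mathbf{b}_\mathbf{X}| = |\mathbf{X}|$, since $\mathcal{L} = \sum_k |\mathcal{I}_k||\mathbf{b}_k|$ decomposes as a sum over states of the weight of the cell to which each state is assigned.

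First I would make precise what functional sufficiency of $\mathbf{Z}$ buys us. By Definition \ref{defn-functional-sufficiency}, it asserts that $\mathbf{s}^*$ can belong to an invariant preimage $\mbox{\textit{IVP}}(\mathbf{Z}; \mathbf{Y})$, i.e. the outcome at $\mathbf{s}^*$ is invariant to the variables $\mathbf{S} - \mathbf{Z}$ across the states grouped with it under the value $\mathbf{s}^*_\mathbf{Z}$ (Equation \ref{IVP definition}). Next I would define the alternative pair $(\mathcal{B}', \mathcal{I}')$ to be identical to $(\mathcal{B}(\mathbf{Y}), \mathcal{I}(\mathbf{Y}))$ except that $\mathbf{s}^*$ is removed from the cell $\mathcal{I}_\mathbf{X}$ and inserted into the cell $\mathcal{I}_\mathbf{Z}$ labelled by the binary vector $\mathbf{b}_\mathbf{Z}$ having $|\mathbf{Z}|$ ones. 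The role of AC1 here is simply to guarantee that $\mathbf{s}^*_\mathbf{Z} = \mathbf{z}$ and $\mathbf{s}^*_\mathbf{Y} = \mathbf{y}$, so that this reassignment is consistent with the observed values.

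The bulk of the work is verifying that $(\mathcal{B}', \mathcal{I}')$ is still a valid binary-subset pair, i.e. that it satisfies the invariance and necessity properties \hyperref[property:invariance]{P1} and \hyperref[property:necessity]{P2}. For necessity, the membership of $\mathbf{s}^*$ in the $\mathbf{Z}$-cell is immediately licensed by the assumed contrastive necessity of $\mathbf{Z} = \mathbf{z}$ (Definition \ref{defn-necessity}). For invariance, I would observe that deleting $\mathbf{s}^*$ from $\mathcal{I}_\mathbf{X}$ cannot break Equation \ref{IVP definition} for the remaining states, since invariance is universally quantified over pairs of states in a cell and removing an element only shrinks that quantification; dually, inserting $\mathbf{s}^*$ into the $\mathbf{Z}$-cell preserves Equation \ref{IVP definition} precisely because the functional sufficiency of $\mathbf{Z}$ guarantees the outcome is invariant to $\mathbf{S} - \mathbf{Z}$ at $\mathbf{s}^*$. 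This invariance-preservation step is where I expect the main difficulty to lie, because the invariant-preimage condition is self-referential: one must ensure the states already pooled in the $\mathbf{Z}$-cell are mutually compatible with $\mathbf{s}^*$ under the value $\mathbf{s}^*_\mathbf{Z}$, which is exactly the content of functional sufficiency being evaluated against the partition rather than merely pointwise. I would discharge this by taking the $\mathbf{Z}$-cell in $\mathcal{I}'$ to coincide with the invariant preimage witnessing $\mathbf{Z}$'s functional sufficiency (in the extreme, a singleton containing only $\mathbf{s}^*$, which satisfies Equation \ref{IVP definition} vacuously), so that invariance holds by construction.

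Finally, I would compare costs. Since $(\mathcal{B}', \mathcal{I}')$ differs from the original only in the placement of $\mathbf{s}^*$, the change in $\mathcal{L}$ equals $|\mathbf{b}_\mathbf{Z}| - |\mathbf{b}_\mathbf{X}| = |\mathbf{Z}| - |\mathbf{X}|$, which is strictly negative because $\mathbf{Z} \subset \mathbf{X}$. Thus $(\mathcal{B}', \mathcal{I}')$ is a valid binary-subset pair of strictly smaller cost than $(\mathcal{B}(\mathbf{Y}), \mathcal{I}(\mathbf{Y}))$, contradicting the hypothesis that the latter attains the minimum in Equation \ref{IVP minimality}. Hence no such $\mathbf{Z}$ can exist, which is exactly the claim.
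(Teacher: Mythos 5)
Your proof is correct and follows essentially the same route as the paper's: both argue that relocating $\mathbf{s^*}$ from the $\mathbf{X}$-cell to the $\mathbf{Z}$-cell would yield a valid binary-subset pair whose cost drops by $|\mathbf{Z}|-|\mathbf{X}|<0$, contradicting global minimality, with the same case split on whether the relocation preserves invariance and necessity. The only difference is cosmetic: the paper additionally rules out relocation to \emph{larger} cells to conclude $|\mathbf{b}_k|=|\mathbf{X}|$, a direction not needed for the stated claim, while you go straight to the smaller-subset contradiction.
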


\begin{proof}
    Say that $\mathbf{s^*}$ lies within the invariant preimage $\mathcal{I}_k$. By definition the invariant preimage satisfies Equation \ref{IVP definition}, and therefore for every state $\mathbf{s} \in \mathcal{I}_k$ we will have:
    
    \begin{equation*}
        f_\mathbf{Y}(\mathbf{s_X}, \mathbf{s_C}, \mathbf{u}) = f_\mathbf{Y}(\mathbf{s_X}, \mathbf{s^\prime_C}, \mathbf{u}) \quad \forall \: \mathbf{s^\prime} \in \mathcal{I}_k \text{ where } \mathbf{s^\prime_X} = \mathbf{s_X}.   
    \end{equation*}

    By definition, its corresponding binary vector $b_{k}$ will have one element equal to one for every variable in $\mathbf{X}$. We first show that $\mathbf{s^*}$ could not have been mapped to an invariant preimage with another binary vector $b_j$ such that $|b_j| > |b_k|$.

    Say that $\mathbf{s^*}$ could have instead been mapped to some binary vector $b_j$ with a longer length, i.e. $|b_j| > |b_k|$. By definition there is only one invariant preimage corresponding to a given subset of exogenous variables, and only one binary vector corresponding to each invariant preimage, so $\mathbf{s^*}$ would have to be moved to the invariant preimage $\mathcal{I}_j$ corresponding to the binary vector $b_j$.

    Let $L_1$ denote the cost of the given binary-subset pair, i.e. $L_1 = \sum_{k=1}^K |\mathcal{I}_k||b_k| = |\mathcal{I}_j||b_j| + |\mathcal{I}_k||b_k| + (|\mathcal{I}_1||b_1| + \cdots |\mathcal{I}_K||b_K|)$
    
    If the state $\mathbf{s}$ was moved from $\mathcal{I}_k$ to $\mathcal{I}_j$ the resultant cost would be $L_2 = (|\mathcal{I}_j| + 1)|b_j| + (|\mathcal{I}_k| - 1)|b_k| + (|\mathcal{I}_1||b_1| + \cdots |\mathcal{I}_K||b_K|) = |b_j| - |b_k| + L_1$.
    
    We know that $|b_j| > |b_k|$ and hence $L_2 > L_1$. Therefore, assigning $\mathbf{s^*}$ to the binary vector $|b_k|$ will have lower cost, and by extending this to all states $\mathbf{s} \in \mathcal{I}_k$ we will have $|b_k| \le |\mathbf{X}|$.
    
    Now say that $\mathbf{s^*}$ is instead mapped to some binary vector $b_j$ with a shorter length, i.e. $|b_j| < |b_k|$, and its corresponding invariant preimage is $\mathcal{I}_j$. If moving this state to $\mathcal{I}_j$ leads to Equation \ref{IVP definition} not being satisfied or necessity (Definition \ref{defn-necessity}) not being satisfied, the resultant set of invariant preimages is not valid and therefore $\mathbf{Z} = \mathbf{z}$ cannot be a functional actual cause. However, if the aforementioned properties are satisfied then since $\mathbf{s^*}$ is not in $\mbox{\textit{IVP}}(\mathbf{X}; \mathbf{Y})$, either $\mathbf{X} = \mathbf{x}$ could not have been functionally sufficient for $\mathbf{Y} = \mathbf{y}$ or the given binary-subset pair was not minimal, both of which are contradictions. Hence we must have $|b_k| \ge |\mathbf{X}|$.

    Since $|b_k| \le |\mathbf{X}|$ and $|b_k| \ge |\mathbf{X}|$, we have that $|b_k| = |\mathbf{X}|$. Therefore, any actual cause that satisfies functional sufficiency and contrastive necessity with a minimal binary-subset pair must have cardinality $|\mathbf{X}|$. We note that this is true regardless of which witness set is used to determine if $\mathbf{Z} = \mathbf{z}$ was necessary since functional sufficiency would be violated either way. As a result, $\mathbf{Z} = \mathbf{z}$ cannot be a functional actual cause. 
    
\end{proof}
\section{Direct Sufficiency and Proof of Theorem \ref{thm-fac-direct-sufficiency}}
\label{appendix: direct-sufficiency}

\cite{beckers2021actual} provides a definition of direct sufficiency, which imposes a stronger constraint by requiring that the outcome is always $\mathbf{Y} = \mathbf{y}$ when the intervention $\mathbf{X} \leftarrow \mathbf{x}$ is applied, regardless of the values of all other variables $\mathbf{C}$. In other words, the outcome should be robust to all possible interventions on the remaining variables $\mathbf{C} = \mathbf{S} - \mathbf{X}$. 

\begin{definition}[Direct Sufficiency]
\label{defn-direct-sufficiency}
    For an observed state $\mathbf{s}^*$, the event $\mathbf{X} = \mathbf{x}$ is directly sufficient for the outcome $\mathbf{Y} = \mathbf{y}$ if for all $\mathbf{c} \in \mathcal{R}(\mathbf{C})$ we have that $(M, \mathbf{u}) \vDash [\mathbf{X} \leftarrow \mathbf{x}, \mathbf{C} \leftarrow \mathbf{c}] \mathbf{Y} = \mathbf{y}$.
\end{definition}

Invariant preimages are defined such that the invariance to all remaining variables $\mathbf{C}$ only needs to hold for a subset of possible assignments of $\mathbf{C}$, given by the states in $\mbox{\textit{IVP}}(\mathbf{X}; \mathbf{Y})$, while direct sufficiency requires that the invariance holds for all assignments of $\mathbf{C}$. In other words, while functional sufficiency is a statement about context-specific invariance, direct sufficiency is a statement about complete invariance with respect to $\mathbf C$. To illustrate this, we define a more strict notion of invariance as follows:

\begin{definition}[Complete Invariance]
    An invariant preimage $\mbox{\textit{IVP}}(\mathbf{X}; \mathbf{Y})$ shows \textit{complete invariance} with respect to a state $\mathbf{s}^*$ if all states $\mathbf{s}^\prime$ where $\mathbf{s^\prime_X} = \mathbf{s^*_X}$ also lie within $\mbox{\textit{IVP}}(\mathbf{X}; \mathbf{Y})$.
\end{definition}

It can be verified that if complete invariance holds for one state $\mathbf{s} \in \mbox{\textit{IVP}}(\mathbf{X}; \mathbf{Y})$ then it should hold for all other states in $\mbox{\textit{IVP}}(\mathbf{X}; \mathbf{Y})$. We now complete the proof of Theorem \ref{thm-fac-direct-sufficiency}.

\begin{theorem}
    For an observed state $\mathbf{s}^*$, functional sufficiency and direct sufficiency are equivalent when the invariant preimage $\mbox{\textit{IVP}}(\mathbf{X}; \mathbf{Y})$ that contains $\mathbf{s}^*$ shows complete invariance, i.e. all other states $\mathbf{s}^\prime$ where $\mathbf{s^\prime_X} = \mathbf{s^*_X}$ also lie within $\mbox{\textit{IVP}}(\mathbf{X}; \mathbf{Y})$.
\end{theorem}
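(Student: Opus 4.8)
The plan is to prove the biconditional by treating each implication separately, noting that both functional and direct sufficiency are ultimately statements about the value of $f_\mathbf{Y}$, anchored by the fact that the observed outcome satisfies $f_\mathbf{Y}(\mathbf{s}^*_\mathbf{X}, \mathbf{s}^*_\mathbf{C}, \mathbf{u}) = \mathbf{y}$ (by AC1, to which weak sufficiency reduces). Throughout I write $\mathbf{x} = \mathbf{s}^*_\mathbf{X}$ and recall that by hypothesis $\mathbf{s}^* \in \mbox{\textit{IVP}}(\mathbf{X}; \mathbf{Y})$ and that this preimage exhibits complete invariance.

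For the direction direct $\Rightarrow$ functional, I would suppose $\mathbf{X} = \mathbf{x}$ is directly sufficient (Definition \ref{defn-direct-sufficiency}), so $f_\mathbf{Y}(\mathbf{x}, \mathbf{c}, \mathbf{u}) = \mathbf{y}$ for every $\mathbf{c} \in \mathcal{R}(\mathbf{C})$. Weak sufficiency (Definition \ref{defn-weak-sufficiency-witness}) for any witness set $\mathbf{W}$ evaluates $f_\mathbf{Y}$ at one particular assignment of $\mathbf{C}$, namely the one induced by the intervention $[\mathbf{X} \leftarrow \mathbf{x}, \mathbf{W} \leftarrow \mathbf{w}^*]$ in the acyclic model, and this assignment is covered by the universal quantifier, so direct sufficiency immediately yields weak sufficiency. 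Since the preimage membership $\mathbf{s}^* \in \mbox{\textit{IVP}}(\mathbf{X}; \mathbf{Y})$ is given, both clauses of functional sufficiency (Definition \ref{defn-functional-sufficiency}) hold. This direction does not actually use complete invariance; it is the reverse implication that requires it.

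For the direction functional $\Rightarrow$ direct, I would suppose $\mathbf{X} = \mathbf{x}$ is functionally sufficient, fix an arbitrary $\mathbf{c} \in \mathcal{R}(\mathbf{C})$, and form the state $\mathbf{s}^\prime$ with $\mathbf{s}^\prime_\mathbf{X} = \mathbf{x}$ and $\mathbf{s}^\prime_\mathbf{C} = \mathbf{c}$. Because the preimage shows complete invariance and $\mathbf{s}^\prime_\mathbf{X} = \mathbf{s}^*_\mathbf{X}$, the state $\mathbf{s}^\prime$ must also lie in $\mbox{\textit{IVP}}(\mathbf{X}; \mathbf{Y})$. Now both $\mathbf{s}^*$ and $\mathbf{s}^\prime$ belong to the same preimage and agree on $\mathbf{X}$, so the invariance property (Equation \ref{IVP definition}) applies and gives $f_\mathbf{Y}(\mathbf{x}, \mathbf{c}, \mathbf{u}) = f_\mathbf{Y}(\mathbf{s}^*_\mathbf{X}, \mathbf{s}^*_\mathbf{C}, \mathbf{u}) = \mathbf{y}$. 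As $\mathbf{c}$ was arbitrary, direct sufficiency follows.

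The crux, and the only place where complete invariance is essential, is the step that places the arbitrary counterfactual state $\mathbf{s}^\prime$ inside the preimage. Definition \ref{defn-ivp-invariance} only equates outcomes across states that both already lie in $\mbox{\textit{IVP}}(\mathbf{X}; \mathbf{Y})$, so a generic invariant preimage guarantees invariance only over the subset of $\mathbf{C}$-assignments that keep the state in the preimage. Complete invariance is precisely the condition that inflates this subset to all of $\mathcal{R}(\mathbf{C})$, thereby matching the universal quantifier in direct sufficiency; I expect verifying this alignment of quantifiers to be the main thing to get right, while the remaining manipulations are immediate from the definitions.
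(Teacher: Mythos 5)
Your proposal is correct and follows essentially the same two-implication structure as the paper's proof: the functional-to-direct direction uses complete invariance to place every counterfactual state $(\mathbf{s}^*_\mathbf{X}, \mathbf{c})$ inside the preimage and then invokes the invariance property, while the direct-to-functional direction reduces to the fact that direct sufficiency implies weak sufficiency (which the paper cites from Beckers' Proposition 2 and you re-derive inline from the universal quantifier). Your explicit anchoring of the common outcome value to $\mathbf{y}$ via AC1/weak sufficiency, and your observation that complete invariance is only genuinely needed in one direction, are slightly more careful than the paper's wording but do not change the argument.
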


\begin{proof}

We prove Theorem~\ref{thm-fac-direct-sufficiency} by first demonstrating that a functionally sufficient actual cause under complete invariance implies a directly sufficient one, then that a directly sufficient actual cause implies a functionally sufficient one with complete invariance. 

If the event $\mathbf{X} = \mathbf{x}$ were functionally sufficient for the outcome $\mathbf{Y} = \mathbf{y}$ at the observed state $\mathbf{s}^*$, and the states corresponding to the assignment $\mathbf{X} = \mathbf{s^*_X}$ and all possible assignments of $\mathbf{C}$ were present within $\mbox{\textit{IVP}}(\mathbf{X}; \mathbf{Y})$, then any arbitrary intervention on $\mathbf{C}$ along with the intervention $\mathbf{X} \leftarrow \mathbf{x}$ would result in a state in $\mbox{\textit{IVP}}(\mathbf{X}; \mathbf{Y})$. By the definition of invariant preimage, the outcome would remain the same for all of these states. Hence $\mathbf{X} = \mathbf{x}$ would also be directly sufficient for the outcome $\mathbf{Y} = \mathbf{y}$.

Similarly, if $\mathbf{X} = \mathbf{x}$ were directly sufficient for the outcome $\mathbf{Y} = \mathbf{y}$ the outcome would need to remain the same under all possible assignments of the remaining variables $\mathbf{C}$. If the invariant preimage $\mbox{\textit{IVP}}(\mathbf{X}; \mathbf{Y})$ showed complete invariance with respect to the state $\mathbf{s}^*$, then all of the states corresponding to the assignment $\mathbf{X} = \mathbf{s^*_X}$ and all possible assignments of $\mathbf{C}$ would be present within $\mbox{\textit{IVP}}(\mathbf{X}; \mathbf{Y})$. One of these states is the observed state $\mathbf{s}^*$. Since direct sufficiency implies weak sufficiency (Proposition 2, \citet{beckers2021actual}) and $\mathbf{s}^*$ is in $\mbox{\textit{IVP}}(\mathbf{X}; \mathbf{Y})$, the event $\mathbf{X} = \mathbf{x}$ is functionally sufficient for the outcome $\mathbf{Y} = \mathbf{y}$.

\end{proof}

By explicitly encoding context-specific independence, functional sufficiency captures the spectrum between weak sufficiency (no invariance) and direct sufficiency (complete invariance). The above statements are expressed without the witness set for simplicity, but the same holds with a witness since context-specific independence and the assignment of states to invariant preimages are not affected by the choice of witness.

\section{Additional Examples}
\label{addn_ex}
In this section, we evaluate the Modified HP~\citep{halpern2016actual} and direct sufficiency~\citep{beckers2021actual} definitions on a few discrete examples that illustrate context-specific independence and illustrate the limitations of these definitions.
\begin{wraptable}{r}{6cm}
\vspace{-0.2cm}
\begin{small}
\begin{center}
\begin{tabular}{|l|c|r|}
\toprule
Pusher & Obstacle & Final\\
\midrule
499& not 501& 501\\
500& not 501 or 502& 502\\
499& 501& 500\\
500& 501& 500\\
500& 502& 501\\
not 499 or 500& all & 500\\
\bottomrule
\end{tabular}
\caption{\textbf{Discrete 1D Pusher} environment definition, mapping states to outcomes}
\label{Discpusherenv}
\vspace{-0.5cm}
\end{center}
\end{small}
\end{wraptable}
\citet{beckers2021actual} also introduces the strong sufficiency definition which operates on extended causal chains, and in future work we plan to extend FAC to this setting. 
However, in the following examples, strong sufficiency acts the same way as direct sufficiency since they all involve causes that are direct parents of the outcome.

\subsection{Discrete 1D Pusher}

This example is analogous to the 1D pusher domain in Example~\ref{block-pushing-example} but uses discrete states. We do not use a continuous state space domain because Modified HP and direct sufficiency do not scale to continuous state spaces in their present form.

The discrete 1D pusher domain contains three causal variables: pusher, obstacle, and final block position. The pusher and obstacle positions can vary in the set $\{0,\hdots,1000\}$, and the block position in range $\{500, 501, 502\}$. If the agent is within $\{499, 500\}$, the final block position will be $500 + (\text{agent position} - 500 + 2)$, unless the obstacle obstructs the block. Otherwise, the final block position remains constant at $500$. We illustrate the dynamics of this environment in Table~\ref{Discpusherenv}.

Suppose we are in the state corresponding to the first row of the table, where the pusher position is $499$, the obstacle position is $200$ and the final block position is $501$, and we want to determine whether the obstacle \textit{alone} is an actual cause. The pusher, obstacle, and final block positions are denoted as $P$, $X$, and $Y$ respectively, and there are no exogenous variables $U$ in this example. We can verify that the obstacle would be an actual cause according to the direct sufficiency and Modified HP definitions. \\

\textbf{Modified HP}:
\begin{enumerate}
    \item[] \textbf{AC1}: Satisfied since this is the observed state.
    \item[] \textbf{AC2a}: Satisfied for $M$ as the environment since $(M, \mathbf{u}) \vDash [X \leftarrow 501] Y \neq 501$ is true, as can be seen in the third row of the table. Note in this case the witness set is empty. 
    \item[] \textbf{AC2b}: Satisfied for $M$ as the environment since $(M, \mathbf{u}) \vDash [X \leftarrow 200] Y = 501$ is true, as can be seen in the first row of the table. 
    \item[] \textbf{AC3}: There is no strict subset of $X$ that could be an actual cause since the empty set does not satisfy AC2a.
\end{enumerate}

\textbf{Direct Sufficiency and Necessity}:
\begin{enumerate}
    \item[] \textbf{AC1}: Satisfied since this is the observed state.
    \item[] \textbf{AC2a}: Satisfied for $M$ as the environment since $(M, \mathbf{u}) \vDash [X \leftarrow 501,P\leftarrow 499] Y \neq 501$ is true, as can be seen in the third row of the table. This satisfies AC2a because this counterexample demonstrates that there is a counterfactual assignment that is not sufficient. In this case the pusher is in the witness set and set to its actual value (and is the same witness for sufficiency).
    \item[] \textbf{AC2b}: Satisfied for $M$ as the environment since $(M, \mathbf{u}) \vDash [X \leftarrow 200,P\leftarrow 499] Y = 501$ is true, as can be seen in the first row of the table. Note that the $C$ term for direct sufficiency $[X \leftarrow x,W\leftarrow w^{*}, C\leftarrow c]$ is satisfied because the set of variables $C$ in this case is empty. 
    \item[] \textbf{AC3}: There is no strict subset of $X$ that could be an actual cause since the empty set does not satisfy AC2a.
\end{enumerate}

Thus, both definitions imply that the obstacle alone is an actual cause \textit{in every state where the pusher pushes the block}, even though in $99.9\%$ of these cases, the outcome is contextually independent of the obstacle. While this conclusion is not technically wrong, a machine learning agent or downstream task utilizing the returned actual causes cannot take advantage of the context-specific independence. 

\begin{wraptable}{r}{0.4\textwidth}
\vspace{-.2cm}
\begin{small}
\begin{center}
\begin{tabular}{@{}lccr@{}}
\toprule
T1 & T2 & Bird & Launch\\
\midrule
$1$& $1$& not 500& $1$\\
$1$& $0$& not 500& $1$\\
$0$& $1$& not 500& $1$\\
$0$& $0$& not 500& $0$\\

$1$& $1$& 500 & $0$\\
$1$& $0$& 500 & $0$\\
$0$& $1$& 500 & $0$\\
$0$& $0$& 500 & $0$\\
\bottomrule
\end{tabular}
\caption{\textbf{Rocket Ship} environment}
\label{rocketshipenv}
\end{center}
\end{small}
\end{wraptable}
\subsection{Rocket Ship}
A rocket ship has two thrusters, represented by causal variables $T1$ and $T2$, which can turn on or off. If either one of the thrusters fires, the rocket ship will launch ($Y=1$). If neither thruster fires, the rocket ship will not launch ($Y=0$). There is also a bird ($X$) that is flying around the field, its position taking on values in $\{0,\hdots,1000\}$. There is one particular position, $X= 500$, where the bird knocks out the wiring and therefore causes the rocket to fail to launch, i.e. $Y=0$. 

Suppose we are in the state corresponding to the first row of the table, where $T1=1, T2=1$, the bird position is $X=200$, and $Y=1$. We want to determine if the bird position alone is an actual cause. Note that there are no exogenous variables $U$ in this example. We can verify that the bird position alone would be an actual cause according to the direct sufficiency and Modified HP definitions. \\

\textbf{Modified HP}:
\begin{enumerate}
    \item[] \textbf{AC1}: Satisfied since this is the observed state.
    \item[] \textbf{AC2a}: Satisfied for $M$ as the environment since $(M, \mathbf{u}) \vDash [X \leftarrow 500] \mathbf{Y} \neq 1$ is true, as can be seen in the fifth row of the table. Note in this case the witness set is empty.
    \item[] \textbf{AC2b}: Satisfied for $M$ as the environment since $(M, \mathbf{u}) \vDash [X \leftarrow 200] \mathbf Y = 1$ is true, as can be seen in the first row of the table. 
    \item[] \textbf{AC3}: There is no strict subset of the bird position alone whose assignment is an actual cause, since the empty set does not satisfy AC2a.
\end{enumerate}

\textbf{Direct Sufficiency and Necessity}:
\begin{enumerate}
    \item[] \textbf{AC1}: Satisfied since this is the observed state.
    \item[] \textbf{AC2a}: Satisfied for $M$ as the environment since $(M, \mathbf{u}) \vDash [X \leftarrow 500,T1\leftarrow 1] Y \neq 1$ is true, as can be seen in the fifth row of the table. This counterexample demonstrates that there is a counterfactual assignment of $X$ that is not sufficient. In this case, the first thruster is in the witness set.
    \item[] \textbf{AC2b}: Satisfied for $M$ as the environment since $(M, \mathbf{u}) \vDash [X \leftarrow 200, T1\leftarrow 1, T2\leftarrow 1] Y = 501$ is true and $(M, \mathbf{u}) \vDash [X \leftarrow 200, T1\leftarrow 1, T2\leftarrow 0] Y = 501$ is true, as can be seen in the first and second rows of the table. Note that the $C$ term for direct sufficiency is $T2$ since this is the only unassigned variable, and we vary over all its possible values ($1$ and $0$), and the outcome is still sufficient. As with AC2a, the first thruster is in the witness set.
    \item[] \textbf{AC3}: There is no strict subset of the bird position alone whose assignment is an actual cause, since the empty set does not satisfy AC2a.
\end{enumerate}

With both definitions, the bird position alone is an actual cause of the rocket launch when either thruster fires. If an agent encoded this form of invariance, it would assume that the bird was the reason the rocket launched, despite the outcome being dependent on the bird's position in only $0.1\%$ of states. In contrast, the \textit{functional actual cause} definition (Definition \ref{fac-defn}) would assign the values of (T1,T2) as a minimal actual cause in the observed state.

\subsection{Offensive Chatbot}

\begin{wraptable}{r}{0.3\textwidth}
\vspace{-0.1cm}
\begin{small}
\begin{center}
\begin{tabular}{@{}lccr@{}}
\toprule
User & Engineer & Bot\\
\midrule
$\textbf u$& not 500& $\textbf u$\\
$\textbf u$& 500& $0$\\
\bottomrule
\end{tabular}
\end{center}
\end{small}
\end{wraptable}

Spatial proximity can overlap with context-specific independence in many cases; for example, an object far away from an agent will often not influence its motion. However, this overlap is by no means complete, and spatial proximity or distances between counterfactual scenarios do not serve as a general proxy for context-specific independence. In this section, we provide an example where spatial proximity has no bearing on the outcome. 

A chatbot $C$ is designed to repeat (parrot) what the user $U$ inputs, which we represent by having $U$ take on values between $1-1000$ and assigning $C=U$. Suppose there is a software engineer $E$ who can do a variety of actions between $1-1000$, most of which have no effect. However, if they take the action $E=500$, then this will cause the chatbot to crash, and output $C=0$ regardless of what the user inputs. 

Suppose we are in the state: User at $500$, Engineer at $200$, so the chatbot is $500$, corresponding to the first row of the table. We want to determine if the Engineer alone is an actual cause. \\

\textbf{Modified HP}:
\begin{enumerate}
    \item[] \textbf{AC1}: Satisfied since this is the observed state.
    \item[] \textbf{AC2a}: Satisfied for $M$ as the environment since $(M, \mathbf{u}) \vDash [E \leftarrow 500] \mathbf{C} \neq 500$ is true, as can be seen in the second row of the table. Note in this case the witness set is empty.
    \item[] \textbf{AC2b}: Satisfied for $M$ as the environment since $(M, \mathbf{u}) \vDash [E \leftarrow 200] \mathbf C = 500$ is true, as can be seen in the first row of the table. 
    \item[] \textbf{AC3}: There is no strict subset of the engineer alone which is an actual cause since that would imply the empty set, which does not satisfy AC2a.
\end{enumerate}

\textbf{Direct Sufficiency and Necessity}:
\begin{enumerate}
    \item[] \textbf{AC1}: Satisfied since this is the observed state.
    \item[] \textbf{AC2a}: Satisfied for $M$ as the environment since $(M, \mathbf{u}) \vDash [E \leftarrow 500,U\leftarrow 500] Y \neq 500$ is true, as can be seen in the second row of the table. This satisfies AC2a because this counterexample demonstrates that there is a counterfactual assignment that is not sufficient. In this case, the User (U) is in the witness set, and matches.
    \item[] \textbf{AC2b}: Satisfied for $M$ as the environment since $(M, \mathbf{u}) \vDash [E \leftarrow 200,U\leftarrow 500] Y = 500$ is sufficient, and by putting $U$ in the witness set ensures that there are no other counterfactual assignments. 
    \item[] \textbf{AC3}: There is no strict subset of the engineer among which is an actual cause since that would imply the empty set, which does not satisfy AC2a.
\end{enumerate}

Thus the engineer is independently an actual cause of the output $U=500$ under both definitions. Now suppose that the output is offensive. This suggests that the Engineer is solely responsible for this output \textit{because they did not crash the chatbot}. This is a very strong and counterintuitive enforcement of responsibility. Furthermore, suppose that the engineer is an RL agent instead. If there was any chance of offensive output, the agent would always destroy the output since it is a sole actual cause, even though the user could have output any value and the chatbot parrots whatever the user inputs. 

This example demonstrates how the proximity of objects is not the key factor in assessing normality in actual causation. Instead, what matters is the size or likelihood of the event set that produces varying outputs. In this case, the engineer could have performed any number of actions other than crashing the bot, and would not have changed the result.
\section{Soft Necessity and Sufficiency}
\label{appendix: alpha-splitting}

This section describes the \textit{soft necessity and sufficiency} conditions used in Algorithm~\ref{exhaustive-search}. These conditions, denoted $\alpha_1$-necessity and $\alpha_0$-sufficiency, can be incorporated into FAC by replacing the contrastive necessity and weak sufficiency conditions. Unlike the invariant preimage and global minimality criterion discussed in the main paper, these properties only provide rough measures of normality through statistical requirements on the number of counterfactual outcomes. We plan to further investigate these properties in future work.

The intuition for $\alpha_1$-necessity and $\alpha_0$-sufficiency is as follows. By default, contrastive necessity requires there is only one alternative assignment of $\mathbf X$ which is not sufficient for the outcome. We instead require that at least $\alpha_1$ percent of the assignments of $\mathbf X$ are not sufficient for the outcome. Enforcing this would mean that for an event to be considered an actual cause, deviating from the observed event would need to cause a substantial number of possible effects. 

Similarly, weak sufficiency does not place hard requirements on the assignments of the non-causes, and any value of the non-causes can result in a change in outcome. However, we instead require that at least $(1-\alpha_0)$ percent of assignments of $\mathbf X$ produce the same outcome. In other words, the number of assignments of $\mathbf X$ for which the outcome is not invariant to the remaining variables does not exceed $\alpha_0 \cdot|\mathcal R(\mathbf X)|$. In general, these soft criteria turn out to be stricter than the Modified HP definition, though we later show that the sufficiency and necessity conditions from Modified HP can be recovered for specific values of the hyperparameters $\alpha_0$ and $\alpha_1$. We formalize these criteria below.

\begin{definition}[$\alpha_1$-necessity]
    For an observed state $\mathbf{s}^*$, the event $\mathbf{X} = \mathbf{x}$ is $\alpha_1$-necessary for the outcome $\mathbf{Y} = \mathbf{y}$ under the witness set $\mathbf{W}$ if the fraction of values within its support for which the resultant outcome deviates from the observed outcome $\mathbf{y}$ is at least $\alpha_1 \in [0,1]$.

    \begin{equation} \label{alpha1split}
        \frac{1}{|\mathcal{R}(\mathbf{X})|} \sum_{\mathbf{x^\prime} \in \mathcal{R}(\mathbf{X})} \mathbbm{1}((M, \mathbf{u}) \vDash [\mathbf{X} \leftarrow \mathbf{x^\prime}, \mathbf{W} \leftarrow \mathbf{s^*_W}] \mathbf{Y} \neq \mathbf{y}) \ge \alpha_1
    \end{equation}
\end{definition}

When $\alpha_1 = \frac{1}{|\mathcal{R}(\mathbf{X})|}$, the $\alpha_1$-necessity condition simplifies to the definition of contrastive necessity (Definition \ref{defn-necessity}).

$$\frac{1}{|\mathcal{R}(\mathbf{X})|} \sum_{\mathbf{x^\prime} \in \mathcal{R}(\mathbf{X})} \mathbbm{1}((M, \mathbf{u}) \vDash [\mathbf{X} \leftarrow \mathbf{x^\prime}, \mathbf{W} \leftarrow \mathbf{s^*_W}] \mathbf{Y} \neq \mathbf{y}) \ge \frac{1}{|\mathcal{R}(\mathbf{X})|}$$
$$\implies \sum_{\mathbf{x^\prime} \in \mathcal{R}(\mathbf{X})} \mathbbm{1}((M, \mathbf{u}) \vDash [\mathbf{X} \leftarrow \mathbf{x^\prime}, \mathbf{W} \leftarrow \mathbf{s^*_W}] \mathbf{Y} \neq \mathbf{y}) \ge 1$$
$$\implies \exists \mathbf{x^\prime} \in \mathcal{R}(\mathbf{X}) \quad (M, \mathbf{u}) \vDash [\mathbf{X} \leftarrow \mathbf{x^\prime}, \mathbf{W} \leftarrow \mathbf{s^*_W}] \mathbf{Y} \neq \mathbf{y}$$

By increasing the value of the ratio hyperparameter $\alpha_1$, the event is declared as necessary if a change in $\mathbf{X}$ results in a different outcome from the observed outcome in a greater percentage of states, instead of simply verifying that one such state exists. Doing so ensures that not only is the event necessary, but that a change in outcome is a more \textit{normal} or common occurrence, and not just an edge case.

\begin{definition}[$\alpha_0$-sufficiency]
    For an observed state $\mathbf{s}^*$, the event $\mathbf{X} = \mathbf{x}$ is $\alpha_0$-sufficient for the outcome $\mathbf{Y} = \mathbf{y}$ under the witness set $\mathbf{W}$ if the fraction of values within the support of its complement $\mathbf{C} = \mathbf{S} - (\mathbf{W} \cup \mathbf{X})$ for which the resultant outcome deviates from the observed outcome $\mathbf{y}$ is less than $ \alpha_0 \in [0,1]$.
    
    \begin{equation} \label{alpha0split}
         \frac{1}{|\mathcal{R}(\mathbf{C})|} \sum_{\mathbf{c} \in \mathcal{R}(\mathbf{C})} \mathbbm{1}((M, \mathbf{u}) \vDash [\mathbf{X} \leftarrow \mathbf{x}, \mathbf{C} \leftarrow \mathbf{c}, \mathbf{W} \leftarrow \mathbf{s^*_W}] \mathbf{Y} \neq \mathbf{y}) \le \alpha_0
    \end{equation}
    
\end{definition}

When $\alpha_0 = 0$, the $\alpha_0$-sufficiency condition simplifies to the definition of direct sufficiency (Definition \ref{defn-direct-sufficiency}). 

$$\frac{1}{|\mathcal{R}(\mathbf{C})|} \sum_{\mathbf{c} \in \mathcal{R}(\mathbf{C})} \mathbbm{1}((M, \mathbf{u}) \vDash [\mathbf{X} \leftarrow \mathbf{x}, \mathbf{C} \leftarrow \mathbf{c}, \mathbf{W} \leftarrow \mathbf{s^*_W}] \mathbf{Y} \neq \mathbf{y}) = 0$$
$$\implies (M, \mathbf{u}) \vDash [\mathbf{X} \leftarrow \mathbf{x}, \mathbf{C} \leftarrow \mathbf{c}, \mathbf{W} \leftarrow \mathbf{s^*_W}] \mathbf{Y} = \mathbf{y} \quad \forall \mathbf{c} \in \mathcal{R}(\mathbf{C})$$

When $\alpha_0 = 1 - \frac{1}{|\mathcal{R}(\mathbf{C})|}$, we get: 

$$\frac{1}{|\mathcal{R}(\mathbf{C})|} \sum_{\mathbf{c} \in \mathcal{R}(\mathbf{C})} \mathbbm{1}((M, \mathbf{u}) \vDash [\mathbf{X} \leftarrow \mathbf{x}, \mathbf{C} \leftarrow \mathbf{c}, \mathbf{W} \leftarrow \mathbf{s^*_W}] \mathbf{Y} \neq \mathbf{y}) \le 1 - \frac{1}{|\mathcal{R}(\mathbf{C})|}$$
$$\implies 1 - \frac{1}{|\mathcal{R}(\mathbf{C})|} \sum_{\mathbf{c} \in \mathcal{R}(\mathbf{C})} \mathbbm{1}((M, \mathbf{u}) \vDash [\mathbf{X} \leftarrow \mathbf{x}, \mathbf{C} \leftarrow \mathbf{c}, \mathbf{W} \leftarrow \mathbf{s^*_W}] \mathbf{Y} = \mathbf{y}) \le 1 - \frac{1}{|\mathcal{R}(\mathbf{C})|}$$
$$\implies \sum_{\mathbf{c} \in \mathcal{R}(\mathbf{C})} \mathbbm{1}((M, \mathbf{u}) \vDash [\mathbf{X} \leftarrow \mathbf{x}, \mathbf{C} \leftarrow \mathbf{c}, \mathbf{W} \leftarrow \mathbf{s^*_W}] \mathbf{Y} = \mathbf{y}) \ge 1$$
$$\implies \exists \mathbf{c} \in \mathcal{R}(\mathbf{C}) \quad (M, \mathbf{u}) \vDash [\mathbf{X} \leftarrow \mathbf{x}, \mathbf{C} \leftarrow \mathbf{c}, \mathbf{W} \leftarrow \mathbf{s^*_W}] \mathbf{Y} = \mathbf{y}$$

In other words, at least one state exists where the resulting outcome is $\mathbf{Y} = \mathbf{y}$. This trivially holds if weak sufficiency is satisfied since the observed state will result in outcome $\mathbf{Y} = \mathbf{y}$. For this setting of $\alpha_0$, weak sufficiency implies $\alpha_0$-sufficiency and $\alpha_0$-sufficiency also implies weak sufficiency unless \hyperref[ac1:fac]{AC1} is violated (which is the only case where the observed state $\mathbf{s^*}$ does not result in the outcome $\mathbf{Y} = \mathbf{y}$ but could be observed for some other state where $\mathbf{c} \neq \mathbf{s^*_C}$). From a practical standpoint, most algorithms for discovering actual causes will run an $O(1)$ check for AC1 and terminate immediately if it fails, so in practice we can assume that $\alpha_0$-sufficiency implies weak sufficiency for $\alpha_0 \ge 1 - \frac{1}{|\mathcal{R}(\mathbf{C})|}$.

In general, the motivation for applying soft necessity and sufficiency is to exclude spurious actual causes in small state spaces where the number of possible states is exceeded by the number of possible binary vectors. In these cases, states with equivalent outputs can be sequestered to a single IVP with a smaller corresponding binary vector. In other words, when the outcome space is small, a functional actual cause might return part of a cause instead of the full cause to reduce the cost of the overall assignment. For example, Table \ref{tab:no-alpha-sufficiency} shows the results in the 1D mover domain previously outlined in Section \ref{sec:algorithms}. The rows corresponding to binary $01$ show that when the obstacle obstructs the mover, it is possible to reduce the cost by assigning all states corresponding to the outcome of 1 to the binary $01$ such that only the obstacle position is an actual cause. We prevent this effect by forcing a certain percentage of states to take on certain values. In practice, we used $\alpha_0$-sufficiency but did not require $\alpha_1$-necessity to obtain accurate results.

\begin{table}[t]
    \centering
    \begin{minipage}{.48\textwidth}
      \centering
        \begin{tabular}{cccc}
        \toprule
        Binary& Mover& Obstacle& Outcome\\
        \midrule
        $01$& $0$& $0$& $1$\\
        $01$& $0$& $2$& $1$\\
        $01$& $1$& $1$& $2$\\
        $01$& $1$& $2$& $1$\\
        $10$& $0$& $1$& $0$\\
        $10$& $1$& $0$& $2$\\
        $10$& $2$& $0$& $3$\\
        $10$& $2$& $1$& $3$\\
        $10$& $2$& $2$& $3$\\
        $10$& $3$& $0$& $4$\\
        $10$& $3$& $1$& $4$\\
        $10$& $3$& $2$& $4$\\
        \bottomrule
        \end{tabular}
    \caption{1D Mover results with weak sufficiency instead of $\alpha_0$-sufficiency.}
    \label{tab:no-alpha-sufficiency}
    \end{minipage}%
    \hfill
    \begin{minipage}{.48\textwidth}
      \centering
	\begin{tabular}{cccc}
	\toprule
	Binary& Mover & Obstacle & Outcome\\
	\midrule
	$11$& $0$& $2$& $1$\\
	$01$& $1$& $2$& $1$\\
	$10$& $0$& $0$& $1$\\
	$10$& $0$& $1$& $0$\\
	$10$& $1$& $0$& $2$\\
	$10$& $1$& $1$& $2$\\
	$10$& $2$& $0$& $3$\\
	$10$& $2$& $1$& $3$\\
	$10$& $2$& $2$& $3$\\
	$10$& $3$& $0$& $4$\\
	$10$& $3$& $1$& $4$\\
	$10$& $3$& $2$& $4$\\
	\bottomrule
	\end{tabular}
     \caption{1D Mover results with $\alpha_0$-sufficiency, but without enforcing the invariance property (\textbf{P\ref{property:invariance}}).}
      \label{tab:alpha-sufficiency-no-invariance}
    \end{minipage}
\end{table}

However, $\alpha_0$-sufficiency and $\alpha_1$-necessity alone are not enough to represent normality, and the invariant preimage is still essential. Table \ref{tab:alpha-sufficiency-no-invariance} shows the results in the same 1D Mover example where $\alpha_0$-sufficiency is included but the invariance property (\textbf{P\ref{property:invariance}}) is not enforced. The states $[0,1]$ and $[1,1]$ have different outcomes but the binary vector $10$ is still assigned, implying that the outcome is not invariant to the obstacle position. For this reason, we set $\alpha_0 \neq 0$ to permit some counterfactual assignments of the candidate actual cause to result in different outcomes, provided they continue to satisfy the invariance property given by the invariant preimage containing the observed state.

It is worth noting that contrastive necessity and direct sufficiency do not readily scale to complex environments with continuous state spaces, since verifying them requires an enumeration of all counterfactual possibilities. In contrast, the invariance property can be approximated easily through masking as demonstrated in JACI.  
\section{Network and Training Details}

\label{AppendixNeuralModel}
We train two neural networks to identify actual causes from data, one to model the state-binary mapping $h_\mathbf{Y}(\mathbf s^{(i)};\theta)$, and the other to model the dynamics $f(\mathbf s^{(i)}, \mathbf b^{(i)};\phi)$. Both networks take in an unordered sequence of vectors corresponding to the causal variables; $f$ returns the state of the outcome variable and $h_\mathbf{Y}$ returns a binary vector corresponding to the inferred actual cause in that state. In practice, several details are used to ensure the networks are able to learn. 

Both networks utilize 1D convolutions to encode variable-wise invariance. A 1D convolution applies the same multi-layer perceptron (MLP) to each input, where each input is the state of a different causal variable. The first 1D convolution produces the embeddings. These embeddings are aggregated by adding together the outputs. This aggregated result allows the model to include global information and is appended to each of the inputs for a second 1D convolution. In $h_\mathbf{Y}$, the output of this second 1D convolution is $\mathbf b_i$ for each vector. In $f$, this vector is passed through a final MLP that computes the output. Without having two layers, global information would not be included in $h_\mathbf{Y}$, restricting the class of functions that $f$ can represent.

\begin{figure}
    \centering
    \begin{subfigure}[]{\label{BinaryNetwork}
        \includegraphics[width=0.51\textwidth]{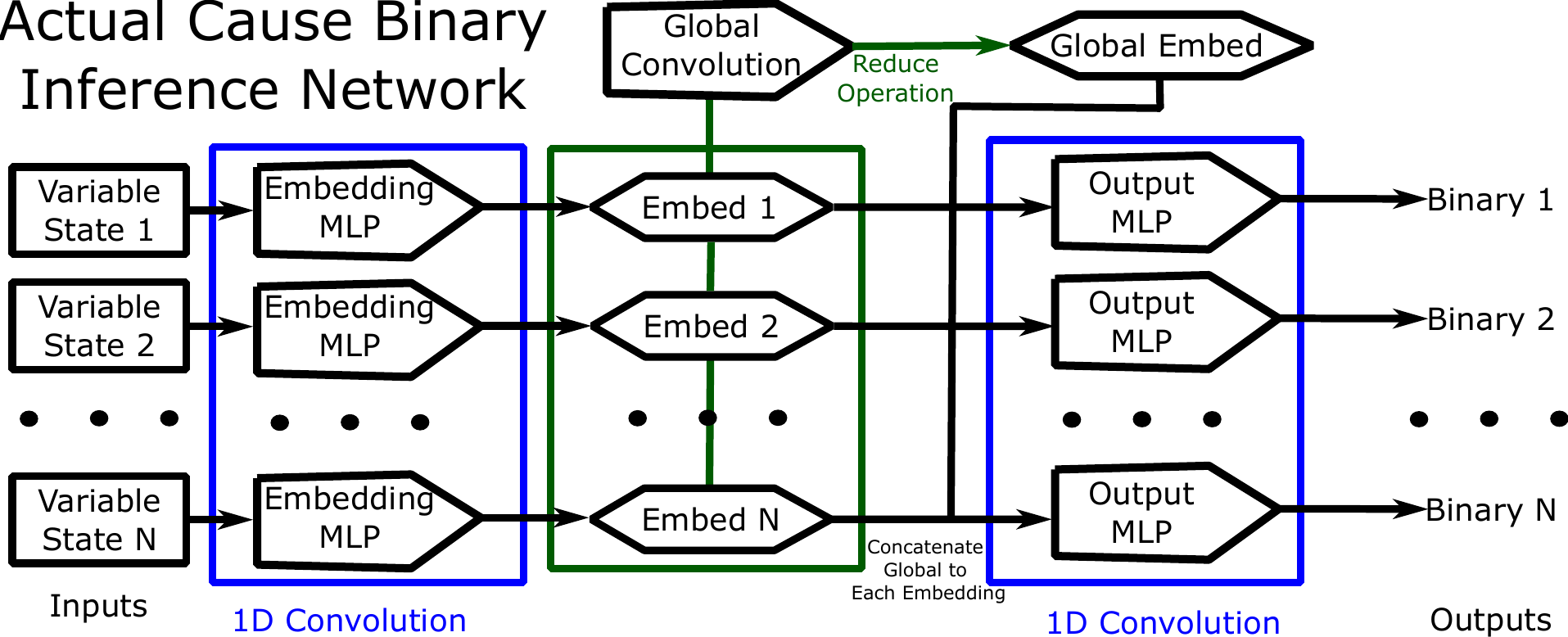}}
    \end{subfigure}
    \hfill
    \begin{subfigure}[]{\label{ForwardNetwork}
        \includegraphics[width=0.43\textwidth]{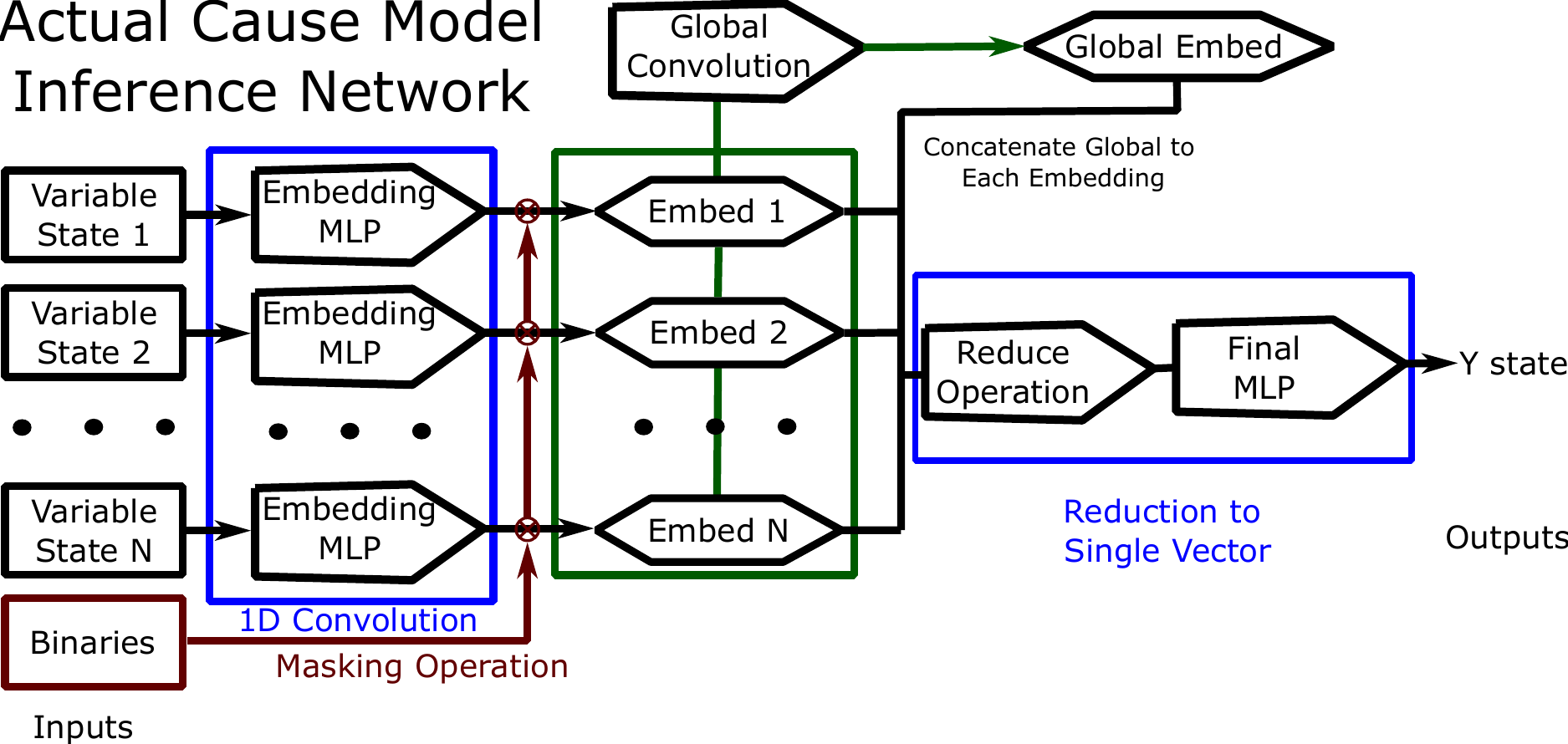}} 
    \end{subfigure}
    \caption{\textbf{(a)} Computational flow of $h_\mathbf{Y}(\mathbf s;\theta)$, which takes in the observed state for all causal variables and returns a binary vector indicating the actual cause. \textbf{(b)} Computational flow of $f(\mathbf s, \mathbf b;\phi)$, which takes in the observed state for all causal variables and a binary vector and returns the state of the outcome variable.}
\end{figure}

While the networks for $f$ and $h_\mathbf{Y}$ share the majority of their architecture, there are a few key differences. For $f$, a masking operation is applied after the first embedding operation: for any variable $k$ with $\mathbf b[k]=0$, the corresponding embedding of the state (the outcome of the 1D convolutions before reduction) is multiplied by $\mathbf b[k]$ to mask it out. Applying the masking operation at the embedding layer prevents the state of the variable $\mathbf s_k = \mathbf 0$ being confused with the binary vector mask $\mathbf b[k] = 0$. For $h_\mathbf{Y}$, the binary vectors are directly returned after the second convolution. 

The necessity property is encouraged through the structure of $h_\mathbf{Y}(\mathbf{s}^{(i)};\theta)$ constructs $\mathbf b$ by sharing the 1D convolution of $\mathbf b^{(i)}[k] = h_\mathbf{Y}(\mathbf{s}^{(i)}_k, \theta)$ for every $\mathbf{s}^{(i)}_k$. This structure biases the binary to be dependent on the input value, while still utilizing global information. Since the necessity property captures counterfactual dependence on a factor, this constraint biases towards necessity. We opt for this approach instead of explicitly generating counterfactual states and verifying necessity, since without a perfect model this would lead to instability during training. An illustration of the binary network is in Figure~\ref{BinaryNetwork}, and the masked forward model is in Figure~\ref{ForwardNetwork}. Hyperparameters are set identically for $f$ and $h_\mathbf{Y}$, and the 1D convolutions both use 3 hidden layers of size 256 and embed to size 512. 

In practice, $h_\mathbf{Y}$ cannot output hard $\{0,1\}$ binary values in training, since this would result in zero gradient flow from the forward model $f$ to the state-binary mapping results. However, if the forward model $f$ received stochastic inputs instead of values in $\{0,1\}$ it would affect the masking operation and result in poor prediction performance. We resolve this using a mixing distribution, where the network for $h_\mathbf{Y}$ outputs values in $\widehat{\mathbf b} \coloneqq [0,1]^{|\mathbf S|}$. We then use $\widehat{\mathbf{b}}$ as the probability of a Bernoulli random variable to give $\widetilde{\mathbf b} \coloneqq \text{Bern}(\widehat{\mathbf{b}})$, and input $\mathbf b\coloneqq \widetilde{\mathbf b} \cdot\widehat{\mathbf{b}}$ to the forward model $f$. This ensures gradient flow when training $h_\mathbf{Y}$, while still returning $\mathbf b_i \in \{0,1\}$.

\section{Random Vector Domain Details}
\label{appendix:random-vectors}

In this section, we elaborate on some of the details of the Random Vector domains, which are intended to represent the bipartite causal relationships that characterize Markov Decision Processes (MDPs) and RL. In such settings, the outcome is the result of a state transition $y\rightarrow y'$; we write $X_p$ in place of $y$ and $Y$ in place of $y^\prime$ to match the notation for candidate cause and outcome variables used in the rest of this paper. $X_p$ represents passive dynamics and is always a parent of $Y$. We collect episodes of length $50$, and at the end of each episode, the environment resets to a new initialization of random values between $[-1,1]$ for all variables. A causal variable in the domain is represented with a length $4$ vector clipped between $[-1,1]$. All variables, including the active variables, have passive dynamics, but for simplicity, we only represent passive dynamics for $Y$ in Figure \ref{RandomVectorGraphs}. For simplicity, we write the causal relationships as $\mathbf{A}x_a + \mathbf{B}x_p$, but in practice they are related by affine functions and there is no closed-form solution to solve for the outcome. Figure~\ref{RandomVectorGraphs} shows the connectivity of the different graphs evaluated in this work.

Aside from having extremely flexible graph connectivity, these domains are convenient since we can generate multiple versions of these graphs with the same connectivity, change the frequency of interactions by using $\mathbf{D}[x_a, x_p] > \tau$, change the dimensionality of the variables, and distinguish multiple independent interactions from jointly dependent interactions (as illustrated in \textbf{3-in} vs. \textbf{3-m-in}). Additionally, in these domains, there are none of the shortcuts such as spatial proximity that are sometimes used in other RL domains. This allows us to form a clear distinction between methods based on spatial proximity and methods based on context-specific independence, such as JACI. 

  \begin{figure}
\centering
  \centering
  \includegraphics[width=0.95\textwidth]{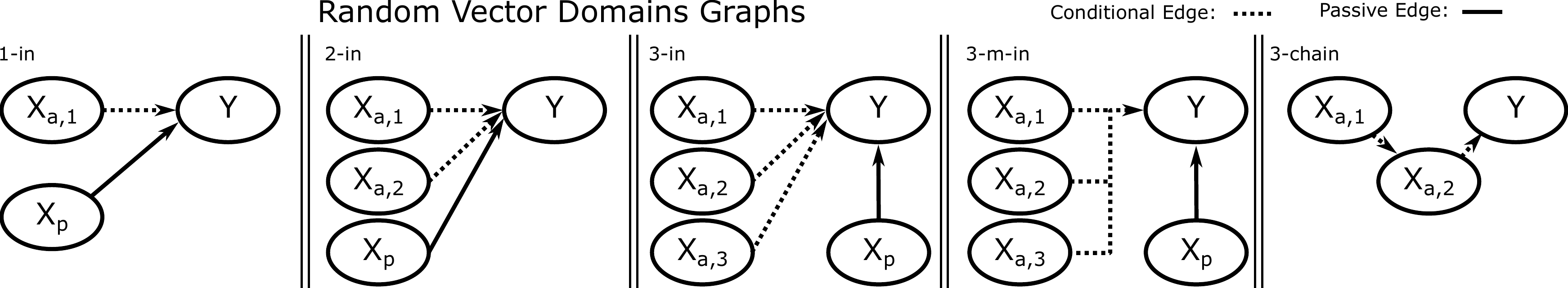}
\caption{Random Vectors domains, where dotted lines indicate conditional edges. In \textbf{3-m-in}, the three edges form a single connection such that the outcome varies according to all of the inputs. \textbf{$\tau$-1} and \textbf{d-20} use the same graph structure as \textbf{1-in}.
}
\label{RandomVectorGraphs}
\vskip -0.5cm
\end{figure}

When evaluating inference algorithms in these domains, we compute the accuracy using all the conditional edges. For example, in the \textbf{2-in} domain we take the accuracy of  $X_{a,1}\rightarrow Y$ and $ X_{a,2}\rightarrow Y$ combined. It is worth noting that not all edges exhibit exactly 50\% rates in the dataset, so guessing always 1 or always 0 for an actual cause might result in accuracy near, but not exactly equal to, $50\%$. In the same way, as the rate of appearance of conditional edges falls to  below $<5\%$, some errors might be cases where the outcome is truly indistinguishable from sampling ambiguity since the model is essentially recovering the decision boundary. It is also possible that the optimal model under Equation~\ref{optimization-learned-model} will incur some error to minimize the length of the binary vector, and these cases warrant further study as a possible limitation of JACI, or the Random Vectors domain.

The \textbf{3-chain} environment is characteristically different from the other environments in that it does not represent a dynamic relationship. As a result, it does not make sense to have a passive variable. As a result, the value of $Y$ will simply be $0$ in the absence of an interaction. However, the invariant relationships still hold in this case, so it is unclear why the performance is significantly worse. Furthermore, omitting the passive dynamics and using $0$ in the absence of an interaction tended to \textit{improve} results in the other domains, so this is likely not the cause for poor performance. We added the passive dynamics to make the domains \textit{more} challenging. We also assumed that by adding the relationship, the model would occasionally return $X_{a,1}$ instead of $X_{a,2}$ as a cause, but this did not happen. Additional experiments will be needed to determine why the error rate for the chain structure remains high. 

\section{JACI Training Details}
\label{JACITrainingDetails}

The primary hyperparameters that \algoacronym{} uses are the tradeoff rate $\beta$ between the minimality criterion for binary vectors and forward modeling loss, and the relative learning rate between the forward modeling network and the binary network that guides the joint optimization. Because the optimization algorithm is searching for a solution in a complex space with very strong local minima (such as $h_\mathbf{Y}$ always outputting $\mathbf{0}$ or $\mathbf{1}$), as well as strong constraints induced by the network structure, these hyperparameter values must be carefully chosen. For the \textbf{1-in} domain, Figure~\ref{AdaptiveRate} shows the rate of false positives and false negatives, and Figure~\ref{TrainCurve} shows the error rate during training.
 
Hand-tuning a fixed $\beta$ is often impossible and does not always result in recovering the true partitions. Instead, we use an adaptive $\beta$ based on the following insight: when $\lVert f(\mathbf{s}^{(i)}, b^{(i)}; \phi) - \mathbf{y}^{(i)}  \rVert$ is large, this indicates that the forward model is not performing well, and masking out additional information is counterproductive since the network probably does not have enough information to determine the outcome. This could be either because of masking or because it is early in training. Instead of a fixed tradeoff value, \algoacronym{} uses an adaptive tradeoff rate given by $\beta = \widehat{\beta} e^{-\lVert f(\mathbf{s}^{(i)}, b^{(i)}; \phi) - \mathbf{y}^{(i)}  \rVert}$, where $\widehat{\beta}$ is a hyperparameter. In practice, the inference accuracy of \algoacronym{} is somewhat robust to values of $\widehat{\beta}$, and we use a value between $1-10$ depending on the domain.

Additionally, the relative learning rates between the forward modeling and binary networks must be tuned to avoid local minima. If the binary network is trained too quickly, it may converge on behavior that only predicts based on the all 0 or all 1 binary, which is a local minima that the forward model cannot escape. Instead, the forward modeling and binary networks must be trained together so that each network can adapt to gradual changes in the other. In practice, this amounts to alternating once per training step between the forward modeling and binary networks, with the learning rate of the binary network initialized to $1/10$th that of the modeling network. However, further strategies for this alternation could yield better results. Other hyperparameters and their sensitivity are described in Table~\ref{Hyperparameters}.

\begin{figure}[t]
    \centering
    \begin{subfigure}[]{\label{AdaptiveRate}
        \includegraphics[width=0.48\textwidth]{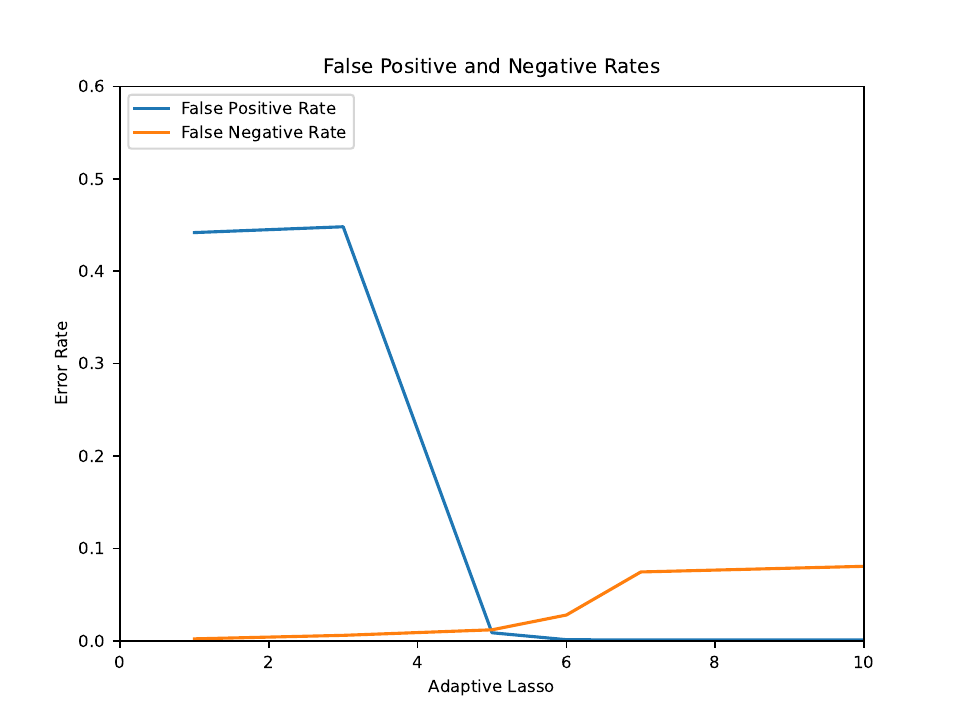}}
    \end{subfigure}
    \hfill
    \begin{subfigure}[]{\label{TrainCurve}
        \includegraphics[width=0.48\textwidth]{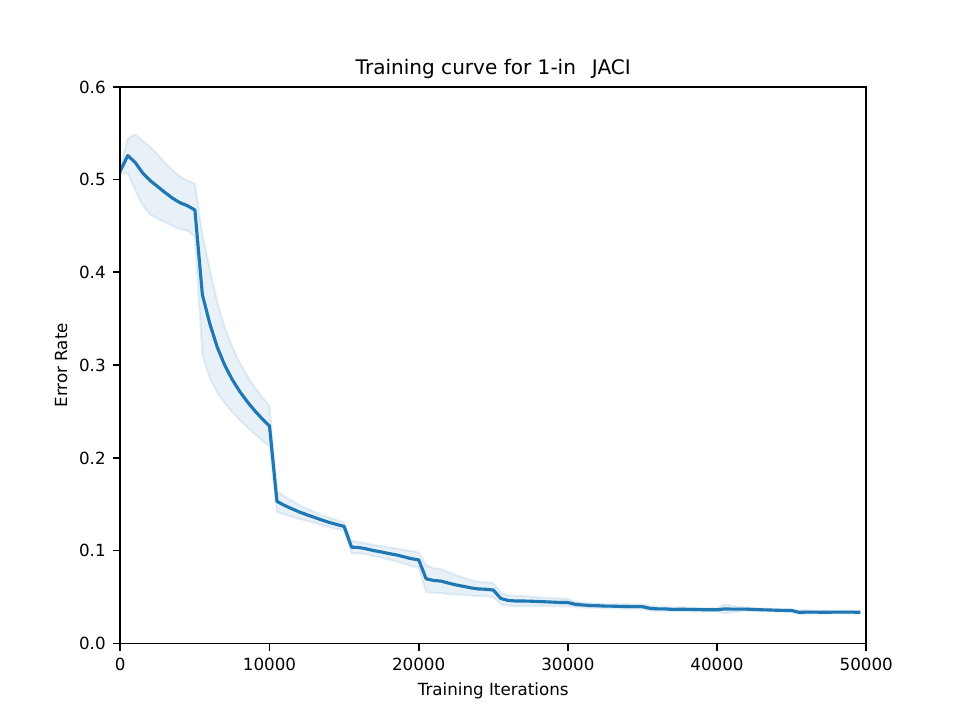}} 
    \end{subfigure}
    \caption{\textbf{(a)} Comparison of false positive and false negative rates in the \textbf{1-in} domain. The blue line indicates false positive rate (the model guesses an actual cause when there is none) and the orange line indicates false negative rate (the model guesses no actual cause when there is one). 
    \textbf{(b)} Train curve averaged over 10 runs for \textbf{1-in}. Error rate decreases monotonically, even though JACI optimizes Equation \ref{optimization-learned-model} instead of the error rate explicitly.}
\end{figure}

\begin{table*}[t]
\vskip 0.2cm
\begin{center}
\begin{small}
\begin{tabular}{@{}lcr@{}}
\toprule
\textbf{Parameter} & \textbf{Values} & \textbf{Sensitivity}\\
\midrule
Forward Model Learning rate & 0.001-0.003 & Significant if too high \\
Interaction Model Learning rate & 0.0001-0.0007 & Significant \\
Adaptive lasso rate & 1-10 & Significant, see Figure~\ref{AdaptiveRate}\\
Network layers & 6 & Fixed across all tasks \\
Network width & 256 & Fixed across all tasks \\
Embedding width & 1024 & Fixed across all tasks \\
Reduction Function & sum & Fixed across all tasks \\
Number of Frames & 100k, 500k  & Random Vectors or Breakout/Robot Pushing \\
Number of Training Iterations & 200k & Fixed across all tasks\\
Batch Size & 512 & Fixed across all tasks, sensitive if too low\\
Threshold for AC & 0.99 & Fixed across all tasks\\
\bottomrule
\end{tabular}
\caption{Hyperparameters}
\label{Hyperparameters}

\end{small}
\end{center}
\vskip -0.2in
\end{table*}
\section{Baseline Details}
\label{Baselines}
We compare against two baselines from the RL literature, as well as one baseline that enforces the necessity property based on counterfactual methods from the RL literature.

\subsection{Gradient Baseline}

This method is explored in some prior work~\citep{pitis2020counterfactual,wang2023elden}, and has shown promising results, especially in quasi-static domains where there is a significant change that occurs as a result of interactions. It takes the magnitude of the gradient of the inputs with respect to the output ($g_{i,\mathbf y}$) for each variable in a particular state, given by $g_{i,\mathbf y} = \left \| \frac{\delta \mathbf{y}}{\delta \mathbf{s}_i}\right \|_1$. In practice, when learning from observational data this gradient is of a learned forward model $f(\mathbf s;\phi)\rightarrow \mathbf y$. For a given threshold $\tau_g$, this method assigns an actual cause based on the magnitude of $g_{i,\mathbf y}$ for causal variable $i$. 
\begin{equation}
\mathbf{b}_i \coloneqq \mathbbm{1}\left(\left\|\frac{\delta \mathbf{y}}{\delta \mathbf{s}_i}\right\|_1 > \tau\right)
\label{gradient_baseline_eq}
\end{equation} 

However, tuning the magnitude of $\tau_g$ can be very challenging as it is domain and sometimes interaction-specific. This observation is corroborated in our results. In this work, we tune the threshold magnitude using oracle knowledge of the actual causes, by choosing the midpoint between the mean $g_{i,y}$ for the true actual causes and the mean $g_{i,y}$ for the true non-causes. This gives good results for cases with a single actual cause since $\tau_g$ can be perfectly tuned for the dynamics of that cause. However, the performance declines as the number of causes increases, since finding a threshold that holds for all of these dynamics becomes increasingly challenging. We experimented with regularized gradients as suggested in ~\citet{wang2023elden}, which reduces the size of the gradients by applying a cost function to the model $f(\mathbf s;\phi)\rightarrow \mathbf y$:
\begin{equation}
\mathcal L_\text{grad}(f, \mathbf s; \phi) \coloneqq \lambda_\text{ grad}\left\|\frac{\delta f(\mathbf s; \phi)}{\delta \mathbf{s}_i}\right\|_1
\label{gradient_regularization}
\end{equation} 

However, this regularization did not significantly improve results. This is not unexpected; the error rates we found are also comparable with those observed in these works. This baseline does have some additional advantages when compared with other baselines. While neural networks in high dimensions appear to overfit to some extent, as shown by the performance drop in the \textbf{d-20} environment in Table~\ref{AccuracyTable}), this dropoff is not as significant when using gradient-based inference. On top of this, the gradients can be computed without making any significant changes to the model or training procedure. However, the loss in performance makes its application limited for downstream tasks.
	
\subsection{Attention Baseline}
With the success of attention-based networks such as Transformers~\cite{vaswani2017attention}, the question naturally arises whether attention weights inherently capture actual causes. Prior work has used attention to identify relevant tokens~\cite{hafiz2021attention}. Using each of the causal variables as tokens, we can train an attention model and determine if the attention provides a meaningful signal of the actual causes. 

For this baseline, we implemented an attention network that uses multi-head attention over the input variables when predicting the outcome variable. For each attention head $a_i^k \in \{1,\hdots, H \}$, this method takes the average of the length $N$ vector where every index corresponds to one of the causal variables (which we represent with $a_i^k$ in abuse of notation). These are then incorporated in a multi-head attention strategy to infer the state of the outcome variable. To determine whether a variable is an actual cause, for each variable $X_{i}$ we compute 

\begin{equation}
\mathbf{b}_i \coloneqq \mathbbm{1}\left(\frac{\sum_{k=1}^Na_i^k}{N} > \tau\right)
\label{attention_baseline_eq}
\end{equation} 

If the ratio of heads $\tau$ is greater than $1/N$, that variable is considered an actual cause. This represents the attention weight on that particular input being greater than some amount. We performed some tuning to determine if a better threshold value for the ratio of heads could be determined but found this method to struggle regardless of what ratio was used. If we just used an ``at least one head'' instead of a soft measure, this methoud would infer every variable as an actual cause. This is analogous to the idea of $\alpha-$necessity introduced in Appendix~\ref{appendix: alpha-splitting}. In practice, this baseline still performs poorly because the heads rarely differentiate between the inputs enough to provide meaningful assessments of the actual cause. This lack of differentiation we observed could be due to training on only hundreds of thousands of tokens instead of trillions of tokens used in many other works using attention networks.

To improve the performance, we altered the training procedure of the attention model by introducing an entropy loss over the attention of the input on the heads, so that the model would learn to differentiate between the inputs at least to some extent. This loss has the form
\begin{equation}
\mathcal L_\text{attn}(a_i) \coloneqq -a_i\log(a_i)
\end{equation}
In practice, this prevented the network from converging to flat attention heads that put equal attention on all inputs.

\subsection{Counterfactual Baseline}

This baseline takes a generative approach to enforcing the necessity property and identifying the actual causes, and is corroborated in counterfactual-based techniques~\cite{pitis2020counterfactual,choi2023unsupervised}. This is done by first learning a forward model $f(\mathbf s; \phi) \rightarrow \mathbf y$. Then for each causal variable $X_i$, it compares the observed outcome with the output of the forward model for random interventions on $X_i$. We can represent the distribution of counterfactuals on variable $X_i$ in state $\mathbf s$ as $C_i(\mathbf s)$. Then the comparison is given by $E_{\mathbf v^\prime \sim C_i(\mathbf s)}\|f(\mathbf v';\phi) - y\|_1$. In practice, we use samples from a uniform distribution over the normalized state space of the causal variable. If the expected outcomes differ by more than a threshold $\tau_\text{CF}$, then the value is considered an actual cause. In essence, this is analogous to $\alpha$-necessity, where at least $\tau_\text{CF}$ differences should be observed to consider this a cause. Since the values of random vectors are constrained between $[-1,1]$, generating the intervention can be done by replacing the state of a variable $X_i$ with a length 4 random vector drawn from that range. The resulting binary vector is computed as:
\begin{equation}
    \mathbf{b}_i \coloneqq \mathbbm{1}\left(\frac{1}{N}\sum_{n=1}^N \| f(\mathbf{s}_i^{(n)}; \theta) \|_1 > \tau_\text{CF}\right)
\end{equation}

In practice, this method does not often get useful results and often overestimates actual causes. Part of this could be difficulty with determining a good value for $\tau_\text{CF}$, the threshold parameter since this value can fluctuate over different training seeds. Additionally, it is difficult to generate meaningful counterfactuals in domains like Breakout, where certain states are not actually possible. However, the primary flaw in this method is that networks trained with observational values can often struggle with counterfactual values due to overfitting. The poor performance of this method is also what dissuaded us from explicitly enforcing $\alpha-$necessity in \algoacronym{}.
\section{Unifying Local Cause, Interaction and other Existing Approximations of Actual Cause}
\label{Appendix:localcausality}
Several related concepts in the RL literature such as \textit{local causality} or \textit{interactions} explore various heuristics for identifying actual causes of a particular outcome $\mathbf Y = \mathbf y$. In these settings, the observed events are state transitions $\mathbf y\rightarrow \mathbf y'$ where $\mathbf y'$ is the next state of a temporal causal variable $\mathbf Y$. These heuristics often incorporate intuitions of normality, along with other intuitions such as necessity, sufficiency and minimality that are shared with prior definitions of actual cause. Since FAC extends these definitions by using the invariance property (\textbf{P\ref{property:invariance}}), to represent normality, a direct connection can be made between prior definitions of actual cause and heuristic-based definitions for RL. We highlight the relationships between these definitions based on the invariance, necessity and minimality properties.

\begin{table}[t]
\begin{center}
\begin{small}
\begin{tabular}{@{}lcccr@{}}
\toprule
 Method & Invariance & Necessity & Minimality \\
\midrule
FAC & General set partitioning & Arbitrary counterfactuals & Global minimality \\
Gradient & Local set partitioning & Local counterfactuals & Local regularization \\
Attention & General set partitioning & Not explicitly handled & Architecture regularization \\
Granger & Pairwise set partitioning & Pairwise counterfactuals & Not used \\
Physical & Heuristic set partitioning & Heuristic counterfactuals & Physics \\
\bottomrule
\end{tabular}
\caption{Summary of key properties of actual cause characterized by related methods.}
\label{FAClocalcomparison}
\end{small}
\end{center}
\end{table}

Gradients~\cite{wang2023elden} have been used to approximately infer actual causes, by defining them as values of causal variables with gradients larger than a threshold ($>\epsilon_\text{grad}$, Equation~\ref{gradient_baseline_eq}), where the input gradients come from a learned forward model $f_\phi: \mathbf S \rightarrow \mathbf S'$. This definition most closely characterizes \textit{local} causality, since it relies on locality assumptions about relative change to describe the components of actual cause. If we assume that the forward model perfectly captures the true dynamics, then the input gradients are an indicator of local variation. As long as the transitions are not at a critical point (second order gradient not equal to zero), switching to an adjacent state will have no effect on the transition. In the language of FAC, these adjacent states would be in the same invariant preimage. If the gradient is high there will be a alternative state in the local region that will produce a significantly different outcome, which approximates the necessity property. However, this does not hold for non-local counterfactual states. Furthermore, gradients do not explicitly capture minimality, although this can be approximated using a penalty term on the size of the gradients (Equation~\ref{gradient_regularization}) during training to prevent the model from being needlessly sensitive. 

Attention~\cite{pitis2020counterfactual} has also been used to approximately infer actual causes based on the magnitude of attention weights in a learned attention model (Equation~\ref{attention_baseline_eq}). While this removes some of the local dependence that gradients have, it also sacrifices some of the guarantees. Since inputs with zero attention will have no effect of the output, zero attention on an input mirrors the invariance property. However nonzero attention does not necessarily guarantee causal effect, meaning that the necessity property is not captured since a change in outcome is not guaranteed. While minimality is not directly captured, we found that adding a penalty to the attention heads resulted in improved inference accuracy in the attention baseline (Section~\ref{Baselines}. This suggests that incorporating minimality may be a useful tool for regularizing attention models. Besides these limitations, in practice many multilayer attention models learn \textit{flat attention} or equal weight over all inputs for some heads, which makes identifying a good threshold nearly impossible.

Granger-causal methods~\cite{chuck2023granger} identify actual causes by learning partial models and comparing their outputs at particular states. These methods learn a passive model $f:\mathbf Y\rightarrow \mathbf Y'$ that returns the next state given the present state alone, and an active model $g:\mathbf Y\times \mathbf X\rightarrow \mathbf Y'$ that returns the next state given the present state and action. When the outputs of both models are the same, this indicates that an interaction occurred. However, when the active model is more accurate, this indicates that an interaction did not occur. This approach models the invariance property with $f(\mathbf y)$ and necessity with $g(\mathbf y, \mathbf x)$. However,  since Granger causality methods only perform pairwise comparisons they are strictly more limited. 

Physical properties in the environment~\cite{chuck2020hypothesis,pitis2020counterfactual} have also been used to identify actual causes. These methods identify interactions by simply asserting several physical properties such as contact and quasistatic properties, or through changepoint analysis. In practice, physical properties inherently capture invariance and often also capture necessity by specifying the physical laws that govern when a causal interaction is produced. It can be argued that by capturing the minimal set of physical laws, these methods also capture minimality of actual causes. 

The invariance, necessity, and minimality properties in FAC offer a principled account of the fundamental assumptions being made by each of the above methods for identifying actual causes, and we describe the relationships between these assumptions and FAC in Table~\ref{FAClocalcomparison}. These heuristic-based methods are formulated in the context of MDPs and RL, where the causal graph is modeled by a bipartite graph with a strict temporal ordering. Preemption cases do not appear in such settings and are rarely taken into account, which is why there is no analogous concept to the witness set in these methods. Our empirical evaluation of JACI has reduced emphasis on preemption for this reason, though we note that FAC incorporates the witness set and the results in Appendix \ref{appendix-exhaustive} show that it handles preemption cases.

\section{2D Pushing Domain}
\label{2dpusher}
The causal variables in this domain are Action, Pusher, Block, Target, Obstacle1, Obstacle2, and Obstacle3, where Block is the outcome variable. Each variable is represented by a 3-dimensional continuous vector, where the first two dimensions encode the midpoint of the object, and the third dimension is used to indicate when the block is on the target; this value is fixed to 0 for non-target objects. Each of the objects occupies a 1x1 axis-aligned bounding box.

Actions in this domain move the pusher $[-1,1]^2$ units from its current position, as long as the pusher remains within the $[5,5]$ bounds. If the pusher collides with an obstacle, it will be stopped at the first edge of contact. Motion perpendicular to the obstacle will slide the pusher along its edge. If the pusher collides with the block, it will push the block along the first edge of contact. If the block is pushed into an obstacle, the block is stopped at the first edge of contact with the obstacle and the pushed is blocked along the edge of the contacting edge of the block in the axis of the obstacle. The block and pusher are not impeded by the target. The causal graph is illustrated in Figure~\ref{pushbreakcausal} (right).

\begin{figure}
\centering
  \centering
  \includegraphics[width=0.95\textwidth]{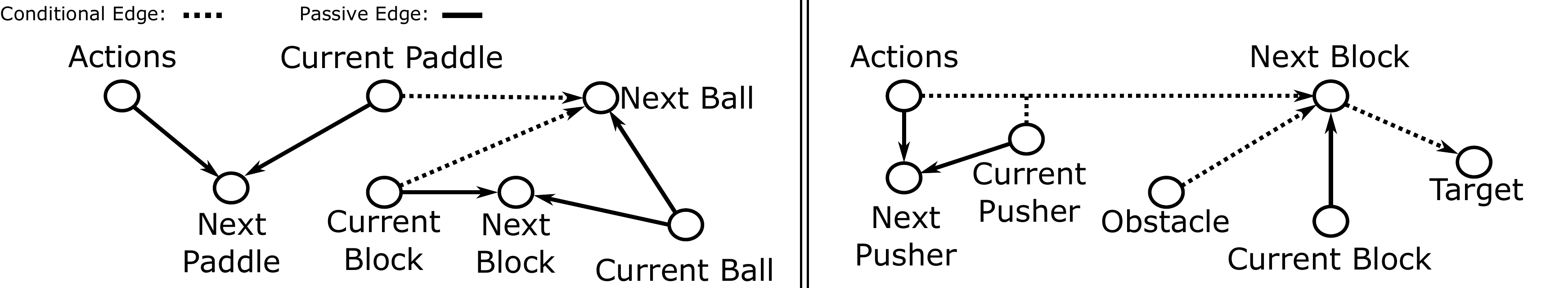}
\caption{Causal graphs for environment dynamics in Mini Breakout (left) and Robot Pushing (right), where transitions are from current positions to next positions. Note objects that do not move still exhibit dynamics (next position = current position), they are just omitted from the graph. A conditional edge is one that exhibits context-specific independence.
}
\label{pushbreakcausal}
\end{figure}

To generate data for this domain, the object positions are initialized to random, non-intersecting locations. In each episode, the pusher takes random actions for 20 timesteps, after which the positions of the obstacles, agent, target, and block are re-initialized to random locations. At each timestep, the current state of the block is an actual cause of its next state. However, the action and the pusher state are also actual causes if the action would have put the pusher in collision with the block. Similarly, the obstacle state is an actual cause if it impedes the block. This domain is challenging because the behavior of the block remains the same whether there are no actual causes or if the block is completely impeded by the obstacle. 

Since interactions are infrequent in this domain, the previous state of the block is the most frequent actual cause and other non-trivial actual causes are rare. By default, the 2D pusher interacts with the block less than $10\%$ of the time. To get meaningful statistics for the baselines, we resample states at test time such that non-trivial actual causes occur 50\% of the time. Since \algoacronym{} does not assume access to the true actual causes, in training we simulate the upsampling effect using a \textit{passive model}, which uses only the previous state of the block as input. Since we used the passive model for sampling, \algoacronym{} training never uses the true actual cause. We then upsample states where the passive model performs poorly. This method is inspired by~\citet{chuck2023granger}.

\section{Mini Breakout Domain}
\label{breakout}

The causal variables in this domain are Action, Paddle, Ball, Block1, Block2, and Block3, where Ball is the outcome variable. Each variable is represented with a 5-dimensional continuous vector where the first two dimensions encode midpoint position, the next two encode velocity, and the last dimension encodes additional information; for blocks, this value would indicate whether the block was hit or not. The size of the domain is 30x50, where the paddle is 7x3, the ball is 2x2 and the blocks are 5x2. 

Actions in this domain are discrete, where the paddle can move 2 units to the right or left, or remain stationary. The ball has a velocity in $\{-1,1\} \times \{-2,2\}$, and depending on what position it strikes the ball, it will receive an upward velocity of $(1,1), (1,2), (-1,2), (-1,1)$. To make the ball bounce independently of the current action, the angle is computed before the paddle moves. Bouncing off of the blocks will reverse the y velocity of the ball, and bouncing off of a wall will flip the ball velocity perpendicular to the wall. Otherwise, the ball will be updated by adding the velocity to the position every timestep. If a block is hit, the last dimension of its state vector will be set to 0 and it can no longer affect the ball. The causal graph for this domain is illustrated in Figure~\ref{pushbreakcausal} (left).

Data for this domain is generated by randomly initializing the ball with a y-velocity of $-1$, random x-position and x-velocity in $[-1,1]$, and y-position randomly in $[22, 28]$ (below the blocks). The blocks are initialized to random integer positions in $[30, 45]$, and the paddle is initialized at the position $(15,10)$. The environment resets if the ball hits the bottom or all the blocks are hit, and the agent takes random actions until one of the reset criteria is met. As in the 2D Robot Pushing domain, interactions occur less than $1\%$ of the time and non-trivial actual causes are rare. As a result, we enforce the same upsampling methods to get meaningful performance. At test time, we once again sample states such that non-trivial actual causes occur $\sim 50\%$ of the time.

\section{Connections to Reinforcement Learning}
\label{connections}

This work is motivated by applications of actual causality for Reinforcement Learning (RL) in complex domains. We reference some initial directions in Appendix~\ref{related-work} and describe possible future applications here. Prior work in actual causality has often neglected RL domains since the transition dynamics of RL systems are bipartite with strict temporal ordering under the Markov assumption. Preemption cases are less relevant in such settings, which limits the utility of existing definitions. However, actual causality offers several advantages for learning RL policies, such as enforcing normality properties for exploration, hierarchy, and learning effective world models. 

By knowing what caused a particular outcome at specific states, the RL agent can condition its behavior towards exploring these relations rather than simply over the whole state space. In domains such as 2D pushing, this can be the difference between flailing around in free space and interacting with the block. In higher dimensional domains, this flailing can be even more pronounced. This intuition follows readily into a hierarchy, where skills to control particular causes and effects can be reused to guide more complex policies. For example, learning to push a block can be used as a skill to learn to push around obstacles. Finally, RL agents have been shown to benefit from learning a world model that can be used to \textit{imagine}~\citep{hafner2019dream} rollouts. However, these world models can be brittle because the RL agent does not receive IID samples, but visits states conditioned on its behavior. These models can be made more robust by isolating the invariances that are indicated by actual causes. However, to realize these benefits, actual causes must be identifiable from data in a sample-efficient manner. We believe that FAC and \algoacronym{} are steps towards a formal definition for this process. Related work describing particular applications of these ideas are in Appendix~\ref{related-work}.

In addition to improving the process of learning a policy, actual causes can also be useful in analyzing learned policies. For example, actual causes can be used for policy explanation, by determining what events produced the observed agent behavior in a particular state. This is especially easy if actual causes were used when training the policy. In such applications, there are often actual causes that are undesirable (e.g. \textit{my episode failed because I crashed}), and others that show poor alignment (e.g. \textit{I caused injury to this person}), and inferring actual causes offers a way to check and enforce alignment. However, with the volume of high dimensional data processed by agents, prior actual cause inference methods would struggle to perform this kind of analysis, and there will be a large number of spurious causes if normality is not enforced as in FAC. On the other hand, \algoacronym{} can process high dimensional data and continue to perform well, while also providing accurate and sparse causes.

\section{Algorithm 0}
\label{appendix: algorithm-0}
We include a more in-depth description of Algorithm 0 in this section, which is equivalent to the one described in Section \ref{sec:algorithms}. While this algorithm simply enumerates all possible subset-binary pairs and tests them for the desired properties, its implementation can still be involved. 
To provide additional context to the 1D mover example in Section \ref{sec:algorithms}, 
 Figure~\ref{fig:results:1dmoverhist}b illustrates the distribution of valid partitions of the state for different states. While many valid binary-subset pairs exist, only a tiny subset of pairs minimize cost. These correspond to the few assignments of actual cause which capture intuitions such as normality and necessity.

\setcounter{algorithm}{-1}
\begin{algorithm}
\caption{Exhaustive Search}
\begin{algorithmic}
\label{exhaustive-long}
  \STATE {\bfseries Input:} SCM $M$, range function $\mathcal{R}$, ratios $(\alpha_0, \alpha_1)$ 
  \STATE {\bfseries Output:} Set of min-cost state-binary mappings $h_\mathbf{Y}$ and binary-subset pairs $(\mathcal{B}(\mathbf{Y}), \mathcal{I}(\mathbf{Y}))$
    \STATE \textbf{for} all states $\mathbf{s} \in \mathcal{R}(\mathbf{S})$:
    \STATE $\quad$ \textbf{let} $g_\mathbf{b}(\mathbf{s}) :=$ set of all binaries for which $\mathbf{s}$ satisfies necessity under any witness
    \STATE \textbf{for} all subsets of states $\mathcal{S} \in 2^{\mathcal{R}(\mathbf{S})}$:
    \STATE $\quad$ \textbf{let} $g_s(\mathcal{S}) := \bigcap_{\mathbf{s} \in \mathcal{S}} g_\mathbf{b}(\mathbf{s})$ \hfill $\triangleright$ Set of binaries for which all states in $\mathcal{S}$ satisfy necessity under any witness
    \STATE $\quad$ \textbf{remove} all binaries from $g_s(\mathcal{S})$ that do not satisfy invariance
    \STATE \textbf{let} $\mathcal{L}_{min} = \infty$ and $\mathcal{H} = \emptyset$
    \STATE \textbf{for} all possible partitionings of the state space $\mathcal{P} \in \mathcal{P}(\mathbf{S})$
    \STATE $\quad$ \textbf{for} all binary-partition pairings $(\mathcal{B}(\mathbf{Y}), \mathcal{I}(\mathbf{Y}))$ from $\mathcal{P}$ and $g_s(.)$
    \STATE $\quad \quad$ \textbf{if} all binaries in $\mathcal{B}(\mathbf{Y})$ are unique:
    \STATE $\quad \quad \quad$ \textbf{for} all $(\mathcal{I}_j, \mathbf{b}_j) \in (\mathcal{B}(\mathbf{Y}), \mathcal{I}(\mathbf{Y}))$ \textbf{let} $h_\mathbf{Y}(\mathbf{s}) := \mathbf{b}_j \; \forall \mathbf{s} \in \mathcal{I}_j$
    \hfill $\triangleright$ Get state-binary mapping
    \STATE $\quad \quad \quad$ \textbf{let} $\mathcal{L} := \sum_{j} |\mathbf{b}_j| |\mathcal{I}_j|$
    \STATE $\quad \quad \quad$ \textbf{if} $\mathcal{L} < \mathcal{L}_{min}$ \textbf{set} $\mathcal{L} = \mathcal{L}_{min}$ and $\mathcal{H} = \{ (h_\mathbf{Y}, \mathcal{B}(\mathbf{Y}), \mathcal{I}(\mathbf{Y})) \}$\hfill $\triangleright$ New min cost solution
    \STATE $\quad \quad \quad$ \textbf{else if} $\mathcal{L} = \mathcal{L}_{min}$ set $\mathcal{H} = \mathcal{H} \cup \{ (h_\mathbf{Y}, \mathcal{B}(\mathbf{Y}), \mathcal{I}(\mathbf{Y})) \}$\hfill $\triangleright$ Append to min cost solution set
    \STATE \textbf{return} set of solutions $\mathcal{H}$
\end{algorithmic}
\end{algorithm}
 
\begin{figure}[h]
\centering     
\subfigure[]{
\begin{tabular}[width=0.48\linewidth]{|l|c|c|c||c|c|c|r|}
\toprule
$\mathbf{b}$& $m$& $o$& $m^\prime$& $\mathbf{b}$& $m$& $o$& $m^\prime$\\
\midrule
$11$& $0$& $1$& $0$ & $10$& $2$& $0$& $3$\\
$11$& $1$& $2$& $1$ & $10$& $2$& $1$& $3$\\
$10$& $0$& $0$& $1$ & $10$& $2$& $2$& $3$\\
$10$& $0$& $2$& $1$ & $10$& $3$& $0$& $4$\\
$10$& $1$& $0$& $2$ & $10$& $3$& $1$& $4$\\
$10$& $1$& $1$& $2$ & $10$& $3$& $2$& $4$ \\
\bottomrule
\end{tabular}
}
\subfigure[]{\includegraphics[width=0.48\linewidth,valign=c]{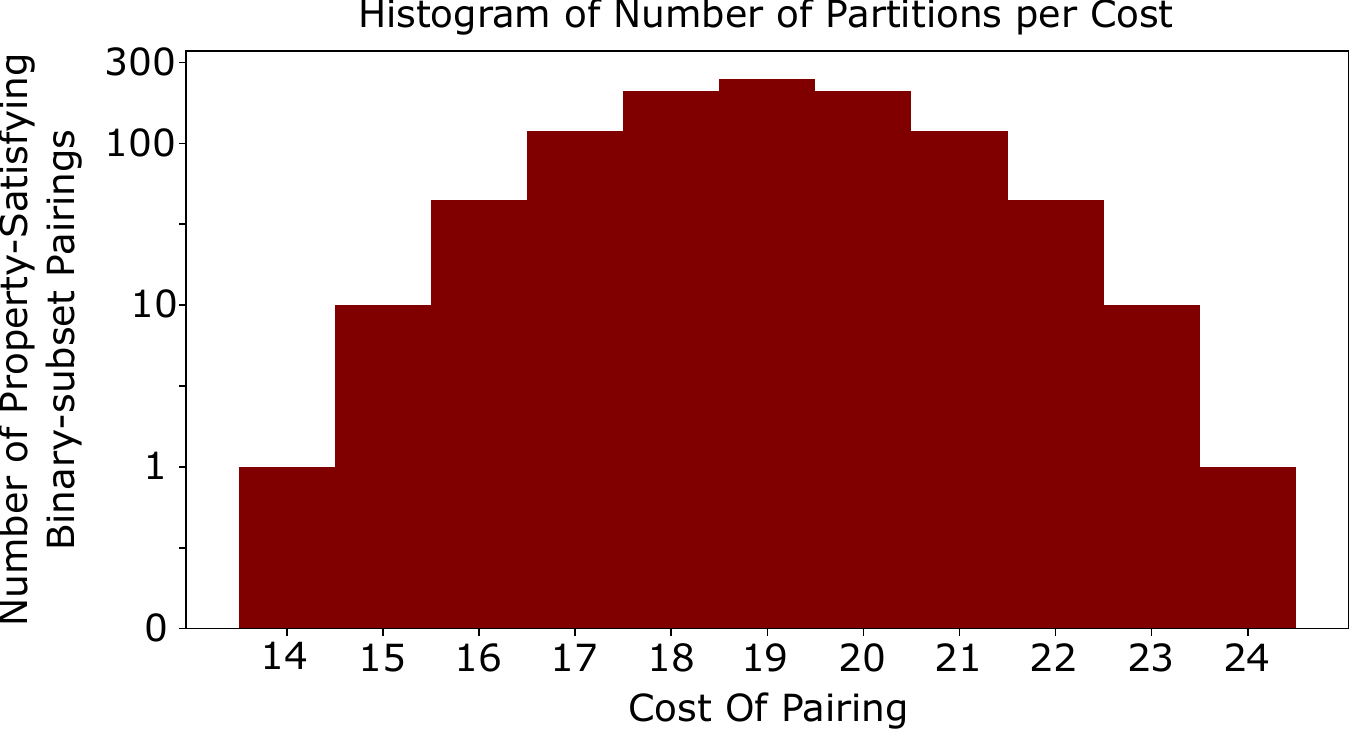}}
\caption{Results of Algorithm \ref{exhaustive-search} in 1D mover environment. Abbreviations: $\mathbf{b}$: binary [mover, obstacle], $m$: mover, $o$: obstacle, $m^\prime$: next mover position. 1D mover dynamics: $m^\prime = m + 1$ if $o \neq m + 1$, otherwise $m^\prime = m$. \textbf{(a)} Table of binary vector, state and outcome for the mover environment. The obstacle is only identified as a cause when it blocks the mover. \textbf{(b)}: Histogram of costs and number of valid binary-subset pairs.
}
\vskip -0.5cm
\label{fig:results:1dmoverhist}
\end{figure}
 
\section{Results of Algorithm \ref{exhaustive-search} on AC Examples}
\label{appendix-exhaustive}

In this section, we compare the results of Algorithm \ref{exhaustive-search} with existing verdicts of actual causation in the literature. In prior work on actual causality, proposed definitions are usually compared by assessing their verdicts on a set of examples from the philosophy literature, for which there are commonly accepted answers to serve as ground truth. The examples we consider are:

\begin{enumerate}
    \item Forest Fire (Example 7.1.4, \citet{halpern2016actual})
    \item Gang Execution (Section 4, \citet{rosenberg2018review})
    \item Halt and Charge (Example 2.3.7, \citet{halpern2016actual})
    \item Switching Railroad Tracks (Example 2.3.6, \citet{halpern2016actual})
    \item Binary AND ($C = A \land B$)
    \item Binary OR ($C = A \lor B$)
    \item Binary XOR ($C = A \oplus B$)
    \item Rock Throwing (Example 2.3.3, \citet{halpern2016actual})
\end{enumerate}

Unlike existing methods, we evaluate not only the actual causes of a particular state and outcome but also the actual causes for every combination of states and outcomes. This is by construction, since verifying that the functional actual cause definition (Definition \ref{fac-defn}) is satisfied requires examining the full state space and constructing a binary-subset pair. Each table corresponds to a minimal binary-subset pair, and each row indicates the inferred actual cause for a particular state and outcome. To compare against existing definitions, we consider the observed state that corresponds to what happened in the example from the literature and look at its corresponding row in the tables to see if the agreed-upon answer matches with the binary vector returned by Algorithm \ref{exhaustive-search}. 

Note that because Algorithm 0 returns \textit{all} minimal cost subset-binary pairings, there may be multiple tables for a single example. This ambiguity appears to be inherent to the actual cause problem: If there are multiple ways of arriving at a dependency, then any one way could be an actual cause, though the combination of all of them would not be a minimal actual cause. This property is not unique to FAC---prior definitions of actual cause can also infer multiple minimal actual causes of a particular event. The implications of this ambiguity are particularly pronounced in examples like Rock Throwing, and we can see this reflected in the number of tables returned.

\subsection{Forest Fire}

The binary variables $April$, $May$, and $June$ indicate whether or not there was rain in April, electrical storms in May, and electrical storms in June. The outcome $Fire = 1$ indicates a forest fire in May, and $Fire = 2$ indicates a forest fire in June. All tables indicate that when there was rain in April as well as electrical storms in May and June, the storm in May was not an actual cause. However, the rain in April was not an actual cause on its own, and the storms in June are also included in the actual cause.

\begin{multicols}{2}
\begin{tabular}{c|cccc}
\toprule
Binary& April& May& June& Fire\\
\midrule
$011$& $0$& $0$& $0$& $0$\\
$011$& $1$& $0$& $0$& $0$\\
$011$& $0$& $0$& $1$& $2$\\
$110$& $0$& $1$& $0$& $1$\\
$110$& $0$& $1$& $1$& $1$\\
$101$& $1$& $1$& $0$& $0$\\
$101$& $1$& $0$& $1$& $2$\\
$101$& $1$& $1$& $1$& $2$\\
\bottomrule
\end{tabular}

\begin{tabular}{c|cccc}
\toprule
Binary& April& May& June& Fire\\
\midrule
$011$& $0$& $0$& $0$& $0$\\
$011$& $0$& $0$& $1$& $2$\\
$110$& $0$& $1$& $0$& $1$\\
$110$& $0$& $1$& $1$& $1$\\
$101$& $1$& $0$& $0$& $0$\\
$101$& $1$& $1$& $0$& $0$\\
$101$& $1$& $0$& $1$& $2$\\
$101$& $1$& $1$& $1$& $2$\\
\bottomrule
\end{tabular}

\begin{tabular}{c|cccc}
\toprule
Binary& April& May& June& Fire\\
\midrule
$110$& $0$& $1$& $0$& $1$\\
$110$& $0$& $1$& $1$& $1$\\
$011$& $0$& $0$& $0$& $0$\\
$011$& $1$& $0$& $0$& $0$\\
$011$& $0$& $0$& $1$& $2$\\
$011$& $1$& $0$& $1$& $2$\\
$101$& $1$& $1$& $0$& $0$\\
$101$& $1$& $1$& $1$& $2$\\
\bottomrule
\end{tabular}

\begin{tabular}{c|cccc}
\toprule
Binary& April& May& June& Fire\\
\midrule
$110$& $0$& $1$& $0$& $1$\\
$110$& $0$& $1$& $1$& $1$\\
$011$& $0$& $0$& $0$& $0$\\
$011$& $0$& $0$& $1$& $2$\\
$011$& $1$& $0$& $1$& $2$\\
$101$& $1$& $0$& $0$& $0$\\
$101$& $1$& $1$& $0$& $0$\\
$101$& $1$& $1$& $1$& $2$\\
\bottomrule
\end{tabular}

\end{multicols}

\subsection{Gang Execution}

The binary variables $Gang$ and $Leader$ indicate whether or not the gang member or the gang leader respectively shot the victim, and the outcome variable $Death$ indicates whether the victim died. In all cases, the gang leader is declared as the actual cause, which is in agreement with the conclusions of \citet{rosenberg2018review}.

\begin{tabular}{c|ccc}
\toprule
Binary& Gang& Leader& Death\\
\midrule
$01$& $0$& $0$& $0$\\
$01$& $1$& $1$& $1$\\
\bottomrule
\end{tabular}

\subsection{Halt and Charge}

In this example, orders from higher-ranking officers could trump those of lower-ranking officers. The $Major$ can order the corporal to halt or charge (these are indicated by values 0 and 1 respectively), or choose not to make an order (indicated by a value of 2). If the $Major$ makes no order, the corporal will defer to the order of the $Sergeant$, where 1 indicates an order to charge and 0 indicates an order to halt. The outcome $Corporal = 1$ indicates that the corporal charged. 

The results show that any time the major gives the order to halt or charge, the major is the sole actual cause since its order trumps the sergeant. However, when the major chooses to abstain and the corporal follows the order of the sergeant, the actions of the major and the sergeant are jointly responsible for the outcome.

\begin{tabular}{c|ccc}
\toprule
Binary& Major& Sergeant& Corporal\\
\midrule
$10$& $0$& $1$& $0$\\
$10$& $1$& $0$& $1$\\
$10$& $0$& $0$& $0$\\
$10$& $1$& $1$& $1$\\
$11$& $2$& $0$& $0$\\
$11$& $2$& $1$& $1$\\
\bottomrule
\end{tabular}

\subsection{Binary And}

The outcome is zero when either A or B is zero. When only one of them is zero, the variable that is equal to zero is declared as the actual cause. When both of them are zero, they are jointly considered the actual cause, since they both have to change for the outcome to change to 1. 

The outcome is equal to 1 only when both A and B are equal to 1. In this case, we see that there are two minimal binary-subset pairs; one declares A as the actual cause and the other declares B as the actual cause. This reflects the ambiguity over which of the two is responsible when the outcome is determined by both jointly. Prior definitions of actual cause would also declare either of them to be actual causes by putting the other in the witness set.

\begin{multicols}{2}

\begin{tabular}{c|ccc}
\toprule
Binary& A& B& C\\
\midrule
$11$& $0$& $0$& $0$\\
$10$& $0$& $1$& $0$\\
$01$& $1$& $0$& $0$\\
$01$& $1$& $1$& $1$\\
\bottomrule
\end{tabular}

\begin{tabular}{c|ccc}
\toprule
Binary& A& B& C\\
\midrule
$11$& $0$& $0$& $0$\\
$01$& $1$& $0$& $0$\\
$10$& $0$& $1$& $0$\\
$10$& $1$& $1$& $1$\\
\bottomrule
\end{tabular}
\end{multicols}

\subsection{Binary Or}

The outcome is equal to zero only when both A and B are equal to zero. Similar to Binary AND, we have two minimal binary-subset pairs; one declares A as the actual cause and the other declares B as the actual cause. 

\begin{multicols}{2}

\begin{tabular}{c|ccc}
\toprule
Binary& A& B& C\\
\midrule
$01$& $0$& $0$& $0$\\
$01$& $0$& $1$& $1$\\
$10$& $1$& $0$& $1$\\
$11$& $1$& $1$& $1$\\
\bottomrule
\end{tabular}

\begin{tabular}{c|ccc}
\toprule
Binary& A& B& C\\
\midrule
$01$& $0$& $1$& $1$\\
$10$& $0$& $0$& $0$\\
$10$& $1$& $0$& $1$\\
$11$& $1$& $1$& $1$\\
\bottomrule
\end{tabular}
\end{multicols}

\subsection{Binary XOR}

The outcome is 1 when only one of A or B is equal to 1, and the other is zero. When both A and B have the same value, the outcome is zero. In all cases, both A and B are individually minimal actual causes of the outcome, and similar to Binary AND and OR we have two minimal binary-subset pairs such that either A or B is the actual cause.

\begin{multicols}{2}
\begin{tabular}{c|ccc}
\toprule
Binary& A& B& C\\
\midrule
$01$& $0$& $1$& $1$\\
$01$& $1$& $0$& $1$\\
$10$& $0$& $0$& $0$\\
$10$& $1$& $1$& $0$\\
\bottomrule
\end{tabular}

\begin{tabular}{c|ccc}
\toprule
Binary& A& B& C\\
\midrule
$10$& $0$& $1$& $1$\\
$10$& $1$& $0$& $1$\\
$01$& $0$& $0$& $0$\\
$01$& $1$& $1$& $0$\\
\bottomrule
\end{tabular}
\end{multicols}

\subsection{Rock Throwing}
The variables in the model are Suzy Hits (SH), Suzy Throws (ST), Billy Hits (BH), Billy Throws (BT), and Bottle Shatters (BoS). The results show that whenever Suzy throws, her throw is always the sole actual cause whether or not Billy throws. However, the equations in the original example are such that whenever Suzy throws her rock, it will always hit the bottle (i.e. SH = ST). As a result, there is an ambiguity between $SH = 1$ and $ST = 1$ as candidate actual causes. Our method returns all minimal-binary subset pairs, which encompasses both of these valid answers. Similarly, when Billy is the only one who throws and his rock hits the bottle, both $BH = 1$ and $BT = 1$ are candidate actual causes.

However, when neither of them throws and the bottle does not shatter, any of $ST = 0, SH = 0, BT = 0$ and $BH = 0$ could be actual causes, since any one of them throwing a rock at the bottle would have caused it to shatter. Since there are more ambiguities and valid answers in this example, the number of minimal binary-subset pairs increases, and this is reflected in the number of tables returned below. 

\begin{multicols}{2}
\begin{tabular}{c|ccccc}
\toprule
Binary& SH& ST& BH& BT& BoS\\
\midrule
$0010$& $0$& $0$& $0$& $0$& $0$\\
$0010$& $0$& $0$& $1$& $1$& $1$\\
$0100$& $1$& $1$& $0$& $0$& $1$\\
$0100$& $1$& $1$& $0$& $1$& $1$\\
\bottomrule
\end{tabular}

\begin{tabular}{c|ccccc}
\toprule
Binary& SH& ST& BH& BT& BoS\\
\midrule
$0010$& $0$& $0$& $0$& $0$& $0$\\
$0010$& $0$& $0$& $1$& $1$& $1$\\
$1000$& $1$& $1$& $0$& $0$& $1$\\
$1000$& $1$& $1$& $0$& $1$& $1$\\
\bottomrule
\end{tabular}

\begin{tabular}{c|ccccc}
\toprule
Binary& SH& ST& BH& BT& BoS\\
\midrule
$0001$& $0$& $0$& $0$& $0$& $0$\\
$0001$& $0$& $0$& $1$& $1$& $1$\\
$0100$& $1$& $1$& $0$& $0$& $1$\\
$0100$& $1$& $1$& $0$& $1$& $1$\\
\bottomrule
\end{tabular}

\begin{tabular}{c|ccccc}
\toprule
Binary& SH& ST& BH& BT& BoS\\
\midrule
$0001$& $0$& $0$& $0$& $0$& $0$\\
$0001$& $0$& $0$& $1$& $1$& $1$\\
$1000$& $1$& $1$& $0$& $0$& $1$\\
$1000$& $1$& $1$& $0$& $1$& $1$\\
\bottomrule
\end{tabular}

\begin{tabular}{c|ccccc}
\toprule
Binary& SH& ST& BH& BT& BoS\\
\midrule
$0010$& $0$& $0$& $1$& $1$& $1$\\
$0100$& $0$& $0$& $0$& $0$& $0$\\
$0100$& $1$& $1$& $0$& $0$& $1$\\
$0100$& $1$& $1$& $0$& $1$& $1$\\
\bottomrule
\end{tabular}

\begin{tabular}{c|ccccc}
\toprule
Binary& SH& ST& BH& BT& BoS\\
\midrule
$0010$& $0$& $0$& $1$& $1$& $1$\\
$1000$& $0$& $0$& $0$& $0$& $0$\\
$1000$& $1$& $1$& $0$& $0$& $1$\\
$1000$& $1$& $1$& $0$& $1$& $1$\\
\bottomrule
\end{tabular}

\begin{tabular}{c|ccccc}
\toprule
Binary& SH& ST& BH& BT& BoS\\
\midrule
$0001$& $0$& $0$& $1$& $1$& $1$\\
$0100$& $0$& $0$& $0$& $0$& $0$\\
$0100$& $1$& $1$& $0$& $0$& $1$\\
$0100$& $1$& $1$& $0$& $1$& $1$\\
\bottomrule
\end{tabular}

\begin{tabular}{c|ccccc}
\toprule
Binary& SH& ST& BH& BT& BoS\\
\midrule
$0001$& $0$& $0$& $1$& $1$& $1$\\
$1000$& $0$& $0$& $0$& $0$& $0$\\
$1000$& $1$& $1$& $0$& $0$& $1$\\
$1000$& $1$& $1$& $0$& $1$& $1$\\
\bottomrule
\end{tabular}

\begin{tabular}{c|ccccc}
\toprule
Binary& SH& ST& BH& BT& BoS\\
\midrule
$0100$& $0$& $0$& $0$& $0$& $0$\\
$0010$& $0$& $0$& $1$& $1$& $1$\\
$1000$& $1$& $1$& $0$& $0$& $1$\\
$1000$& $1$& $1$& $0$& $1$& $1$\\
\bottomrule
\end{tabular}

\begin{tabular}{c|ccccc}
\toprule
Binary& SH& ST& BH& BT& BoS\\
\midrule
$0100$& $0$& $0$& $0$& $0$& $0$\\
$0001$& $0$& $0$& $1$& $1$& $1$\\
$1000$& $1$& $1$& $0$& $0$& $1$\\
$1000$& $1$& $1$& $0$& $1$& $1$\\
\bottomrule
\end{tabular}

\begin{tabular}{c|ccccc}
\toprule
Binary& SH& ST& BH& BT& BoS\\
\midrule
$1000$& $0$& $0$& $0$& $0$& $0$\\
$0010$& $0$& $0$& $1$& $1$& $1$\\
$0100$& $1$& $1$& $0$& $0$& $1$\\
$0100$& $1$& $1$& $0$& $1$& $1$\\
\bottomrule
\end{tabular}

\begin{tabular}{c|ccccc}
\toprule
Binary& SH& ST& BH& BT& BoS\\
\midrule
$1000$& $0$& $0$& $0$& $0$& $0$\\
$0001$& $0$& $0$& $1$& $1$& $1$\\
$0100$& $1$& $1$& $0$& $0$& $1$\\
$0100$& $1$& $1$& $0$& $1$& $1$\\
\bottomrule
\end{tabular}

\begin{tabular}{c|ccccc}
\toprule
Binary& SH& ST& BH& BT& BoS\\
\midrule
$0010$& $0$& $0$& $0$& $0$& $0$\\
$0001$& $0$& $0$& $1$& $1$& $1$\\
$0100$& $1$& $1$& $0$& $0$& $1$\\
$0100$& $1$& $1$& $0$& $1$& $1$\\
\bottomrule
\end{tabular}

\begin{tabular}{c|ccccc}
\toprule
Binary& SH& ST& BH& BT& BoS\\
\midrule
$0010$& $0$& $0$& $0$& $0$& $0$\\
$0001$& $0$& $0$& $1$& $1$& $1$\\
$1000$& $1$& $1$& $0$& $0$& $1$\\
$1000$& $1$& $1$& $0$& $1$& $1$\\
\bottomrule
\end{tabular}

\begin{tabular}{c|ccccc}
\toprule
Binary& SH& ST& BH& BT& BoS\\
\midrule
$0001$& $0$& $0$& $0$& $0$& $0$\\
$0010$& $0$& $0$& $1$& $1$& $1$\\
$0100$& $1$& $1$& $0$& $0$& $1$\\
$0100$& $1$& $1$& $0$& $1$& $1$\\
\bottomrule
\end{tabular}

\begin{tabular}{c|ccccc}
\toprule
Binary& SH& ST& BH& BT& BoS\\
\midrule
$0001$& $0$& $0$& $0$& $0$& $0$\\
$0010$& $0$& $0$& $1$& $1$& $1$\\
$1000$& $1$& $1$& $0$& $0$& $1$\\
$1000$& $1$& $1$& $0$& $1$& $1$\\
\bottomrule
\end{tabular}

\begin{tabular}{c|ccccc}
\toprule
Binary& SH& ST& BH& BT& BoS\\
\midrule
$0100$& $0$& $0$& $0$& $0$& $0$\\
$0100$& $1$& $1$& $0$& $0$& $1$\\
$0010$& $0$& $0$& $1$& $1$& $1$\\
$1000$& $1$& $1$& $0$& $1$& $1$\\
\bottomrule
\end{tabular}

\begin{tabular}{c|ccccc}
\toprule
Binary& SH& ST& BH& BT& BoS\\
\midrule
$0100$& $0$& $0$& $0$& $0$& $0$\\
$0100$& $1$& $1$& $0$& $0$& $1$\\
$0001$& $0$& $0$& $1$& $1$& $1$\\
$1000$& $1$& $1$& $0$& $1$& $1$\\
\bottomrule
\end{tabular}

\begin{tabular}{c|ccccc}
\toprule
Binary& SH& ST& BH& BT& BoS\\
\midrule
$1000$& $0$& $0$& $0$& $0$& $0$\\
$1000$& $1$& $1$& $0$& $0$& $1$\\
$0010$& $0$& $0$& $1$& $1$& $1$\\
$0100$& $1$& $1$& $0$& $1$& $1$\\
\bottomrule
\end{tabular}

\begin{tabular}{c|ccccc}
\toprule
Binary& SH& ST& BH& BT& BoS\\
\midrule
$1000$& $0$& $0$& $0$& $0$& $0$\\
$1000$& $1$& $1$& $0$& $0$& $1$\\
$0001$& $0$& $0$& $1$& $1$& $1$\\
$0100$& $1$& $1$& $0$& $1$& $1$\\
\bottomrule
\end{tabular}

\begin{tabular}{c|ccccc}
\toprule
Binary& SH& ST& BH& BT& BoS\\
\midrule
$0100$& $1$& $1$& $0$& $0$& $1$\\
$0010$& $0$& $0$& $0$& $0$& $0$\\
$0010$& $0$& $0$& $1$& $1$& $1$\\
$1000$& $1$& $1$& $0$& $1$& $1$\\
\bottomrule
\end{tabular}

\begin{tabular}{c|ccccc}
\toprule
Binary& SH& ST& BH& BT& BoS\\
\midrule
$0100$& $1$& $1$& $0$& $0$& $1$\\
$0001$& $0$& $0$& $0$& $0$& $0$\\
$0001$& $0$& $0$& $1$& $1$& $1$\\
$1000$& $1$& $1$& $0$& $1$& $1$\\
\bottomrule
\end{tabular}

\begin{tabular}{c|ccccc}
\toprule
Binary& SH& ST& BH& BT& BoS\\
\midrule
$1000$& $1$& $1$& $0$& $0$& $1$\\
$0010$& $0$& $0$& $0$& $0$& $0$\\
$0010$& $0$& $0$& $1$& $1$& $1$\\
$0100$& $1$& $1$& $0$& $1$& $1$\\
\bottomrule
\end{tabular}

\begin{tabular}{c|ccccc}
\toprule
Binary& SH& ST& BH& BT& BoS\\
\midrule
$1000$& $1$& $1$& $0$& $0$& $1$\\
$0001$& $0$& $0$& $0$& $0$& $0$\\
$0001$& $0$& $0$& $1$& $1$& $1$\\
$0100$& $1$& $1$& $0$& $1$& $1$\\
\bottomrule
\end{tabular}

\begin{tabular}{c|ccccc}
\toprule
Binary& SH& ST& BH& BT& BoS\\
\midrule
$0100$& $1$& $1$& $0$& $0$& $1$\\
$0010$& $0$& $0$& $1$& $1$& $1$\\
$1000$& $0$& $0$& $0$& $0$& $0$\\
$1000$& $1$& $1$& $0$& $1$& $1$\\
\bottomrule
\end{tabular}

\begin{tabular}{c|ccccc}
\toprule
Binary& SH& ST& BH& BT& BoS\\
\midrule
$0100$& $1$& $1$& $0$& $0$& $1$\\
$0001$& $0$& $0$& $1$& $1$& $1$\\
$1000$& $0$& $0$& $0$& $0$& $0$\\
$1000$& $1$& $1$& $0$& $1$& $1$\\
\bottomrule
\end{tabular}

\begin{tabular}{c|ccccc}
\toprule
Binary& SH& ST& BH& BT& BoS\\
\midrule
$1000$& $1$& $1$& $0$& $0$& $1$\\
$0010$& $0$& $0$& $1$& $1$& $1$\\
$0100$& $0$& $0$& $0$& $0$& $0$\\
$0100$& $1$& $1$& $0$& $1$& $1$\\
\bottomrule
\end{tabular}

\begin{tabular}{c|ccccc}
\toprule
Binary& SH& ST& BH& BT& BoS\\
\midrule
$1000$& $1$& $1$& $0$& $0$& $1$\\
$0001$& $0$& $0$& $1$& $1$& $1$\\
$0100$& $0$& $0$& $0$& $0$& $0$\\
$0100$& $1$& $1$& $0$& $1$& $1$\\
\bottomrule
\end{tabular}

\end{multicols}

\subsection{Switching Railroad Tracks}

The model in the original example asserts that the train will reach its destination if there is no breakdown ($T \neq 2)$, but there is no affordance in the model for breakdowns to actually occur. For this reason, we add another binary variable $Break$ to the model and set $Track$ to 2 if $Break = 1$. 

The results show that whenever there is a breakdown, both $Break = 1$ and $Track = 2$ are returned as valid actual causes since as shown above, we always have $Track = 2$ when $Break = 1$. When there is no breakdown, the train reaches the station regardless of which track it arrived on, so as before we have the assignments to $Break$ and $Track$ as valid actual causes. In all tables below, the switch is never an actual cause.

\begin{multicols}{2}
\begin{tabular}{c|cccc}
\toprule
Binary& Break& Switch& Track& Arrive\\
\midrule
$100$& $0$& $0$& $0$& $1$\\
$100$& $0$& $1$& $1$& $1$\\
$100$& $1$& $0$& $2$& $0$\\
$100$& $1$& $1$& $2$& $0$\\
\bottomrule
\end{tabular}

\begin{tabular}{c|cccc}
\toprule
Binary& Break& Switch& Track& Arrive\\
\midrule
$001$& $0$& $0$& $0$& $1$\\
$001$& $0$& $1$& $1$& $1$\\
$001$& $1$& $0$& $2$& $0$\\
$001$& $1$& $1$& $2$& $0$\\
\bottomrule
\end{tabular}

\begin{tabular}{c|cccc}
\toprule
Binary& Break& Switch& Track& Arrive\\
\midrule
$100$& $0$& $0$& $0$& $1$\\
$001$& $0$& $1$& $1$& $1$\\
$001$& $1$& $0$& $2$& $0$\\
$001$& $1$& $1$& $2$& $0$\\
\bottomrule
\end{tabular}

\begin{tabular}{c|cccc}
\toprule
Binary& Break& Switch& Track& Arrive\\
\midrule
$001$& $0$& $0$& $0$& $1$\\
$100$& $0$& $1$& $1$& $1$\\
$100$& $1$& $0$& $2$& $0$\\
$100$& $1$& $1$& $2$& $0$\\
\bottomrule
\end{tabular}

\begin{tabular}{c|cccc}
\toprule
Binary& Break& Switch& Track& Arrive\\
\midrule
$100$& $0$& $0$& $0$& $1$\\
$100$& $0$& $1$& $1$& $1$\\
$001$& $1$& $0$& $2$& $0$\\
$001$& $1$& $1$& $2$& $0$\\
\bottomrule
\end{tabular}

\begin{tabular}{c|cccc}
\toprule
Binary& Break& Switch& Track& Arrive\\
\midrule
$001$& $0$& $0$& $0$& $1$\\
$001$& $0$& $1$& $1$& $1$\\
$100$& $1$& $0$& $2$& $0$\\
$100$& $1$& $1$& $2$& $0$\\
\bottomrule
\end{tabular}

\begin{tabular}{c|cccc}
\toprule
Binary& Break& Switch& Track& Arrive\\
\midrule
$100$& $0$& $1$& $1$& $1$\\
$001$& $0$& $0$& $0$& $1$\\
$001$& $1$& $0$& $2$& $0$\\
$001$& $1$& $1$& $2$& $0$\\
\bottomrule
\end{tabular}

\begin{tabular}{c|cccc}
\toprule
Binary& Break& Switch& Track& Arrive\\
\midrule
$001$& $0$& $1$& $1$& $1$\\
$100$& $0$& $0$& $0$& $1$\\
$100$& $1$& $0$& $2$& $0$\\
$100$& $1$& $1$& $2$& $0$\\
\bottomrule
\end{tabular}

\begin{tabular}{c|cccc}
\toprule
Binary& Break& Switch& Track& Arrive\\
\midrule
$100$& $0$& $0$& $0$& $1$\\
$100$& $0$& $1$& $1$& $1$\\
$100$& $1$& $0$& $2$& $0$\\
$001$& $1$& $1$& $2$& $0$\\
\bottomrule
\end{tabular}

\begin{tabular}{c|cccc}
\toprule
Binary& Break& Switch& Track& Arrive\\
\midrule
$001$& $0$& $0$& $0$& $1$\\
$001$& $0$& $1$& $1$& $1$\\
$001$& $1$& $0$& $2$& $0$\\
$100$& $1$& $1$& $2$& $0$\\
\bottomrule
\end{tabular}

\begin{tabular}{c|cccc}
\toprule
Binary& Break& Switch& Track& Arrive\\
\midrule
$100$& $0$& $1$& $1$& $1$\\
$100$& $1$& $0$& $2$& $0$\\
$001$& $0$& $0$& $0$& $1$\\
$001$& $1$& $1$& $2$& $0$\\
\bottomrule
\end{tabular}

\begin{tabular}{c|cccc}
\toprule
Binary& Break& Switch& Track& Arrive\\
\midrule
$001$& $0$& $1$& $1$& $1$\\
$001$& $1$& $0$& $2$& $0$\\
$100$& $0$& $0$& $0$& $1$\\
$100$& $1$& $1$& $2$& $0$\\
\bottomrule
\end{tabular}

\begin{tabular}{c|cccc}
\toprule
Binary& Break& Switch& Track& Arrive\\
\midrule
$100$& $0$& $0$& $0$& $1$\\
$100$& $1$& $0$& $2$& $0$\\
$001$& $0$& $1$& $1$& $1$\\
$001$& $1$& $1$& $2$& $0$\\
\bottomrule
\end{tabular}

\begin{tabular}{c|cccc}
\toprule
Binary& Break& Switch& Track& Arrive\\
\midrule
$001$& $0$& $0$& $0$& $1$\\
$001$& $1$& $0$& $2$& $0$\\
$100$& $0$& $1$& $1$& $1$\\
$100$& $1$& $1$& $2$& $0$\\
\bottomrule
\end{tabular}

\begin{tabular}{c|cccc}
\toprule
Binary& Break& Switch& Track& Arrive\\
\midrule
$100$& $1$& $0$& $2$& $0$\\
$001$& $0$& $0$& $0$& $1$\\
$001$& $0$& $1$& $1$& $1$\\
$001$& $1$& $1$& $2$& $0$\\
\bottomrule
\end{tabular}

\begin{tabular}{c|cccc}
\toprule
Binary& Break& Switch& Track& Arrive\\
\midrule
$001$& $1$& $0$& $2$& $0$\\
$100$& $0$& $0$& $0$& $1$\\
$100$& $0$& $1$& $1$& $1$\\
$100$& $1$& $1$& $2$& $0$\\
\bottomrule
\end{tabular}

\end{multicols}
\def\thesection{\Alph{section}}

\end{document}